\newcommand{\ignore}[1]{}
\newtheorem{theorem}{Theorem}[section]
\newtheorem{lemma}[theorem]{Lemma}
\theoremstyle{definition}
\newtheorem{definition}{Definition}[section]
\DeclareMathOperator*{\argmax}{argmax} 
\definecolor{lightBlueTable}{RGB}{214,230,244} 
\definecolor{lightPeachTable}{RGB}{248,210,200}
\definecolor{lightYellowTable}{RGB}{255,240,193}
\definecolor{lightGreenTable}{RGB}{220,237,208}
\definecolor{lightGrayTable}{RGB}{239,239,239}
\title{\ignore{Conditional Bias Scan: }Auditing Predictive Models for Intersectional Biases}
\author{%
 Kate S. Boxer \\
 Machine Learning for Good Laboratory\\
 New York University\\
 \texttt{kb145@nyu.edu} \\
 \And
 Edward McFowland III \\
 Harvard Business School\\
 \texttt{emcfowland@hbs.edu} \\
 \And
 Daniel B. Neill\\
 Machine Learning for Good Laboratory\\
 New York University\\
 \texttt{daniel.neill@nyu.edu} \\
}
\begin{document}

\maketitle

\begin{abstract}
Predictive models that satisfy group fairness criteria in aggregate for members of a protected class, but do not guarantee subgroup fairness, could produce biased predictions for individuals at the intersection of two or more protected classes. To address this risk, we propose Conditional Bias Scan (CBS), a flexible auditing framework for detecting intersectional biases in classification models. CBS identifies the subgroup for which there is the most significant bias against the protected class, as compared to the equivalent subgroup in the non-protected class, and can incorporate multiple commonly used fairness definitions for both probabilistic and binarized predictions.
We show that this methodology can detect previously unidentified intersectional and contextual biases in the COMPAS pre-trial risk assessment tool and has higher bias detection power compared to similar methods that audit for subgroup fairness.
\end{abstract}




\maketitle

\section{Introduction}\label{sec:intro}

Predictive models are increasingly used to assist in high-stakes decisions with significant impacts on individuals' lives and livelihoods. However, recent studies have revealed numerous models whose predictions contain biases, in the form of group fairness violations, against disadvantaged and marginalized groups~\citep{angwin_larson_kirchner_2016, obermeyer2019dissecting}. \ignore{Various mathematical formulations have been proposed to measure group fairness, that is, whether the outputs of a predictive model are biased against a particular ``protected class'', typically defined by a sensitive attribute such as race or gender. In this paper, we address several limitations of typical group fairness frameworks. 
First, commonly used definitions of fairness based on \emph{separation} (e.g., error rate balance) and \emph{sufficiency} (e.g., calibration) are incompatible, meaning that predictions which satisfy one definition will fail to satisfy the other, in contexts where the prevalence of the outcome of interest is not equal across different subpopulations \cite{chouldechova2017fair, feller2016computer,kleinberg2016inherent}. Thus we develop a flexible auditing framework which can incorporate multiple fairness definitions, including those based both on sufficiency and separation.}
\ignore{Another limitation of typical group fairness frameworks is that, when auditing a predictive model, only focusing on bias at an aggregate level for a given group might result in undetected biases that adversely affect a specific subset of that group. This shortcoming of group fairness definitions has been documented in commercial gender classification systems~\citep{buolamwini2018gender}. Within the social sciences, this phenomenon is addressed by using analytic frameworks that recognize \emph{intersectionality}, meaning that an individual's experience of systematic disadvantage is dependent on a combination of their race, gender, socioeconomic status, age, and other factors~\citep{runyan2018intersectionality}. While it is possible to define a specific intersectional subgroup and then audit a classifier for biases impacting that subgroup, this approach does not scale to the combinatorial number of subgroups. As a result, intersectional biases are challenging to detect and often go unaddressed.}
When auditing a predictive model for bias, typical group fairness definitions~\citep{mitchell2021algorithmic} rely on univariate measurements of the difference between the distributions of predictions or outcomes for individuals in a ``protected class'', typically defined by a sensitive attribute such as race or gender, as compared to those in the non-protected class.  Since these approaches only detect biases for a predetermined subpopulation at an aggregate level, e.g., a bias against Black individuals, they may fail to detect biases that adversely affect a subset of individuals in a protected class, e.g., Black females.\ignore{ This shortcoming of group fairness definitions has been documented in commercial gender classification systems~\citep{buolamwini2018gender}.}\ignore{Within the social sciences, this phenomenon is addressed by using analytic frameworks that recognize \emph{intersectionality}, meaning that an individual's experience of systematic disadvantage is dependent on a combination of their race, gender, socioeconomic status, age, and other factors~\citep{runyan2018intersectionality}.} While it is possible to define a specific multidimensional subgroup and then audit a classifier for biases impacting that subgroup, this approach does not scale to the combinatorial number of subgroups. Therefore, group fairness measurements cannot reliably detect if there are \emph{any} subgroups within a given population that are adversely impacted by predictive biases, and as a result, intersectional biases in model predictions often go unaddressed.

\ignore{
OUTLINE OF PAPER FOR REWRITE
1. ML models are used to more settings
2. These models have been studied and shown to contain biases
3. biases are commonly defined in terms of group fairness defintions
4. here is a limitation of groupfairness}

In this paper, we present a novel methodology for bias detection called Conditional Bias Scan (CBS). Given 
a classifier's probabilistic \emph{predictions} \ignore{(e.g., defendants' predicted risk of reoffending) }or binarized \emph{recommendations} based on those predictions\ignore{ (e.g., defendants whose predicted risk exceeds a threshold are classified as ``high risk'')}, CBS discovers systematic biases impacting any \emph{subgroups} of a predefined subpopulation of interest (the \emph{protected class}). 
More precisely, CBS aims to discover subgroups of the protected class for whom the classifier's predictions or recommendations systematically deviate from the corresponding subgroup of individuals who are not a part of the protected class. Subgroups are defined by a non-empty subset of attribute values for each observed attribute, excluding the \emph{sensitive attribute} which determines whether or not individuals belong to the protected class. \ignore{CBS can flexibly audit classifiers for bias in both predictions and recommendations, applying multiple commonly used definitions of group fairness. }The detected subgroups can represent both \emph{intersectional} biases, defined by membership in two or more protected classes,\ignore{ e.g., biases against Black males in criminal justice risk assessment,} as well as \emph{contextual} biases that may only be present for certain decision situations\ignore{, like assessing risk for a first-time offender}~\citep{runyan2018intersectionality}. \ignore{We hope that this work will aid practitioners in detecting hard-to-observe biases that might greatly impact individuals, and address the lack of fairness auditing tools to detect intersectional and contextual subgroup biases in classifier outputs. }

The contributions of our research include:
\begin{itemize} [noitemsep,topsep=0pt,parsep=0pt,partopsep=0pt,leftmargin=5mm]
 \item The first methodological framework that can reliably detect intersectional and contextual biases and flexibly accommodates multiple common group-fairness definitions.
 \item A novel and computationally efficient pattern detection algorithm to audit classifiers for fairness violations in the exponentially many subgroups of a prespecified protected class.
 \ignore{\item CBS incorporates novel pattern detection methodology to efficiently audit classifiers for fairness violations in the exponentially many subgroups defined across multiple data dimensions. }
 \item An interpretable tool for detecting fairness violations that demonstrates significantly improved bias detection accuracy, as compared to other tools used to audit classifiers for subgroup fairness.
 \item A robust simulation study and real-world case study that compare results across various group-fairness metrics, demonstrating substantial differences between separation and sufficiency metrics.
\end{itemize}
\ignore{
In Section~\ref{related_work} we review related work. We explain the CBS framework in Section~\ref{methods}. We present simulations designed to evaluate our method, and demonstrate that it has better bias detection power than similar methodologies that could be used to audit for subgroup biases, in Section~\ref{experiments_validation}. In Section~\ref{compas_experiment_section} we show that CBS can detect previously undiscovered intersectional and contextual biases in COMPAS data. We close with discussion and concluding remarks in Section~\ref{discussion_and_limitations}. 
}
\section{Related Work} \label{related_work}

\ignore{
OUTLINE OF paragraph below for editing section for editing
1. uses the same MD subset scan for scan many subgroups
2. singular fairness definition -- calibration of predictions
3. Does not allow for specifics of protected class 
3.1 therefore CBS compares protected to non-protected 
4. here is a concrete example of what CBS can do that BS cannot
5. Here is all the technical functionality that had to be added to do this.
}

The original Bias Scan proposed by Zhang and Neill~\cite{zhang2016identifying} uses a multidimensional subset scan to search exponentially many subgroups of data, identifying the subgroup with the most significantly miscalibrated probabilistic predictions as compared to the observed outcomes.  Bias Scan lacks the functionality of traditional group fairness techniques to define a protected class and to determine whether those individuals are impacted by biased predictions, and is thus limited to asking, ``Which subgroup has the most miscalibrated predictions?"  In contrast, given a protected class $A$, CBS can reliably identify biases impacting $A$ or any subgroup of $A$.   
CBS searches for subgroups within the protected class with the most significant deviation in their predictions and observed outcomes as compared to the predictions and observed outcomes for the corresponding subgroup of the non-protected class (for example, a racial bias against Black females as compared to non-Black females).  Since Bias Scan solely focuses on the deviation between the predictions and observed outcomes within a subgroup, it would not be able to detect such a bias unless the subgroup was also biased as compared to the population as a whole. Furthermore, CBS generalizes to both separation-based and sufficiency-based group fairness metrics, and to both probabilistic and binarized predictions.  To enable these new functionalities, CBS deviates from Bias Scan's methodology in substantial ways including new preprocessing techniques, new and altered hypotheses, and resulting score functions.

\ignore{

The original Bias Scan proposed by Zhang and Neill~\cite{zhang2016identifying}, similar to CBS, uses a multidimensional subset scan algorithm to identify subgroups with fairness violations. However, Bias Scan is limited to a single fairness definition (calibration of predictions) while CBS can audit either (probabilistic) predictions or (binarized) recommendations for a variety of fairness definitions. Moreover, Bias Scan does not allow specification of a protected class, and thus detects any subgroups for which a classifier exhibits miscalibration bias. Critically, CBS searches for subgroups within a 
protected class for which the classifier is biased as compared to the corresponding subgroup in the non-protected class. This allows CBS to answer questions like ``Are the model's predictions biased against protected class $X$ or any subgroup within $X$?'' whereas Bias Scan is limited to asking ``Which group has the most biased predictions?"\ignore{ For example, we might ask CBS, ``is COMPAS biased against females'' and find a bias against females under the age of 25, while Bias Scan might return some subgroup that has nothing to do with gender bias, e.g., ``individuals with 6+ prior offenses''. }
Moreover, Bias Scan cannot detect \emph{conditional} biases against a subgroup (e.g., a bias against Black females as compared to non-Black females) unless they correspond to an overall bias against that subgroup as compared to the entire population. To enable this new functionality, CBS substantially deviates from Bias Scan's methodology in various, non-trivial ways including new preprocessing techniques, new and altered hypotheses, and resulting score functions.}

GerryFair and Multiaccuracy Boost~\citep{kearns2018preventing, kim2019multiaccuracy} are two methods that use an auditor to iteratively detect subgroups while training or correcting a classifier to guarantee subgroup fairness. 
GerryFair's auditor relies on linear regressions trained to predict differences between the predictions and the observed global error rate of a dataset. Multiaccuracy Boost iteratively forms subgroups by evaluating rows with predictions above and below a threshold to determine which predictions to adjust. CBS's methodology for forming subgroups is more complex because it does not assume a linear relationship between covariates and the difference between the predictions and baseline error rate. Unlike CBS, these methods provide limited fairness definitions to use for auditing, and do not return interpretable subgroups that are defined by discrete attribute values of the covariates, but rather identify all rows that have a fairness violation on a given iteration. Since both methods incorporate the predictions in forming subgroups and enable auditing, they are directly comparable to CBS. In Section~\ref{experiments_validation}, we show that CBS has substantially higher bias detection accuracy than GerryFair and Multiaccuracy Boost.

\ignore{
The original Bias Scan proposed by Zhang and Neill~\cite{zhang2016identifying}, similar to Conditional Bias Scan, uses a multidimensional subset scan algorithm to identify intersectional subgroups with fairness violations. However, Bias Scan is limited to a single fairness definition (calibration of predictions) while CBS can audit either (probabilistic) predictions or (binarized) recommendations for a variety of fairness definitions. Moreover, Bias Scan does not allow specification of a protected class, and thus detects any subgroups for which the classifier exhibits miscalibration bias, while CBS searches for subgroups within a 
protected class for which the classifier is biased as compared to the corresponding subgroup in the non-protected class. In application, this allows Conditional Bias Scan to answer questions like ``Are this model's predictions biased against protected class X or any subgroup within X?'' whereas Bias Scan is limited to asking ``Which group has the most biased predictions?" For example, we might ask CBS, ``is COMPAS biased against females'' and find that COMPAS is biased against females under the age of 25 (as compared to males under the age of 25), while Bias Scan might return some subgroup that has nothing to do with gender bias, e.g., ``individuals with 6+ prior offenses''. To enable this new functionality, CBS substantially deviates from Bias Scan in various ways including new preprocessing techniques, new and altered hypotheses, and resulting score functions for subgroups.

Additionally, there are methods in the literature that relate to CBS, but at 
Kusner et al. designed a causal model that uses the observed attributes and outcomes, and latent variables, to build a classifier that guarantees ``counterfactual fairness'': a fairness metric that assumes individuals distribution of predictions and recommendations are the same across protected attribute values~\citep{kusner2017counterfactual}. Similarly, CBS assumes that predictions are equivalent across sensitive attribute values. However, \cite{kusner2017counterfactual}, unlike CBS, provides no straightforward mechanism for auditing the eventual model for subgroup fairness. Therefore, while the methods share some theoretical framing, they are incomparable. Another incomparable tool from the literature is FairVis, a visual analytics tool for exploring intersectional biases~\citep{cabrera2019fairvis}. FairVis performs k-means on a dataset's covariates to form clusters that are then evaluated with various fairness definitions. Therefore, its subgroups are simply individuals with similar feature values whereas CBS uses the detected biases in the predictions and recommendations to form and identify subgroups. This difference causes the subgroups and resulting detected biases from FairVis and Conditional Bias Scan to be incomparable.

The two methods we use as benchmarks in this paper are GerryFair and Multiaccuracy Boost~\citep{kearns2018preventing, kim2019multiaccuracy}. Both approaches use an auditor to iteratively query while training a classifier that guarantees subgroup fairness and accuracy. Both methods' auditors are similar in that they rely on training linear models that predict some heuristic that incorporates residual errors. These methods differ from ours because they provide limited fairness definitions to use when auditing for fairness and accuracy and they do not return subgroups that are defined by discrete attribute-values of the covariates but rather return all the rows that have a subgroup violation for a given iteration. Since both methods incorporate the recommendations and predictions in forming subgroups and provide auditors, they are comparable to Conditional Bias Scan and in Section~\ref{experiments_validation} we show that Conditional Bias Scan has a substantially higher bias detection accuracy than these methods.}

\section{Methods} \label{methods}
\ignore{
\begin{figure}
\centering
\includesvg[scale=.37]{figures_images/cbs_overview_figure/CBS_framework 02_05_2023_v8.svg}
\caption{Overview of the CBS framework. The input data set $D = (A, X, Y, P, P_{bin})$. Event variable $I$ and conditional variable $C$ are each defined as one of $Y$, $P$, or $P_{bin}$, depending on the fairness definition. Expectations $\hat I = E[I \:|\: C, X; H_0]$ are estimated for each individual in the protected class ($A=1$), under the null hypothesis that $I \perp A \:|\: (C, X)$, and thus that expectations can be estimated from the non-protected class ($A=0$). Finally, CBS scans for subgroup $S^\ast$ of the protected class where $\hat I$ deviates most from $I$.}
\label{methods-overview_of_framework-framework_chart}
\end{figure}}

\ignore{The CBS framework audits predictions or recommendations--from a machine learning classifier, a human decision maker, or a combination of both--for systematic bias against subgroups of a protected class.} CBS begins by defining the dataset $D = (A,X,Y,P,P_{bin}) = \{(A_i, X_i, Y_i, P_i, P_{i,bin})\}_{i=1}^n$, for $n$ individuals indexed as $i=1..n$. $A_i$ is a binary variable representing whether individual $i$ belongs to the protected class. \ignore{ For example, if we are auditing COMPAS for racial biases, $A_i$ could equal 1 for Black defendants and 0 for other racial groups.}$X_i = (X_i^1 \ldots X_i^m)$ are other covariates for individual $i$\ignore{, such as age group, gender, and charge degree for the COMPAS example}. We assume that all covariates are discrete-valued; continuous covariates can be discretized as a preprocessing step. The protected class attribute $A$ and the sensitive attribute of the data from which $A$ was derived\ignore{ (e.g., individuals' race for our COMPAS example)} are not included in $X$. $Y_i$ is individual $i$'s observed binary outcome\ignore{ (e.g., were they rearrested within a 2-year follow-up period)}, $P_i \in [0,1]$ is the classifier's probabilistic prediction of individual $i$'s outcome\ignore{ (e.g., predicted probability of rearrest)}, and $P_{i,bin} \in \{0,1\}$ is the binary recommendation corresponding to $P_i$\ignore{ (e.g., predict ``high risk'' if $P_i$ exceeds some threshold)}.
\ignore{ outline
1. introduce figure x
2. introduce sections}
Given these data, CBS searches for subgroups of the protected class, defined by a non-empty subset of values for each covariate $X^1 \ldots X^m$, for whom some \emph{group fairness definition} (contained in Table~\ref{methods-overview_of_scans_scan_table}) is violated. Each fairness definition can be viewed as a conditional independence relationship between an individual's membership in the protected class $A_i$ and their value of an \emph{event variable} $I_i$, conditioned on their covariates $X_i$ and their value of a \emph{conditional variable} $C_i$.  We define the null hypothesis, $H_0$, that $I \perp A \:|\: (C, X)$, and use CBS to search for subgroups with statistically significant violations of this conditional independence relationship, correctly adjusting for multiple hypothesis testing, allowing us to reject $H_0$ for the alternative hypothesis $H_1$ that  $I \not\perp A \:|\: (C, X)$ . 

The CBS framework has four sequential steps. (1) Given a fairness definition, CBS chooses $I \in \{Y, P, P_{bin}\}$ and $C \in \{Y, P, P_{bin}\}$. In Section~\ref{methods-overview_of_scan_types}, we describe how 
different group fairness criteria in the literature map to particular choices of event variable $I$ and conditional variable $C$. (2) Next, we estimate the expected value of $I_i$ for each individual in the protected class under the null hypothesis $H_0$ that $I$ and $A$ are conditionally independent\ignore{, and thus, that this expectation can be estimated from individuals in the non-protected class}. We denote these expectations as $\hat I_i$. Our procedure to estimate $\hat{I}$, which builds on the econometric literature for estimation of heterogeneous treatment effects, is described in Section~\ref{methods-generating-hat-p-section}. (3) Then we use a novel multidimensional subset scan algorithm to search for subgroups $S$ where for $i \in S$, $I_i$ deviates systematically from its expectation $\hat I_i$ in the direction of interest. \ignore{To do so, we define a log-likelihood ratio scan statistic $F(S)$ and use fast subset scanning approaches to efficiently identify the subgroup which maximizes $F(S)$. }This scan step to \emph{detect $S^{*}$} is described in detail in Section~\ref{methods_cbs_score_functions_altss_section}. (4) The final step to \emph{evaluate statistical significance} of the detected subgroup $S^\ast$ (Section~\ref{methods_cbs_score_functions_altss_section}) involves using permutation testing, described in Appendix~\ref{permutation_testing}, to adjust for multiple hypothesis testing and determine if $S^{*}$'s deviation between protected class and non-protected class is statistically significant.

\subsection{Define $(I,C)$: \textit{Overview of Scan Types}} \label{methods-overview_of_scan_types}
Many of the group fairness criteria proposed in the fairness literature fall into two categories of statistical fairness called sufficiency and separation. \ignore{Sufficiency refers to the notion that individuals with the same recommendation or probability of a given outcome conditional on their attributes have the same probabilities of a positive outcome independent of their membership in a protected class. Conversely, separation refers to the notion that individuals with the same outcome conditional of their attributes should have the same probability of receiving a given recommendation or probability independent of their membership in a protected class \cite{mitchell2021algorithmic}.} An intuitive way to distinguish between these concepts is that \emph{sufficiency} is focused on equivalency in the rate of an outcome (for comparable individuals with the same prediction or recommendation) regardless of protected class membership ($Y \perp A \:|\: P, X$), whereas \emph{separation} is focused on equivalency of the expected prediction or recommendation (for comparable individuals with the same outcome) regardless of protected class membership ($P \perp A \:|\: Y, X$). 
In our CBS framework, the choice between separation and sufficiency determines whether outcome $Y$ is the event variable of interest $I$ or the conditional variable $C$, where bias exists if $\mathbb{E}[I\:|\:C,X, A = 1] \ne \mathbb{E}[I\:|\:C,X, A=0]$. The combination of fairness metric (sufficiency or separation) and prediction type (continuous prediction or binary recommendation) produces four general classes of fairness scans: separation for predictions ($I = P$, $C = Y$), separation for recommendations ($I = P_{bin}$, $C = Y$), sufficiency for predictions ($I = Y$, $C = P$), and sufficiency for recommendations ($I = Y$, $C=P_{bin}$).
\begin{table}
\scalebox{.68}{
\begin{tabular}{ |p{1.5cm}|p{1cm}|p{4cm}|p{4cm}|p{4cm}|p{3.5cm}|}
\hline
 \multicolumn{2}{|l|}{}& \cellcolor{lightBlueTable} Predictions ($P \in [0,1]$) & \multicolumn{3}{l|}{ \cellcolor{lightPeachTable} Recommendations ($P_{bin} \in \{0,1\}$)} \\\cline{4-6} \hhline{~~~---}
 \multicolumn{2}{|l|}{} & \cellcolor{lightBlueTable} & \cellcolor{lightPeachTable} $P_{bin} =1$ & \cellcolor{lightPeachTable} $P_{bin} =0$ & \cellcolor{lightPeachTable} $P_{bin}$ \\
 \hline
 
 \cellcolor{lightYellowTable} & \cellcolor{lightYellowTable} $Y=1$ & \cellcolor{lightGrayTable} $\mathbb{E}[P\:|\:Y=1,X] \bot A$ & $\mbox{Pr}(P_{bin}=1\:|\:Y=1,X) \bot A$ & $\mbox{Pr}(P_{bin}=0\:|\:Y=1,X) \bot A$ & \\
 \cellcolor{lightYellowTable} & \cellcolor{lightYellowTable} & \cellcolor{lightGrayTable} \emph{Balance for Positive Class} & \emph{True Positive Rate} & \emph{False Negative Rate} & \\\cline{2-5} \hhline{~----~}
 
 \cellcolor{lightYellowTable} & \cellcolor{lightYellowTable} $Y=0$ & \cellcolor{lightGrayTable} $\mathbb{E}[P\:|\:Y=0,X] \bot A$ & $\mbox{Pr}(P_{bin}=1\:|\:Y=0,X) \bot A$ & $\mbox{Pr}(P_{bin}=0\:|\:Y=0,X) \bot A$ & \\
 \cellcolor{lightYellowTable} & \cellcolor{lightYellowTable} & \cellcolor{lightGrayTable} \emph{Balance for Negative Class} & \emph{False Positive Rate} & \emph{True Negative Rate} & \\\cline{2-5} \hhline{~----~}
 
 \cellcolor{lightYellowTable} \multirow{-5}{*}{Separation}& \cellcolor{lightYellowTable} $Y$ & \cellcolor{lightGrayTable} $\mathbb{E}[P\:|\:Y,X] \bot A$ & $\mbox{Pr}(P_{bin}=1\:|\:Y,X) \bot A$ & $\mbox{Pr}(P_{bin}=0\:|\:Y,X) \bot A$ & \\
 
\hhline{------}
 
\cellcolor{lightGreenTable} & \cellcolor{lightGreenTable} $Y=1$ & $\mbox{Pr}(Y=1\:|\:P,X) \bot A$ & \cellcolor{lightGrayTable} $\mbox{Pr}(Y=1\:|\:P_{bin} =1, X) \bot A$ & \cellcolor{lightGrayTable} $\mbox{Pr}(Y=1\:|\:P_{bin} =0, X) \bot A$ & \cellcolor{lightGrayTable} $\mbox{Pr}(Y=1\:|\:P_{bin}, X) \bot A$ \\ 
 \cellcolor{lightGreenTable} & \cellcolor{lightGreenTable} & & \cellcolor{lightGrayTable} \emph{Positive Predictive Value} & \cellcolor{lightGrayTable} \emph{False Omission Rate} & \cellcolor{lightGrayTable} \\ \cline{2-6} \hhline{~-----}

 \cellcolor{lightGreenTable} & \cellcolor{lightGreenTable} $Y=0$ & $\mbox{Pr}(Y=0\:|\:P,X) \bot A$ & \cellcolor{lightGrayTable} $\mbox{Pr}(Y=0\:|\:P_{bin}=1,X) \bot A$ & \cellcolor{lightGrayTable} $\mbox{Pr}(Y=0\:|\:P_{bin}=0,X) \bot A$ & \cellcolor{lightGrayTable} $\mbox{Pr}(Y=0\:|\:P_{bin},X) \bot A$ \\
 \cellcolor{lightGreenTable} \multirow{-4}{*}{Sufficiency} & \cellcolor{lightGreenTable} & & \cellcolor{lightGrayTable} \emph{False Discovery Rate} & \cellcolor{lightGrayTable} \emph{Negative Predictive Value} & \cellcolor{lightGrayTable} \\
 \hline

\end{tabular}
}
\caption{Table of all scan types for CBS for different group fairness definitions. The notation $\bot$ refers to conditional independence from membership in the protected class ($A$). For example, for the False Discovery Rate scan, $\mbox{Pr}(Y=0\:|\:P_{bin}=1,X) \bot A$ is shorthand for $\mbox{Pr}(Y=0\:|\:P_{bin}=1,X, A=1) = \mbox{Pr}(Y=0\:|\:P_{bin}=1,X, A=0) $.\ignore{ Scans that correspond to well-known fairness definitions (e.g., false positive error rate balance) are noted in the table. Shading is used to distinguish between the four main categories of scans (separation for predictions, separation for recommendations, sufficiency for predictions, and sufficiency for recommendations).}}
\label{methods-overview_of_scans_scan_table}
\end{table}
Depending on the particular bias of interest, we can also perform ``value-conditional'' scans by restricting the value of the conditional variable. For example, we scan for subgroups with increased false positive rate (FPR) by restricting the data to individuals with $Y = 0$ and performing a separation scan for recommendations\ignore{ and detecting subgroups with increased proportion of $P_{bin} = 1$}. All of the scan options for CBS are displayed in Table~\ref{methods-overview_of_scans_scan_table}. As we discuss below, each scan displayed in Table~\ref{methods-overview_of_scans_scan_table} can be used to detect bias in either direction, e.g., searching for subgroups with either increased or decreased FPR.

\ignore{

\subsection{Overview of Framework} \label{methods-overview_of_framework}

Figure~\ref{methods-overview_of_framework-framework_chart} contains an overview of the Conditional Bias Scan framework. One input for the scan is a data set where each row represents an individual including their attributes, $x_i$, outcome, $y_i$, and prediction, $p_i$, or recommendation, $p_{i,bin}$, for audit. In conjunction with the data set, the scan also requires a set of parameters to be specified prior to execution. Throughout this section we will explore the different parameter fields, which include the group-level fairness criteria used to evaluate subgroups when auditing for bias, the group membership value to search for biases within its subgroups' predictions or recommendations, along with other information needed.

The first process in this framework is to produce adjusted probabilities for all individuals of the protected class ($\forall i\:|\:r_{i_k}=1$) which we will refer to as $\hat{p_i}$ when referencing an individual $i$'s adjusted probability and $\hat{P}$ when referring to the set of all adjusted probabilities for all individuals in the protected class ($\hat{P} = \{ \hat{p_i} \forall i \:|\: r_{i_k} = 1 \}$). This adjusted probability intuitively represents an estimated probability of what an individual's probability of an event would be if they were not a part of the protected class ($r_{i_k} = 0$). Depending on the fairness metric specified in the parameters $\hat{p_i}$, could be an estimate of the probability of individual $i$ being a false positive conditional of their attributes or individual $i$ having a positive outcome conditional of their recommendation and attributes. In Section~\ref{methods-generating-hat-p-section}, we will introduce one method for generating $\hat{P}$. 

The Conditional Bias Scan step requires as inputs the data for all individuals who are part of the protected class ($ D \forall i \:|\: r_{i_k} = 1$) and their adjusted probabilities ($\hat{P}$), as well as other parameters, to detect a subset of the protected class that has the most anomalous deviations between $\hat{P}$ and the desired event for audit, where what is considered an anomalous deviation is defined by the type of scan and event for audit. For a scan that is auditing recommendations for predictive parity between members of a protected class and members who are not a part of a protected class, an anomalous deviation would be detected between the outcomes,$Y$, and the adjusted probabilities, $\hat{P}$ that represent estimated probabilities that the individuals of the protected class would have a positive outcome conditional on their attributes and recommendations if they were a part of the non-protected class. For all scans there are two directions of anomalous deviations possible. For a predictive parity scan these are (1) $\hat{P}$ for a given subgroup are systematically large in relation to $Y$ which we refer to as a bias in the negative direction or an over-estimation bias (2) $\hat{P}$ for a given subgroup are systematically small in relation to $Y$ which we refer to as a bias in the positive direction or an under-estimation bias.
}

\subsection{Generate Expectations $\hat{I}$ of the Event Variable} \label{methods-generating-hat-p-section}
Once we have defined the event variable $I$ and conditional variable $C$, we wish to detect fairness violations by assessing whether there exist subgroups of the protected class where 
$\mathbb{E}[I \:|\: C, X, A = 1]$ differs systematically from $\mathbb{E}[I \:|\: C, X, A = 0]$.  For each individual $i$ in the protected class, $I_i \:|\: C_i, X_i, A_i = 1$ is observed but $I_i \:|\: C_i, X_i, A_i = 0$ is unobserved. Thus we must calculate an estimate $\hat{I}_i = \mathbb{E}_{H_0}[I_i \:|\: C_i, X_i, A_i = 1]$, under the null hypothesis, $H_0$: $(I \perp A \:|\: C, X$), and compare $\hat{I}_i$ to the observed $I_i$.  To calculate $\hat{I}$ we use the following method from the econometric literature on heterogeneous treatment effects, which controls for non-random selection into the protected class $A$ based on observed covariates $X$: (1) Learn a probabilistic model for estimating $\mbox{Pr}(A=1\:|\:X)$, and use it to produce propensity scores, $p_j^{A}$, for each individual $j$ in the non-protected class; (2) For each individual $j$ in the non-protected class, use the observed $\mathbb{E}[I_j \:|\: C_j, X_j, A_j = 0]$ weighted by the odds of the propensity score for individual $j$, $\frac{p_j^{A}}{1-p_j^{A}}$, to learn a probabilistic model for $\mathbb{E}_{H_0}[I \:|\: C, X, A = 1]$; (3) For each individual $i$ in the protected class, use the model of $\mathbb{E}_{H_0}[I \:|\: C, X, A = 1]$ to calculate $\hat{I}_i = \mathbb{E}_{H_0}[I_i =1 \:|\: C_i, X_i, A_i = 1]$. For a more detailed description of this method, modifications for value-conditional scans, and a discussion of its limitations, please reference Appendix~\ref{generating_i_hat_appendix}.
\ignore{ using techniques
Under the null hypothesis $H_0$ 
We can use the observed distribution $E[I_i \:|\: C_i, X_i, A_i = 0]$ for individuals in the non-protected class while controlling for non-random selection into the protected class $A$ based on observed covariates $X$ using techniques from  the econometric literature on (heterogeneous) treatment effects to estimate the probability of $\hat{I}$ which is the . Since the null hypothesis of that there are not fairness violations. To estimate $E_{H_0}[I_i \:|\: C_i, X_i, A_i = 1]$ for each individual $i$ in the protected class, where $H_0$ is the expectation of a positive event outcome $I$ for individual $i$ they was not a part of the protected class under the hypothesis that $I \not A\:|\:C,X$, we use the observed distribution $E[I_i \:|\: C_i, X_i, A_i = 0]$ for individuals in the non-protected class while controlling for non-random selection into the protected class $A$ based on observed covariates $X$ using techniques from  the econometric literature on (heterogeneous) treatment effects.} \ignore{    An estimate of $E[I \:|\: C, X, A = 0]$  for individuals in the protected class} \ignore{ This requires us to compare two potential outcomes for each individual $i$ in the protected class, only one of which ($A_i = 0$ or $A_i = 1$) is observed.} \ignore{ However, under the null hypothesis $H_0$ that our fairness definition is satisfied, we have $I \perp A \:|\: (C, X)$, and thus the expectation of $I_i$ for an individual in the protected class ($A_i = 1$) can be estimated from individuals in the non-protected class: $\hat I_i = E_{H_0}[I_i \:|\: C_i, X_i, A_i = 1] = E[I_i \:|\: C_i, X_i, A_i = 0]$. Moreover, we control for non-random selection into the protected class $A$ .}\ignore{based on observed covariates $X$, following the econometric literature on (heterogeneous) treatment effect estimation.}
\ignore{To do so, we perform the following steps:}

\ignore{For all individuals in the protected class we have (1) $I$, which is the observed set of events of interest that occurred for the individuals in the protected class and (2) $\hat{P} = \mbox{Pr}(I=1\:|\:C ,X, A=0)$, which is estimated. Intuitively, when evaluating if there is bias for individuals in the protected class (or a subgroup of the protected class), one could compare $I$ and $\hat{P}$ because $I$ is what occurred for the individuals in the protected class and $\hat{P}$ is what is estimated to occurred if those individuals were not a part of the protected class. For example, if the event of interest, $I$, indicates which individuals are false positives in the protected class (which is the case for the false positive rate scan), is systematically higher than $\hat{P}$, which is the estimated probabilities of being a false positive for individuals in the non-protected class with identical features ($X$) to the individuals in the protected class, this would indicate that there is a bias where membership in the protected class results in a higher likelihood of being a false positive.} 

\ignore{
is Given the particular fairness metric, prediction type, and bias of interest the goal becomes determining One of the inputs of the scan is a set of adjusted probabilities of a positive event ($I=1$) for all the individuals of the protected class. A singular $\hat{p_i}$ for individual $i$,who is a member of the protected class ($r_{i_k}=1$), is an estimated probability of a positive event for individual $i$ if they were a part of the non-protected class ($r_{i_k}=0$) conditional on individual $i$'s attributes $x_i'$. To examine further, let's focus on the sufficiency scans for predictions. Each individual in the protected class has an associated $p_i$, which is provided, and $\hat{p_i}$ which we produced. If $p_i$ and $\hat{p_i}$ are equal for all individuals this represents the condition where the probability of a positive event is independent of class membership in the protected class, therefore we would detect no bias in the provided probabilities, $P$. If $\hat{p_i}$ is systematically greater than $p_i$ for all individuals of the protected class or a specific subgroup of individuals of the protected class, conditional bias scan will detect over-estimation bias, meaning the predictions underestimate the probability of a positive event compared to the adjusted probabilities. If $\hat{p_i}$ is systematically less than $p_i$ for all individuals of the protected class or a specific subgroup of individuals of the protected class, conditional bias scan will detect under-estimation bias, meaning the predictions $p_i$ overestimate the probability of a positive event compared to the adjusted probabilities. An intuitive way to think about the purpose of $\hat{P}$ is that it is an estimation of the probabilities of a positive event for all individuals in the protected class if they were not a part of the protected class. 
}

\ignore{
For the expectation and probability formulas for the scans in Table~\ref{methods-overview_of_scans_scan_table} we will use an abstracted form for this section. The abstracted form is defined as follows: we will use the abstract variable $I$ to refer to the event variable ($I =Y,P, \text{ or } P_{i,bin}$) for the scans, and the abstract variable $C$ to refer to the conditioning variable ($C =Y,P, \text{ or } P_{i,bin}$). These abstract variables allow us to refer to all the probability and expectation formulas for the sufficiency scans and separation scans for recommendations as $\mbox{Pr}(I\:|\:X', C)$ and $E[I\:|\:X', C]$, respectively. For singular instances, the abstracted formulas are $\mbox{Pr}(I_i\:|\:x_i', C_i)$ and $E[I_i\:|\:x_i', C_i]$.}

\ignore{One of the inputs of the scan is a set of adjusted probabilities of a positive event ($I=1$) for all the individuals of the protected class. A singular $\hat{p_i}$ for individual $i$,who is a member of the protected class ($r_{i_k}=1$), is an estimated probability of a positive event for individual $i$ if they were a part of the non-protected class ($r_{i_k}=0$) conditional on individual $i$'s attributes $x_i'$. To examine further, let's focus on the sufficiency scans for predictions. Each individual in the protected class has an associated $p_i$, which is provided, and $\hat{p_i}$ which we produced. If $p_i$ and $\hat{p_i}$ are equal for all individuals this represents the condition where the probability of a positive event is independent of class membership in the protected class, therefore we would detect no bias in the provided probabilities, $P$. If $\hat{p_i}$ is systematically greater than $p_i$ for all individuals of the protected class or a specific subgroup of individuals of the protected class, conditional bias scan will detect over-estimation bias, meaning the predictions underestimate the probability of a positive event compared to the adjusted probabilities. If $\hat{p_i}$ is systematically less than $p_i$ for all individuals of the protected class or a specific subgroup of individuals of the protected class, conditional bias scan will detect under-estimation bias, meaning the predictions $p_i$ overestimate the probability of a positive event compared to the adjusted probabilities. An intuitive way to think about the purpose of $\hat{P}$ is that it is an estimation of the probabilities of a positive event for all individuals in the protected class if they were not a part of the protected class.}


\ignore{
\begin{enumerate}
 \item Train a predictive model using all the individuals in the data to estimate the probability $\mbox{Pr}(A_i =1 \:|\: X_i)$.
 \item Use this model to produce the probabilities $p_i^A = \mbox{Pr}(A=1 \:|\: X_i)$ and corresponding propensity score weights $w_i^A = \frac{p_i^A}{1-p_i^A}$ for each individual $i$ in the non-protected class ($A_i=0$). Intuitively, individuals whose attributes $X_i$ are more similar to individuals in the protected class have higher weights $w_i^A$.
 \item If the event variable $I$ is binary (i.e., for all sufficiency scans and separation scan for recommendations), we train a weighted model using only data from individuals of the non-protected class ($A_i=0$) to estimate $E[I \:|\: C, X] = \mbox{Pr}(I=1\:|\:C,X)$. We use the weights $w_i^A$ to weight the data used to train the model during the fitting process. The trained model is used to estimate the expectations $\hat I_i = E[I_i \:|\: C_i, X_i]$ for each individual in the protected class ($A_i =1$) under the null hypothesis $I \perp A \:|\: (C, X)$.
\ignore{Using the weights $W^r$ during the training process is a method for mimicking the process of training a model on the same attribute distribute as the members of the protected class with the event outcomes of the members of the non-protected class.}
 For separation scan for predictions, we have a real-valued rather than binary event variable: the probabilistic predictions $P$. We use a similar but modified process to estimate $E[I\:|\:C, X]$. For each individual $i$ in the non-protected class, we create two training data records with the same covariates $X_i$. For the first copy, we set the event variable to $I_i=1$, and change the weight to $w_{i}^A P_i$. For the second copy, we set the event variable to $I_i=0$, and change the weight to $w_{i}^A (1-P_i)$. We concatenate both copies of the variables, their event variables and modified weights together and use this concatenated data set to train the model that estimates $\mbox{Pr}(I=1\:|\:C, X)$. While we could have instead predicted $P$ by regression, this approach is consistent with other CBS variants and enforces the desired constraint $0 \le \hat I_i \le 1$.
 \end{enumerate}
 }

\ignore{
For value-conditional scans, CBS audits for biases in the subset of data where $C=z$, for $z \in \{0,1\}$. Dataset $D$ is filtered between Steps 2 and 3 to only include individuals where $C=z$. For example, for the value-conditional scan for FPR, we filter the data to only include individuals where $C=0$ (or equivalently, $Y=0$). \ignore{Lastly, for scans where we are auditing on a negative event, such as the true negative rate or false discovery rate scans, the steps are identical except we train our model to estimate $\mbox{Pr}(I=0\:|\:X', C=z)$, and then $\hat{P} = \{p_i = \mbox{Pr}(I_i = 0 \:|\: x_i', C_i) \forall i \:|\: r_{i_k} =1, C_i = z\}$.}
Any predictive model that produces probabilities can be used to estimate $\mbox{Pr}(A =1 \:|\: X)$, and any model that produces probabilities and allows for weighting of instances during training can be used to estimate $E[I \:|\: C, X]$. Here we use logistic regression to predict $A$ and weighted logistic regression to predict $I$. For sufficiency scan for predictions, we transform the conditional variable from probability $P_i$ to the corresponding log-odds before including it in the weighted logistic regression model, since we expect $\log \frac{Y_i}{1-Y_i}$ (the target of the logistic regression) to be approximately $\log \frac{P_i}{1-P_i}$ for well-calibrated classifiers.}
\ignore{Additionally, simulations similar to those described in Section~\ref{experiments_validation} could be used as a cross validation process for finding models and hyperparameters to estimate $\hat{P}$.}

\ignore{The method above is one approach for producing $\hat{P}$ that follows the steps often used in statistical analysis literature to produce inverse propensity scores that are later used to calculate average treatment effects. There have been advances in this field, specifically for addressing issues of robustness of estimates when models are mis-specified. One example of a mis-specified model in our outlined method above would be if some of the $p_{i}^{r}$'s poorly estimate the propensity of protected class membership and therefore some of the $W^r$'s are too high or too low and resultantly the model estimating $\hat{P}$ is poorly fit as well. Some of the research that addresses these issues of mis-specification and robustness include normalization methods for $W^r$ and alternative processes for calculating $\hat{P}$ ~\citep{imbens2004nonparametric,schuler2017targeted}. }

\subsection{Detect the Most Significant Subgroup $S^{*}$ and Evaluate its Statistical Significance} \label{methods_cbs_score_functions_altss_section}

\begin{table}
\scalebox{.70}{

\begin{tabular}{ |p{1.5cm}|p{2.5cm}|p{3.3cm} p{5.3cm}|p{1.5cm}|p{3.5cm}|}
\hline
 \multicolumn{2}{|c|}{Scan Types}& \multicolumn{2}{|c|}{Hypotheses} & Distribution for $F(S)$ & \multicolumn{1}{|c|}{$F(S)$}\\
 \hline
 \cellcolor{lightYellowTable} & \cellcolor{lightBlueTable} & \cellcolor{lightGrayTable} Null Hypothesis & \cellcolor{lightGrayTable} $H_0: \Delta_i \sim N(0,\sigma)$, $\forall i \in D_1$ & \cellcolor{lightGrayTable} & \cellcolor{lightGrayTable} \\ 
 \cellcolor{lightYellowTable} & \cellcolor{lightBlueTable} & \cellcolor{lightGrayTable} Alternative Hypothesis & \cellcolor{lightGrayTable} $H_1(S): \Delta_i \sim N(\mu,\sigma)$ & \cellcolor{lightGrayTable} & \cellcolor{lightGrayTable} \\ 
 
 \cellcolor{lightYellowTable} & \cellcolor{lightBlueTable} & \cellcolor{lightGrayTable} & \cellcolor{lightGrayTable}where $\Delta_i = \log \left(\frac{I_i}{1- I_i}\right) - \log\left(\frac{\hat{I_i}}{1-\hat{I_i}}\right)$& \cellcolor{lightGrayTable}& \cellcolor{lightGrayTable} \\ 
 \cellcolor{lightYellowTable}& \cellcolor{lightBlueTable} & \cellcolor{lightGrayTable} Over-estimation Bias: & \cellcolor{lightGrayTable} $\mu < 0$, $\forall i \in S$, and $\mu =0$, $\forall i \notin S$. & \cellcolor{lightGrayTable} & \cellcolor{lightGrayTable} \\ 
 \cellcolor{lightYellowTable} & \cellcolor{lightBlueTable} \multirow{-5}{*} {Predictions} & \cellcolor{lightGrayTable} Under-estimation Bias: & \cellcolor{lightGrayTable} $\mu > 0$, $\forall i \in S$, and $\mu =0$, $\forall i \notin S$. & \cellcolor{lightGrayTable} \multirow{-5}{*} {Gaussian} & \cellcolor{lightGrayTable} \multirow{-5}{*} { $\max_{\mu} \frac{2\mu\left(\sum_{i \in S}\Delta_i\right)- |S|\mu^{2}}{2\sigma^{2}}$} \\ \hhline{~-----}

 \cellcolor{lightYellowTable} \multirow{-6}{*}{ Separation } & \cellcolor{lightPeachTable} Recommendations & Null Hypothesis & $H_0: odds(I_i) = \frac{\hat{I_i}}{1-\hat{I_i}}$, $\forall i \in D_{1}$ & \multirow{4}{*}{Bernoulli} & \\ \hhline{--~~~~}
 \cellcolor{lightGreenTable} & \cellcolor{lightBlueTable} 
 Predictions & Alternative Hypothesis & $H_1(S): odds(I_i) = q\frac{\hat{I_i}}{1-\hat{I_i}} $ & & $\max_q\sum_{i \in S}(I_i \log(q)$ \\ \hhline{~-~~~~}
 \cellcolor{lightGreenTable} & \cellcolor{lightPeachTable} & Over-estimation Bias: & $q < 1$, $\forall i \in S$, and $q =1$, $\forall i \notin S$. & & $- \log(q\hat{I_i} - \hat{I_i} + 1))$ \\ 
 \cellcolor{lightGreenTable} \multirow{-3}{*}{Sufficiency} & \cellcolor{lightPeachTable} \multirow{-2}{*}{Recommendations} & Under-estimation Bias: & $q > 1$, $\forall i \in S$, and $q =1$, $\forall i \notin S$. & & \\

\hline
\end{tabular}
}

\caption{Null and alternative hypotheses, $H_0$ and $H_1(S)$, and corresponding log-likelihood ratio score functions, $F(S)$, used to measure a subgroup's degree of anomalousness (comparing the event variable $I$ to its expectation $\hat I$ under $H_0$) for all four variants of CBS. Here, over-estimation (under-estimation) bias means that the expectations $\hat I_i$ are larger (smaller) than $I_i$. \ignore{Shading is used to distinguish between Gaussian and Bernoulli scans.}}
\label{methods-score_function_table}
\end{table}

\ignore{
outline for re-write
- purpose of F(S) -- (1) measure which subgroup is more anomalous (2) fulfull certain properties that allow for efficient scanning
- explanation of the score function
--- denominator
--- numerator - explains u and q (ie: free parameter) that is fit with MLE -- 
----- direction
------leads into explanation of different distributions bc of predictions vs binary outcomes
- penalty
}

\ignore{In Section~\ref{methods-generating-hat-p-section}, we describe the process for generating $\hat{I}$, an estimate of $\mathbb{E}[I \:|\: C, X]$, which is the expectation of event variable $I$ for individuals in the protected class under the null hypothesis of conditional independence, $I \perp A \:|\: (C,X)$.} Given the observed event variables $I_i$ and the expectations $\hat{I}_i$ of the event variable under the null hypothesis ($I \perp A \:|\: C,X$) for the protected class, we define a score function measuring \emph{subgroup bias}, $F:S \to \mathbb{R}_{\geq 0}$, that can be efficiently optimized over exponentially many subgroups to identify $S^\ast = \arg\max_S F(S)$. \ignore{that can be used to compare subgroups of different sizes to identify the subgroup with the most anomalous bias,  $S^\ast = \arg\max_S F(S)$that can be efficiently optimized over the exponentially many subgroups of the data, and can be usefully compared across subgroups of different sizes to identify the ``most significantly biased'' subgroup $S^\ast = \arg\max_S F(S)$.} To do so, we follow the literature on spatial and subset scan statistics~\cite{kulldorff1997spatial,neill2012fast} by defining score functions $F(S)$ that take the general form of a log-likelihood ratio (LLR) test statistic, $F(S) = \log\left(\frac{\Pr(D \:|\: H_1(S))}{\Pr(D \:|\: H_0)}\right)$. Here the denominator represents the likelihood of seeing the observed values of event variable $I$ for subgroup $S$ of the protected class under the null hypothesis $H_0$ of no bias\ignore{, where each observed $I_i$ is drawn from a distribution centered at the corresponding expectation $\hat I_i$}. The numerator represents the likelihood of seeing the observed values of $I$ for subgroup $S$ of the protected class under the alternative hypothesis $H_1(S)$, where the $I_i$ values are systematically increased or decreased as compared to $\hat I_i$. For $H_1(S)$ to represent a deviation from $H_0$, $H_1$ contains a free parameter ($q$ or $\mu$) that is determined by maximum likelihood estimation.  Under-estimation bias ($I_i > \hat I_i$) or over-estimation bias ($I_i < \hat I_i$) can be detected using different constraints for $q$ or $\mu$ as shown in Table~\ref{methods-score_function_table}.\ignore{ finding Whether one wants to detect subgroups of the protected class with either $E[I_i] > \hat I_i$ (under-estimation bias) or $E[I_i] < \hat I_i$ (over-estimation bias) Each scan can be run to detect subgroups of the protected class with either $E[I_i] > \hat I_i$ or $E[I_i] < \hat I_i$, depending on the desired direction of bias to be detected, by restricting $q$ or $\mu$ as shown in Table~\ref{methods-score_function_table}} When $I$ is a probabilistic prediction (i.e., for separation scan for predictions), the hypotheses are in the form of a difference of log-odds between $I$ and $\hat{I}$ sampled from a Gaussian distribution. Here the free parameter $\mu$ in $H_1$ represents a mean shift ($\mu \neq 0$) of the Gaussian distribution.  For all other scans, under $H_0$, each observed $I_i$ is assumed to be drawn from a Bernoulli distribution centered at the corresponding expectation $\hat I_i$. Under $H_1$, the free parameter $q$ represents a multiplicative increase or decrease ($q \neq 1$) of the odds of $I$ as compared to $\hat I$.

\ignore{In Table~\ref{methods-score_function_table} there are two forms for these hypotheses that we will use to measure the evidence of bias within a subset.} \ignore{ When the event variable $I_i$ is binary ($Y_i$ for sufficiency scans, or $P_{i,bin}$ for separation scan for recommendations), we use a Bernoulli model, in which $E[I_i] = \mbox{Pr}(I_i = 1) = \hat I_i$ under the null hypothesis, and the
alternative hypothesis represents a multiplicative shift of the odds $\frac{I_i}{1-I_i}$ as compared to $\frac{\hat I_i}{1-\hat I_i}$. When the event variable $I_i$ is real-valued ($P_i$ for separation scan for predictions), we use a Gaussian model, in which the difference between the observed and expected log-odds, $\Delta_i = \log\frac{I_i}{1-I_i} -\log\frac{\hat I_i}{1-\hat I_i}$, is centered at 0 under the null hypothesis, so that $E[I_i] = \hat I_i$, and the alternative hypothesis represents an additive shift in $\Delta_i$. The free parameter corresponding to the amount of shift ($q$ for the Bernoulli scan, $\mu$ for the Gaussian scan) is fit by maximum likelihood estimation (MLE), resulting in the LLR score functions shown in the rightmost column of Table~\ref{methods-score_function_table}. For the Gaussian scan, the MLE estimate of $\mu$ can be found in closed form, $\mu = \frac{\sum_{i \in S} \Delta_i}{|S|}$, while for the Bernoulli scan, we find the MLE estimate of $q$ by binary search, as described in~\cite{zhang2016identifying}.}

\ignore{Each scan can be run to detect subgroups of the protected class with either $E[I_i] > \hat I_i$ or $E[I_i] < \hat I_i$, depending on the desired direction of bias to be detected, by restricting $q$ or $\mu$ as shown in Table~\ref{methods-score_function_table}.} \ignore{ As a concrete example, to detect subgroups of the protected class for whom the predicted probabilities $P_i$ overestimate the true probabilities that $Y_i = 1$, as compared to the corresponding estimates for the non-protected class, we could use either separation scan for predictions or sufficiency scan for predictions. For separation scan, we choose real-valued event variable $I = P$ and binary conditioning variable $C = Y$. We use the Gaussian LLR score function, and restrict $\mu > 0$ to detect subgroups of the protected class where the observed predictions $I$ are higher than expected. For sufficiency scan, we choose binary event variable $I = Y$ and real-valued conditioning variable $C=P$. We use the Bernoulli LLR score function, and restrict $q < 1$ to detect subgroups of the protected class where the proportion of positive outcomes $I$ is lower than predicted.}

As in~\cite{zhang2016identifying}, a penalty term can be added to $F(S)$ equal to a prespecified scalar times the total number of attribute values included in subgroup $S$, summed across all covariates $X^1 \ldots X^m$. Note that there is no penalty for a given attribute if all attribute values are included, since this is equivalent to ignoring the attribute when defining subgroup $S$. This penalty term encourages more interpretable subgroups and reduces overfitting for high-arity attributes\ignore{, which could otherwise result from optimizing the score function over all non-empty subsets of attribute-values}.

\ignore{
As a concrete example, to detect subgroups of the protected class with increased FPR, we would choose binary event variable $I = P_{bin}$ and binary conditioning variable $C = Y$ (i.e., separation scan for recommendations), and perform a value-conditional scan by restricting $C = Y = 0$. We would use the Bernoulli LLR score function, and would restrict $q > 1$ to detect subgroups where the observed FPR for the protected class ($I$) is higher than we would expect given $\hat I$. 
}

\ignore{
On the other hand, to detect subgroups of the protected class with increased False Discovery Rate, or equivalently, reduced Positive Predictive Value, we would 
choose binary event variable $I = Y$ and binary conditioning variable $C = P_{bin}$ (i.e., sufficiency scan for recommendations), and perform a value-conditional scan by restricting $C = P_{bin} = 1$. We would still use the Bernoulli LLR score function, but would restrict $q < 1$ to detect subgroups where the observed PPV for the protected class ($I$) is lower than we would expect given $\hat I$. 
}

\ignore{
One form, used by the separation scans for recommendations and sufficiency scans for either predictions or recommendations, represents the hypotheses as odds of the estimated expectation of the event of interest $I$ for individuals in the protected class under the null hypothesis. In this form, $H_{0}$ assumes the $odds(I_i) = \frac{\hat{I_i}}{1-\hat{I_i}}~\forall i \in D_1$, where $D_{1} =\{(x_i, y_i, p_{i}) \forall i \in D \:|\: A_i=1 \})$ because when $I \perp A \:|\: X,C$ then $E[I_i] = E[hat{I}_i]$. The alternative hypothesis, $H_1(S)$, assumes states that the odds of the estimated expectation of the event of interest for individuals in the $S$ in the protected class, $\hat{I}$, poorly represent the false positive events for the subgroup of individuals in the protected class in $S$, and that there exists some constant $q$ that the odds of the $\hat{I}$ in $S$ in the protected class could be scaled by that would better represent the odds of being a false positives. and $H_{1}(S)$ ($E[I_i] \ne \hat{I}_i~\forall i \in S \subseteq D_1$), which is equivalent to comparing $I \perp A \:|\: X,C$ to $I\not\perp A \:|\: X,C$.



}

\ignore{\subsection{Efficiently Scanning over Subgroups}
\label{sec:efficient_scan}}

\ignore{
Let's say that we have a function that produces a score that represents the level of anomaly for a given subset, $F=f(S)\rightarrow R$, where larger values of $F(S)$ represent a greater degree of anomaly (in the form of a violation of a group-fairness definition). We want to find the most anomalous subgroup that exists in $D_{1}$, $\argmax(\text{F(S)})\:|\:\forall S \in D_{1}$, which we will refer to as $S^{*}$. 
The score functions take the form of log-likelihood ratios, which is a commonly used a scan statistic in the pattern detection literature~\citep{kulldorff1997spatial}. For a given subset, $S$, the score function calculates the log of the likelihood of the alternative hypothesis, $H_{1}(S)$ of $S$ having an anomalous signal of bias in its predictions or recommendation, or $I\not\perp A \:|\: X,C$ for the subgroup $S$, divided by the likelihood of the null hypothesis, $H_{0}$, of there being no anomalous signal of bias in the predictions or recommendation of the protected class, or $I\perp A\:|\:X, C$. Formally, $F(S) = \log(\frac{L(H_1(S))}{L(H_0)})$ where $L$ represents the likelihood of a given hypothesis. In Table~\ref{methods-score_function_table} there are two forms for these hypotheses that make up the score functions used to measure subsets degree of bias. One form, which the separation scans for recommendations and sufficiency scans for predictions and recommendations take, represents the hypotheses as odds of the estimated expectation of the event of interest $I$ for individuals in the protected class under the null hypothesis. Let's explore the score function's hypotheses for the false positive rate separation scan for recommendations, $\mbox{Pr}(P_{bin}=1\:|\:Y=0,X) \bot A$. The null hypothesis, $H_0$, states that no subset of $D_{r_1}$ (which only contains individuals with negative outcomes) has biased predictions that result in an increased rate of false positives, or that the odds of the $I$, which estimate the distribution of odds of the individuals of the protected class having a positive event of interest (being a false positive) when $I\perp A\ \:|\: X,C$ , well represent the event of interest of being being flagged as false positive for the individuals of the protected class. The alternative hypothesis, $H_1(S)$, states that the odds of the estimated expectation of the event of interest for individuals in the $S$ in the protected class, $\hat{I}$, poorly represent the false positive events for the subgroup of individuals in the protected class in $S$, and that there exists some constant $q$ that the odds of the $\hat{I}$ in $S$ in the protected class could be scaled by that would better represent the odds of being a false positives. If we are scanning for over-estimation bias, we are looking for a $q$ that is less than $1$ and greater than $0$ because the decreased odds of the $\hat{I}$ for $S$ would better represent the odds of a false positive for $S$ in the the protected class. Conversely, if we are scanning for under-estimation bias, we are looking for a $q$ that is greater than 1 because the odds of $\hat{I}$ can be increased by some factor and this would better represent the odds of a false positive for $S$ in the protected class. The other form of hypotheses, which the separation scans for predictions take, is that the difference between the log odds of the predictions of a positive event and the log odds of the estimated expectation of the event variable for individuals in the protected class under the null hypothesis is sampled from a Gaussian distribution. The null hypothesis, $H_0$, is that the difference of log odds, $\Delta_i$, for all members of the protected class is sampled from a Gaussian distribution that is centered at $0$, implying that $\hat{I}$ represented the observed event of interest $I$ and therefore $I \perp A \:|\: X,C$ for the protected class. The alternative hypothesis, $H_1(S)$, states that there exists a subset, $S$, of the individuals in the protected class whose difference of log odds is sampled from a Gaussian distribution that is centered at $\mu$ where $\mu \neq 0$ and therefore for the individuals in the $S$ of the protect class $I \not\perp A \:|\: X,C$. For detecting over-estimation bias, we are looking for a subset of $D_{1}$ where $\mu < 0$,indicating that the log odds of the $\hat{I}$ for $S$ are systematically greater than the log odds of the predictions ($\log(\frac{I}{1-I}) < \log(\frac{\hat{I}}{1-\hat{I}}) $). For detecting under-estimation bias, we are looking for a subset of $D_{1}$ where $\mu > 0$, indicating that the log odds of $\hat{I}$ for $S$ are systematic less than the log odds of the predictions ($\log(\frac{I}{1-I}) > \log(\frac{\hat{I}}{1-\hat{I}})$).
}



\ignore{We calculate the likelihood functions used in the score functions for the scans where the event variable is binary and the hypotheses are represented as odds of the $\hat{I}$ by converting the odds of the $\hat{I}$ scaled by $q$ to probabilities that we then set to be the univariate parameter for the Bernoulli distribution. We use the likelihood function for the Bernoulli distribution in these score functions when calculating the likelihood of the null and alternative hypotheses. For the scans where the event of interest is a prediction and the hypotheses in the score function are represented as differences of log odds drawn from the Gaussian distribution, we calculate the likelihood of the null and alternative hypotheses for the score function using the likelihood function for the Gaussian distribution. It is important to note that the alternative hypotheses for both forms contain parameters that need to be determined when calculating the score function for a given subset. For the scans where the likelihood function for the Bernoulli distribution is used in the score function, we find the parameter $q$ that maximizes the score function for that given subset using a binary search algorithm. For the scans where the Gaussian distribution is used, the score function is maximized for a given subset $S$ when $\mu = \Sigma_{\forall i \in S} \frac{\Delta_i}{\:|\:S\:|\:}$. Reference Table~\ref{methods-score_function_table} for simplified score functions. A penalty term can be added to $F(S)$ where the penalty term is the result of multiplying the dimensionality of a given subset by a prespecified penalty scalar to encourage the scan to find less granular subsets. If the penalty scalar equals $0$ there is no penalty for finding optimal subsets with high arity. If the penalty scalar is greater than 0 there is a higher penalty for subsets with larger arity.}

\ignore{Having defined the LLR score functions $F(S)$ for the various scans, }We now consider how CBS is able to efficiently maximize $F(S)$ over subgroups $S$ of the protected class, returning $S^\ast = \arg\max_S F(S)$ and the corresponding score $F(S^\ast)$. \ignore{We also note that the statistical significance ($p$-value) of the discovered subgroup $S^\ast$ can be obtained by \emph{randomization testing}, which correctly adjusts for the multiple testing resulting from searching over subgroups. To do so, we can generate a large number of simulated datasets under the null hypothesis $H_0$ of no bias (see the specific formulations of $H_0$ in Table~\ref{methods-score_function_table}), perform the same CBS scan for each null dataset, and compare the maximum score $F(S^\ast)$ for the true dataset to the distribution of maximum scores $F(S^\ast)$ for the simulated datasets. The detected subgroup is significant at level $\alpha$ if its score exceeds the $1-\alpha$ quantile of the $F(S^\ast)$ values for the simulated datasets.} The scan procedure for CBS takes as inputs a dataset $D_1 = (I, \hat I, X)$ consisting of the event variable $I_i$, the estimated expectation of $I_i$ under the null hypothesis $\hat I_i$, and the covariates $X_i$, for each individual in the protected class ($A_i=1$), along with several parameters: the type of scan (Gaussian or Bernoulli), the direction of bias to scan for (over- or under-estimation), complexity penalty, and number of iterations. It then searches for the highest-scoring subgroup (consisting of a non-empty subset of values $V^j$ for each covariate $X^j$), starting with a random initialization on each iteration, and proceeding by \emph{coordinate ascent}. The coordinate ascent step identifies the highest-scoring non-empty subset of values $V^j$ for a given covariate $X^j$, conditioned on the current subsets of values $V^{-j}$ for all other attributes. As shown in~\cite{mcfowland2018efficient}, each individual coordinate ascent step can provably find the optimal subset of attribute values while evaluating only $|X^j|$ of the $2^{|X^j|}$ subsets of values, where $|X^j|$ is the arity of covariate $X^j$. This efficient subroutine follows from the fact that the score functions above satisfy the additive linear-time subset scanning property~\cite{neill2012fast,speakman2016penalized}. The coordinate ascent step is repeated with different, randomly selected covariates until convergence to a local optimum of the score function, and multiple random restarts enable the scan to approach the global optimum~\cite{mcfowland2018efficient}.
For an in-depth, self-contained description of the scan algorithm, including pseudocode, and how it exploits an additive property of the score functions to achieve linear-time efficiency for each scan step, see Appendix~\ref{fss_for_cbs_appendix}.  

Finally, as described in detail in Appendix~\ref{permutation_testing}, we perform \emph{permutation testing} to compute the p-value of the detected subgroup, comparing its score to the distribution of maximum subgroup scores under $H_0$, and report whether it is significant at a given level $\alpha$ (e.g., $\alpha=.05$).

\section{Evaluation} \label{experiments_validation}

We evaluate the CBS framework through semi-synthetic simulations.\ignore{Rather than using the COMPAS risk predictions, we used the covariates from the COMPAS data and generated synthetic outcomes, predictions, and recommendations,} We generate 100 semi-synthetic datasets using the COMPAS data, described in Section~\ref{compas_experiment_section}, where for each dataset, we randomly select an attribute and value to define the protected class $A$, remove that attribute from $X$, and randomly select a subgroup of the protected class $S_{bias}$ (defined by a non-empty subset of values for each attribute $X^1 \ldots X^m$) into which we will inject biases or base rate shifts.  We pick $S_{bias}$ by randomly choosing two attributes ($n_{bias} = 2$) and then independently including or excluding each value of those attributes with probability $p_{bias}$ of being included in $S_{bias}$, with $p_{bias}=0.5$.  For each attribute-value of the covariates, we draw a weight from a Gaussian distribution, $\mathcal{N}(0,0.2)$. We use these weights to produce the true log-odds of a positive outcome ($Y_i=1$) for each individual $i$ by a linear combination of the attribute values with these weights. Additionally, for each individual, we add $\epsilon_{i}^{true} \sim \mathcal{N}(0,\sigma_{true})$ to their true log-odds, representing variation between individuals that arises from factors other than the set of scan attributes, and is incorporated into the predictive model.\footnote{Rudin et al.~\cite{rudin2020age} state that COMPAS relies on up to 137 variables collected from a questionnaire. Some of these additional predictors may be informative, but as they note, this complexity creates numerous additional problems ranging from lack of transparency to adverse impacts from data entry errors.} 
Given the true log-odds of $Y_i = 1$ for each individual, we draw each outcome $Y_i$ from a Bernoulli distribution with the corresponding probability. Next, we set each individual's predicted log-odds equal to their true log-odds plus $\epsilon_i^{predict} \sim \mathcal{N}(0,\sigma_{predict})$, representing non-systematic errors (random noise) in the predictive model. We use default values of $\sigma_{true} = 0.6$ and $\sigma_{predict} = 0.2$, but examine the robustness of our results to these parameter values in Appendix~\ref{additional_validation_setups}; see also Appendix~\ref{additive_term_validation} for discussion of the impact of $\sigma_{true}$ on sufficiency-based definitions of fairness. Finally, we threshold the probabilities to produce binarized recommendations $P_{i,bin} = \mathbbm{1}{(P_{i} \ge 0.5)}$ for each individual $i$.  Using covariates from the COMPAS data with synthetic outcomes, predictions, and recommendations enables us to quantify and compare the detection performance of CBS and competing methods for known biases injected into the data. 

We compare the four variants of CBS to GerryFair~\citep{kearns2018preventing}, a framework for detecting biases and learning classifiers that guarantee subgroup fairness, and Multiaccuracy Boost~\cite{kim2019multiaccuracy}, a method for auditing subgroup inaccuracies and post-processing classifiers to improve performance to ensure subgroup accuracy. For more information about the modifications we made to both methods to make them more comparable to CBS for these simulations, see Appendix~\ref{benchmark_methods_adaptions}.
We use the same settings for CBS as described in Section~\ref{compas_experiment_section}, with the exception of running all scans with all conditional variable values rather than as value-conditional scans.
We designed the evaluation to answer three questions: 
\begin{enumerate} [label=(Q\arabic{*}),noitemsep,topsep=0pt,parsep=0pt,partopsep=0pt,leftmargin=*]
 \item \label{Q1} How well do the four variants of CBS (and competing methods) detect \emph{biases} injected into subgroup $S_{bias}$ of the protected class, in the form of a systematic difference between the predicted and true log-odds of the event variable $I$ for $S_{bias}$?\ignore{, in which the predicted log-odds within some subgroup for the protected class (and thus, the binarized recommendations based on those predictions) differ systematically from the true log-odds of a positive outcome?}
 \item \label{Q2} How do the four variants of CBS (and competing methods) respond to a \emph{base rate shift} for subgroup $S_{bias}$ of the protected class, in the form of concurrently shifting both the predicted and true log-odds of the event variable $I$ for $S_{bias}$, assuming no injected bias? \ignore{, in which the true log-odds within some subgroup for the protected class are systematically higher or lower than the true log-odds for the corresponding subgroup of the non-protected class, assuming no injected bias (i.e., the predicted log-odds do not differ systematically from the true log-odds)?}
 \item \label{Q3} To what extent do answers for \ref{Q1} and \ref{Q2} vary depending on the characteristics of $S_{bias}$?
\end{enumerate}

\ignore{We hypothesize that both sufficiency and separation scans will detect an injected bias \ref{Q1}, because there is a systematic difference between the outcomes $Y$ and the predictions $P$ in the protected class, without a corresponding difference in the non-protected class. This difference creates both a dependency between membership in the protected class $A$ and $Y$ (conditional on $P$), thus violating sufficiency, as well as a dependency between $A$ and $P$ (conditional on $Y$), thus violating separation. We hypothesize that separation scans will detect a base rate difference between the protected and non-protected class \ref{Q2}, since a correctly calibrated predictor will have higher predicted probabilities for whichever class has higher base rate, while sufficiency scans will be robust to base rate differences, since an equal shift in the predicted and true log-odds of $Y=1$ for the protected class would result in $E[Y\:|\:P, X]$ remaining the same for the protected and non-protected class.}





To address \ref{Q1}, we inject bias into subgroup $S_{bias}$ of the protected class, keeping the corresponding subgroup of the non-protected class unchanged, in one of two ways: 
(1) increasing the predicted log-odds by $\mu_{sep}$ for each individual in $S_{bias}$, and recomputing the model's predicted probabilities $P_i$ and recommendations $P_{i,bin}$; or
(2) reducing the true log-odds by $\mu_{suf}$ for each individual in $S_{bias}$, and redrawing the outcomes $Y_i$. Both of these shifts result in a bias in which the predicted values ($P$ or $P_{bin}$) overestimate the outcomes ($Y$) for the given subgroup of the protected class. We distinguish between these two biases because $\mu_{sep}$ corresponds to a shift of the predicted log-odds (the alternative hypothesis for separation scans) and $\mu_{suf}$ corresponds to a shift of the outcomes (the alternative hypothesis for sufficiency scans). 
To address \ref{Q2}, we inject a base rate shift into subgroup $S_{bias}$ of the protected class, keeping the corresponding subgroup of the non-protected class unchanged. This is done by increasing \emph{both} the true log-odds and the predicted log-odds by $\delta$, then redrawing outcomes $Y_i$ and recomputing predictions $P_i$ and recommendations $P_{i,bin}$. For positive $\delta$, this creates a higher base rate of a positive outcome for subgroup $S_{bias}$ of the protected class, as compared to the corresponding subgroup of the non-protected class, while maintaining well-calibrated predictions. For \ref{Q3}, we vary the size of $S_{bias}$ in two ways. First, we vary the number of attributes, $n_{bias}$, that the attribute-values can be chosen from, between 1 and 4. Second, we vary the probability, $p_{bias}$, that each value of the chosen attributes is included in $S_{bias}$. We run three experiments ($\mu_{sep} =1$, $\mu_{suf} =1$, and $\delta = 0.5$) while varying $n_{bias}$ and $p_{bias}$.

Once we inject bias into or shift the base rates of $S_{bias}$ in the protected class, we run all our scans and GerryFair and Multiaccuracy Boost on our data. We measure the accuracy of the detected subset, which we refer to as $S^\ast$, for all the scans and methods with the following accuracy measurement: $\frac{\:|\:S_{bias} \cap S^\ast\:|\:}{\:|\:S_{bias} \cup S^\ast\:|\:}$, or Jaccard similarity between the injected and detected subsets. Accuracies are averaged over 100 semi-synthetic datasets for each experiment.


\begin{figure}
 \centering
 \includegraphics[scale=.70]{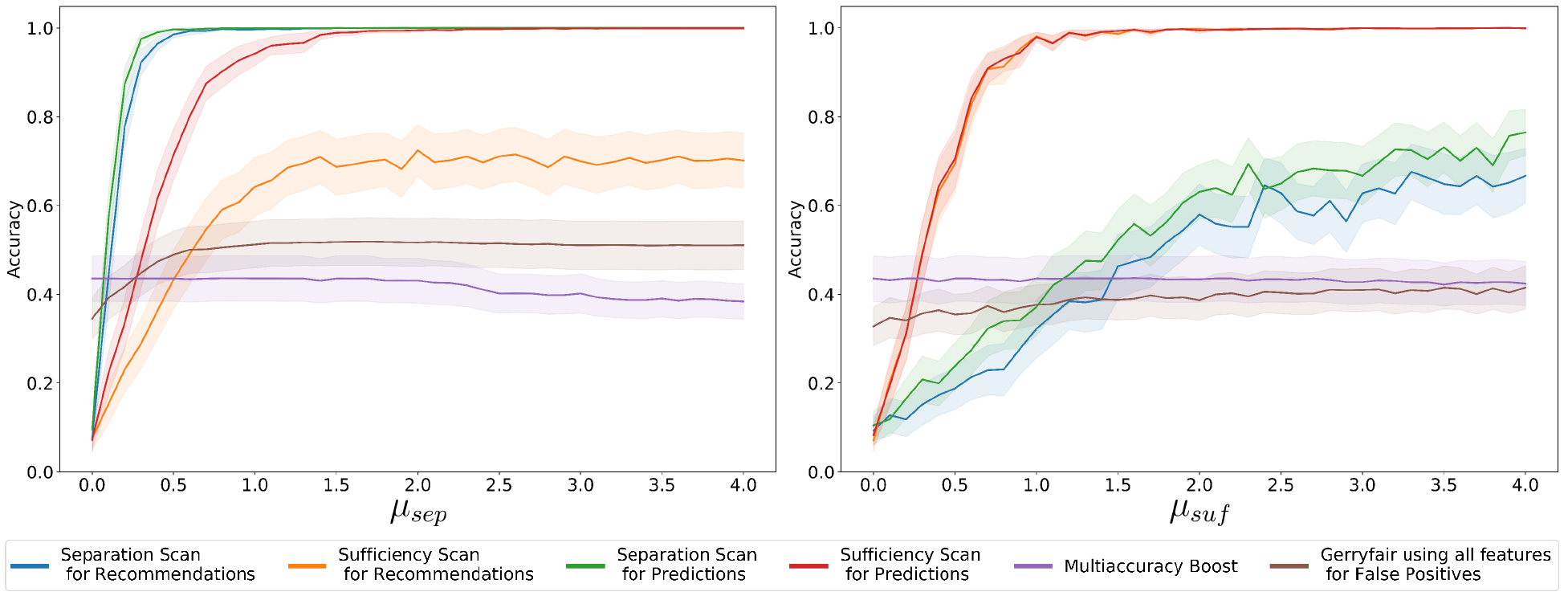}
 \caption{Average accuracy (with 95\% CI) as a function of the amount of bias injected into subgroup $S_{bias}$ of the protected class, for four variants of CBS, GerryFair, and Multiaccuracy Boost. Left: increasing predicted log-odds by $\mu_{sep}$. Right: decreasing true log-odds by $\mu_{suf}$.}
 \label{a_diagrams}
\end{figure}

\begin{figure}
 \centering
 \includegraphics[scale=.4]{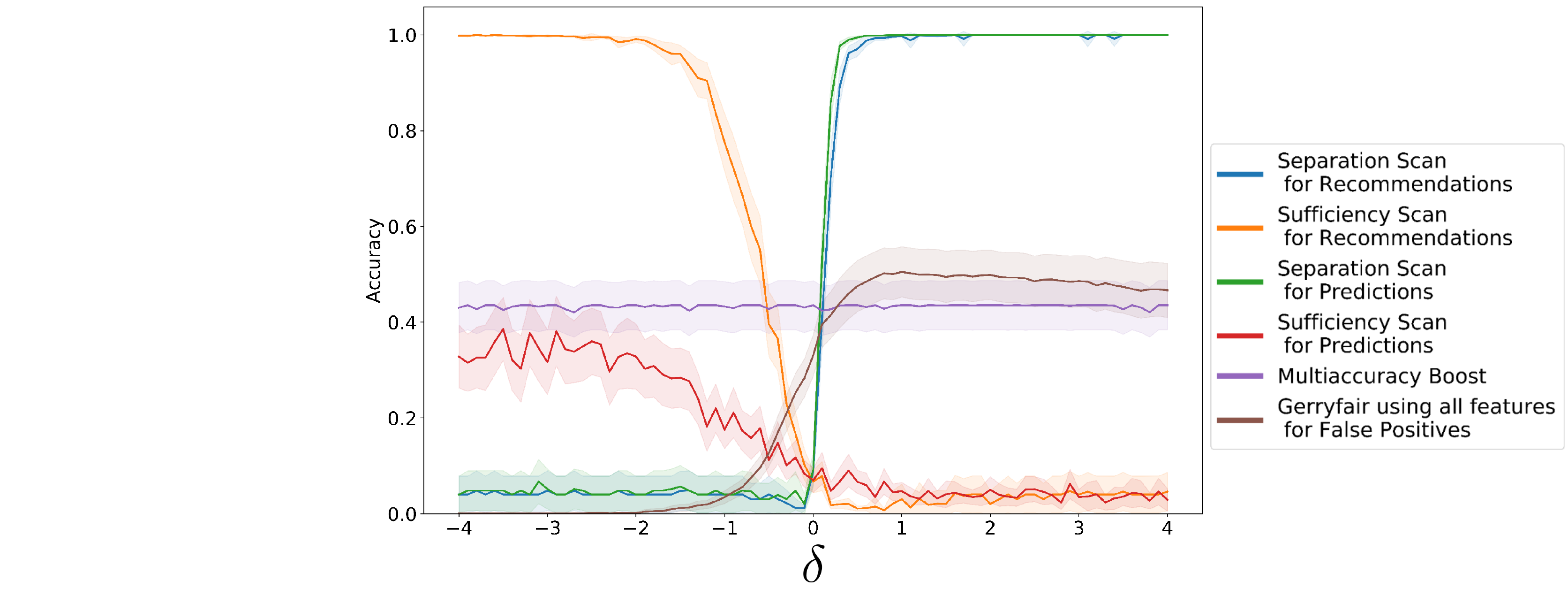}
 \caption{Average accuracy (with 95\% CI) as a function of the base rate difference $\delta$ between protected and non-protected class for subgroup $S_{bias}$, for four variants of CBS, GerryFair, and Multiaccuracy Boost. Note that predictions are well calibrated, $\mu_{sep}=\mu_{suf}=0$.}
 \label{2a_diagrams}
\end{figure}

In Figure~\ref{a_diagrams}, which addresses \ref{Q1}, we observe that all four variants of CBS are able to detect the injected bias (for subgroup $S_{bias}$ of the protected class) with higher accuracy than GerryFair or Multiaccuracy Boost. Sufficiency scans had highest accuracy for a shift in true log-odds ($\mu_{suf}$), and separation scans had highest accuracy for a shift in predicted log-odds ($\mu_{sep}$). Scans for predictions generally outperformed scans for recommendations, due to the loss of information from binarizing the probabilistic predictions. Interestingly, sufficiency scan for predictions (but not for recommendations) converged to perfect accuracy for $\mu_{sep}$, while separation scans did not converge to perfect accuracy for $\mu_{suf}$. Sufficiency scan for predictions is conditioned on a real-valued variable ($P_i$) rather than a binary variable ($P_{i,bin}$ or $Y_i$), allowing more flexible modeling of $\mathbb{E}[Y \:|\: P, X]$ and thus greater sensitivity to shifts in predicted log-odds. 

In Figure~\ref{2a_diagrams}, which addresses \ref{Q2}, shifting the base rate for subgroup $S_{bias}$ of the protected class results in separation scans detecting a base rate shift when $\delta > 0$, while sufficiency scans and competing methods are robust to this shift.  This finding aligns with previous research proving that differences in base rates between two populations will result in a higher false positive rate for the population with a higher base rate when using a well-calibrated classifier~\citep{chouldechova2017fair}. \ignore{As noted in~\cite{corbett2018measure}, ``higher FPR for one group relative to another may either mean that the group faces a
lower threshold, indicating discrimination, or alternatively, that the group has a higher base rate'', and this effect of base rate increases appears to be true for separation-based definitions of fairness more generally. }
Interestingly, sufficiency scan for recommendations detects a base rate shift for $\delta \ll 0$. In this case, $\mathbb{E}[Y \:|\: P_{bin}, X]$ is lower for individuals in the protected class than for individuals with negative recommendations in the non-protected class. Thus conditioning on the binary indicator $P_{i,bin}$ is not sufficient to capture this decrease in the true log-odds, while conditioning on the real-valued prediction $P_i$ allows sufficiency scan for predictions to extrapolate reasonably well to these cases. 

In Figure~\ref{cd_diagrams} in Appendix~\ref{varying_evaluation}, which addresses \ref{Q3}, we see that increasing the number of affected dimensions $n_{bias}$ generally decreases performance, with the relative accuracies for scans and competing methods similar to those in Figures~\ref{a_diagrams} and \ref{2a_diagrams}. Interestingly, increasing $p_{bias}$ to 1 (meaning that bias is injected into the entire protected class) enables GerryFair (for $\mu_{sep}$) and Multiaccuracy Boost (for $\mu_{sep}$ and $\mu_{suf}$) to achieve similar accuracy to CBS, but both methods underperform for smaller, more subtle, subgroup biases.  Additional robustness checks are described in Appendix~\ref{additional_validation_setups}, and for estimates of compute power needed for the simulations see Appendix~\ref{simulations_compute_power}.\ignore{
For all 100 semi-synthetic datasets, CBS was run 150,400 times for the evaluation simulations and robust checks in Appendix~\ref{additional_validation_setups}. We used 15 shared, university compute servers running CentOS with 16-64 cores, and 16 - 256 GB of memory.  Each server performed 15-120 runs of CBS concurrently, and ran for approximately 9 days. }
\ignore{
Finally, we performed experiments to test the robustness of CBS for varying parameters ($\sigma_{predict}$, $\sigma_{true}$), as well as exploring how base rate shifts impact detection of injected bias. These experiments are described in Appendix~\ref{additional_validation_setups}. }
\section{Case Study of COMPAS} \label{compas_experiment_section}

The COMPAS algorithm is used in various jurisdictions across the United States as a decision support tool to predict individuals' risk of recidivism. It is commonly used by judges when deciding whether an arrested individual should be released prior to their trial~\citep{angwin2016machine}. \ignore{Following the initial investigation by ProPublica about fairness issues in COMPAS risk predictions~\citep{angwin2016machine}, ProPublica's COMPAS dataset has been used as a benchmark in the fairness literature. While we use the COMPAS data because of its familiarity and supporting research, we also note the value of alternative framings of the evaluation of automated decision support tools in the criminal justice systems, examining the risks that the system poses to defendants rather than the risk of the defendants to public safety~\cite{mitchell2021algorithmic,meyer2022flipping}.}\ignore{ We follow many of the processing decisions made in the initial ProPublica analysis, including removing traffic offenses and defining recidivism as a new arrest within two years of the initial arrest for a defendant~\citep{larson_angwin_kirchner_mattu_2016, larson_roswell_2017}. After preprocessing the initial data set, we have 6,172 defendants, their gender, race, age (Under 25 or 25+), charge degree (misdemeanor or felony), prior offenses (None, 1 to 5 or Over 5), predicted recidivism risk score (1-10), and whether they were re-arrested within two years of the initial arrest.} We define each defendant's predicted probability of reoffending, $P_i$, by mapping their COMPAS risk score to the proportion of all defendants with the given risk score who reoffended. Defendants with COMPAS risk scores of 5+ are considered ``high risk'' ($P_{i,bin}=1$) since COMPAS instructs that defendants in this score range should be considered carefully by supervision agencies ~\citep{larson_angwin_kirchner_mattu_2016}.  For more details about the COMPAS data, as well as critiques of this dataset, please see Appendix~\ref{compas_preprocessing} and Appendix~\ref{compas_considerations} respectively.

We chose the parameters for each of the four variants of CBS (value of the conditioning variable, if it is binary, and direction of effect) in order to search for systematic biases in the COMPAS predictions and recommendations which disadvantage the protected class. For the separation scans, we detect positive deviations for the protected class attribute in the $\mathbb{E}(P\:|\:Y=0,X)$ and $\mbox{Pr}(P_{bin}=1\:|\:Y=0,X)$, i.e., increase in predicted risk and increase in FPR for non-reoffending defendants, respectively. For the sufficiency scans, we detect a negative deviation for the protected class in the $\mbox{Pr}(Y=1\:|\:P,X)$ and $\mbox{Pr}(Y=1\:|\:P_{bin}=1,X)$, i.e., decreased probability of reoffending conditional on predicted risk and on being flagged as high-risk, respectively. For a discussion on the considerations for using COMPAS in this case study and for choosing different group fairness definitions for pre-trial tools, please reference Appendix~\ref{compas_considerations}.
\ignore{Thus, for the separation scans, we search for subgroups of the protected class with the most significant \emph{increase}, either in the probabilistic predictions or in the probability that the binarized recommendation equals 1, conditional on the defendant's covariates. Moreover, we perform value-conditional scans, focusing specifically on the subset of defendants who did not reoffend ($Y_i = 0$), since these are the individuals who would be harmed the most by an overestimate of their reoffending risk prediction or by a ``false positive'' recommendation to treat them as ``high risk''. For the sufficiency scans, we search for subgroups of the protected class with the most significant \emph{decrease} in the observed rate of reoffending, conditional on the defendant's covariates and their COMPAS prediction or recommendation. For the sufficiency scan for recommendations, we also perform a value-conditional scan. We focus specifically on the subset of defendants who were predicted to be ``high risk'' by COMPAS ($P_{i,bin}=1$) because this labeling could negatively impact the defendant, e.g., by decreasing their likelihood of pre-trial release. }\ignore{This results in CBS detecting subgroups of the protected class for whom the positive predictive value is most significantly decreased, or equivalently, for whom the false discovery rate is most significantly increased.}For all scans, we use all attributes except for the sensitive attribute when calculating the probability of being a member of the protected class (for the propensity score weighting step) and when generating the predicted values $\hat{I}$ in Section~\ref{methods-generating-hat-p-section}. All scans were run for 500 iterations with a penalty equal to 1.

\begin{figure}
 \centering
 \includegraphics[width = .9\textwidth]{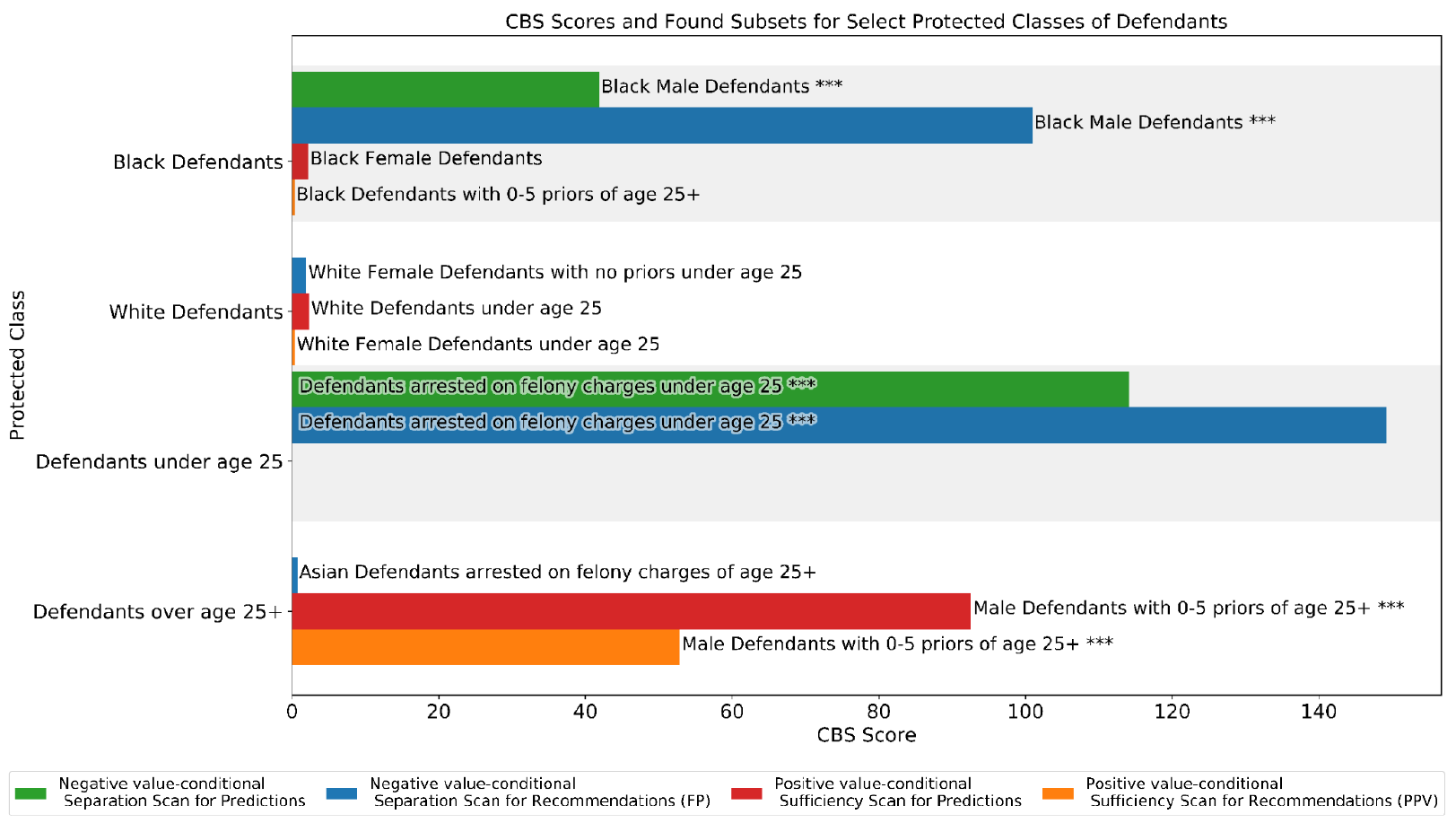}
 \caption{Scores of the subgroups found when running four variants of CBS on COMPAS data for different choices of protected class. A text description of the subgroup $S^\ast$ found for each scan is provided if the subgroup score $F(S^\ast)$ is greater than 0.  *** indicates the subgroup's score is statistically significant with p-value < .05 measured by permutation testing described in Appendix~\ref{permutation_testing}.}
 \label{complas_bar_plot}
\end{figure}

Figure~\ref{complas_bar_plot} contains the detected subgroups $S^\ast$, and their associated log-likelihood ratio scores $F(S^\ast$) and corresponding indicators of statistical significance, found by each of the four variants of CBS, for various choices of the protected class: Black, white,\ignore{ female, male,} younger (under the age of 25) and older (age 25+) defendants.  Please see Appendix~\ref{permutation_testing} for the  permutation test procedure used to determine statistical significance of CBS's detected biases. For the full set of results for all CBS scans when treating each attribute value as the protected class, please see Table~\ref{compas_full_results} in Appendix~\ref{compas_full_results_appendix}. This table also includes information about the number of individuals and the observed rate (e.g., proportion of reoffending), both for the detected subgroup of the protected class, and for the corresponding (comparison) subgroup of the non-protected class. 

\ignore{Overall, we observe from Figure~\ref{complas_bar_plot} and Table~\ref{compas_full_results} that the separation scans highlighted biases (high FPR and overestimated predicted probabilities of reoffending, among defendants who did not reoffend) against subgroups of three protected classes--younger, Black\ignore{, and female} defendants. In contrast, the sufficiency scans highlighted biases (high false discovery rates, among defendants who were predicted as ``high risk'', and low reoffending rates, controlling for predicted risk) against older defendants\ignore{ and different subgroups of female defendants}.} Below we discuss statistically significant racial and age biases that CBS found in COMPAS's predictions and recommendations, comparing subgroups found by separation and sufficiency scans:

\textbf{Racial bias in COMPAS.} Figure~\ref{complas_bar_plot} shows that the separation scans identify highly significant biases against a subgroup of Black defendants, while the sufficiency scans do not. These results support and complement the previous findings by ProPublica~\cite{angwin2016machine} and follow-up analyses~\cite{chouldechova2017fair}, which concluded that COMPAS has large error rate disparities which negatively impact Black defendants (corresponding to large scores for separation scans), and that its predictions are well-calibrated for Black defendants (corresponding to small scores for sufficiency scans)\ignore{, and that the two fairness definitions are incompatible given differences in the base rate of recorded reoffending/rearrest between Black and white defendants~\cite{kleinberg2016inherent,chouldechova2017fair}}. However, CBS's detected subgroup for the two separation scans adds a useful finding to this discussion: the large FPR disparity of COMPAS against Black defendants is even more significant in the intersectional subgroup of Black males. Non-reoffending Black male defendants have an FPR of 0.44, compared to non-reoffending non-Black male defendants' FPR of 0.19, whereas non-reoffending Black defendants have an FPR of 0.42, compared to non-reoffending non-Black defendants' FPR of 0.20.\ignore{ Finally, CBS does not find any high-scoring subgroups for white defendants, suggesting that COMPAS predictions do not disadvantage subgroups of white defendants. CBS also does not find high-scoring subgroups for other racial groups, though this could be due to a small sample size rather than unbiased predictions.}

\textbf{Age bias in COMPAS.} Previous research on COMPAS has argued that it relies heavily on age, and specifically, the assumption that younger defendants are more likely to reoffend~\citep{rudin2020age}, when computing risk scores. Younger defendants have a higher reoffending rate compared to older defendants (0.56 vs.~0.46), and thus, well-calibrated predictions and recommendations would result in younger defendants having higher FPR than older defendants. Our separation scans identify non-reoffending defendants under age 25 arrested on felony charges as the subgroup with the largest FPR disparity.\ignore{: these defendants have a 55\% FPR and average predicted probability of reoffending of 52\%, as compared to non-reoffending defendants of age 25+, who have a 29\% FPR and average predicted probability of reoffending of 39\%.} 
On the other hand, our sufficiency scans identify a large subgroup bias within the protected class of older (age 25+) defendants: older male defendants with 0-5 priors have a lower rate of reoffending, as compared to younger male defendants with 0-5 priors, both for flagged high-risk defendants (sufficiency for recommendations) and for defendants with similar risk scores (sufficiency for predictions). This finding highlights the scenario described in Section~\ref{sec:intro} that CBS is designed to detect: predictions are well-calibrated between older and younger defendants, in aggregate, but not for the detected subgroup of older males with 0-5 priors.

\ignore{\textbf{Gender bias in COMPAS.} Our separation and sufficiency scans revealed several different subgroups of female defendants for whom their respective fairness definitions were violated. Even though the base rate of reoffending is lower for females than males, separation scans identified the subgroup of non-reoffending female defendants with 6+ priors as receiving higher average risk scores and a higher proportion of ``high-risk'' flags, as compared to similar non-reoffending male defendants with 6+ priors. Sufficiency scans identified subgroups of females with lower rates of reoffending for defendants flagged as high-risk (female defendants with no priors, as compared to male defendants with no priors) and for defendants with equivalent expected risk (female defendants under 25, as compared to male defendants under 25). Thus there are various calibration issues that disadvantage subgroups of female defendants, violating both sufficiency and separation.

Lastly, Figure~\ref{complas_bar_plot} shows the effects of binarizing predicted risk to form indicators of high-risk for defendants. In particular, for separation scans, the use of a sharp threshold in COMPAS for defining high versus low risk seems to amplify differences in predicted probabilities, leading to larger differences in error rates. This can be seen from the overall higher scores for the separation scan for recommendations, as compared to separation scan for predictions. Additionally, separation scan for recommendations identifies a high-scoring intersectional subgroup of Hispanic male defendants. This suggests that binarizing predictions negatively affects Hispanic male as compared to Hispanic female defendants, perhaps because they tend to fall slightly over and slightly under the ``high-risk'' threshold respectively. }

\ignore{In summary, when auditing COMPAS's predictions and recommendations, we found numerous subgroup biases, including intersections of age, race, and gender, along with context such as charge degree and prior offenses.
These biases can be roughly divided into three groups: (1) error rate disparities, resulting (at least in part) from differences in the base rate of reoffending, and detected only by separation scans, including biases against younger and Black defendants; (2) biases resulting from sharp thresholding of predictions into binarized recommendations, including a bias against Hispanic males; and (3) biases from miscalibrated predictions, including various biases against subgroups of females, as well as older male defendants. Please see Appendix~\ref{compas_full_results_appendix}, Table~\ref{compas_full_results} for additional details. }

\section{Limitations} \label{discussion_and_limitations}

Our CBS framework is designed to audit a classifier's predictions and recommendations for biases with respect to subgroups of a protected class, whereas competing methods provide mechanisms for both auditing and correcting classifiers. Combining auditors with correction and training presents two challenges, one being how to quantify the inherent trade-offs between performance and fairness when correcting for subgroup biases. Another, more subtle implication of designing auditors that are linked to correction and training methods is that it reinforces the framing that the primary solution to subgroup biases is to correct the models. Given that fairness is often context-specific, ideas of fairness could differ between stakeholders, and upstream biases exist in data sources used in many socio-technical settings, designing an optimally fair model is not always feasible.\ignore{ Feller et al. mention, with respect to discovering biases in COMPAS risk scores, that "rather than using risk scores to determine which defendants must pay money bail, jurisdictions might considering ending bail requirements altogether \ldots so that no one is unnecessarily jailed"~\citep{feller2016computer}. While the broader question of how to address biases in criminal justice decision tools is outside of the scope of our work,} We endorse exploring larger policy shifts to address biases that auditing tools like CBS might unearth that are correlated with broader societal issues, as well as exploring different ways (not limited to model correction) to address these biases.

\ignore{Another limitation of our method is that we form subgroups by discretizing covariate values, while other methods, including Multiaccuracy Boost and other methods that guarantee individual fairness, do not discretize. Discretizing covariate values to form subgroups allows for more interpretability of fairness violations. With that said, covariates often capture constructs, such as race and gender, that might not capture the information necessary to accurately model biases~\citep{jacobs2021measurement}. For example, broad characterizations of race like ``Black or African-American'' might not fully capture minoritized individuals' experiences and how these might impact their interactions with the penal system. Experimenting with discretizing some attributes while allowing for soft constraints in forming subgroups for other attributes, depending on the granularity of the feature, and incorporating measurement models to assess the reliability of features (especially those that serve as proxies for societal constructs), could be a direction for future research~\citep{jacobs2021measurement}. 

CBS finds the subgroup with the most anomalous predictions or recommendations within the protected class. There could be other subgroups within a protected class with biased predictions that are less anomalous than the discovered subgroup. Extending CBS to detect multiple subgroups is straightforward -- each detected subgroup could be corrected (e.g., with a log-odds shift in predicted probabilities) or removed from the dataset once it is found. However, a rigorous understanding of the impacts of multiple subgroup detection (e.g., power to detect secondary subgroups, or potential biases resulting from removal or correction of the primary subgroup) could be an area for further research.}

CBS is designed to detect biases that take the form of group fairness violations represented as conditional independence relationships.  While CBS is easily generalizable to other objectives that can be represented as group-level conditional independence relationships, it is less generalizable to other fairness definitions such as individual and counterfactual fairness definitions~\citep{dwork2012fairness, kusner2017counterfactual}.  Our technique for estimating the expectations $\hat{I}$ for individuals under the null hypothesis of no bias has the limitation (which is commonly cited in the average treatment effects literature) of only being reliable when using well-specified models for estimating the propensity scores of protected class membership and for estimating $\hat{I}$.  Given the consistency of our COMPAS results in Section~\ref{compas_experiment_section} with other researchers' findings about COMPAS, the process of estimating $\hat{I}$ seems to model the COMPAS data well.  With that said, we encourage users of CBS to check estimates of $\hat{I}$ and if necessary, employ procedures common in the econometric literature (such as~\citep{imbens2004nonparametric,schuler2017targeted}) or calibration methods within the computer science literature.  Lastly, there are various limitations to permutation testing, some of which are discussed in~\cite{berger2000pros}, such as the conservatism of p-values obtained by permutation tests.  For CBS specifically, if $\hat{I}$ is poorly estimated during permutation testing, this could result in higher type II errors where CBS is more likely to erroneously accept the null hypothesis $H_0$ of no bias.

Our simulation experiments in Section~\ref{experiments_validation} account for bias in the form of shifts in the predicted and true log-odds (separately and jointly) -- which produces predictive and aggregation biases -- for a prescribed set of covariate attribute values in the protected class. In real-world scenarios, the generative process of bias might differ from the assumptions made in our simulations in various ways including: (1) noisy biased subgroups in the protected class not easily defined by discrete sets of covariate attribute values; (2) irregular shifts in the predicted and true log-odds within a biased subgroup that are not related to covariate attributes; (3) other forms of predictive bias not defined by conditional independence statements (i.e., outside of separation and sufficiency group fairness metrics).  Future research could determine and (if necessary) improve CBS's robustness to different generative schemas of bias. \ignore{ there could be multiple kinds of bias that could exist in various forms that are not explicitly detected by CBS. }
\ignore{, such as bias in the form of missing or inaccurate covariates or outcomes for a subgroup of the protected class. Future research in methods that are robust to co-occurring forms of bias would be beneficial.} 

In summary, CBS is a framework that works with most group-level fairness definitions to detect intersectional and contextual biases within subgroups of the protected class. CBS\ignore{ allows for the flexible utilization of group-fairness definitions but} overcomes some of the issues that arise when only considering fairness violations in aggregate for singular protected attribute values. We showed that the CBS framework can discover previously undocumented intersectional and contextual biases in COMPAS scores, and that it outperforms similar methods that audit classifiers for subgroup fairness. Despite the limitations above, we hope that this work will be useful to practitioners wishing to identify and correct more subtle subgroup biases in decision-support tools.

\begin{ack}

This material is based upon work supported by the National Science Foundation Program on Fairness in Artificial Intelligence in Collaboration with Amazon, grant IIS-2040898. Any opinions, findings, and conclusions or recommendations expressed in this material are those of the authors and do not necessarily reflect the views of the National Science Foundation or Amazon.

\end{ack}

\bibliographystyle{ACM-Reference-Format}
\bibliography{arxiv_submission_6_21_2023}


\begin{thebibliography}{26}


\ifx \showCODEN    \undefined \def \showCODEN     #1{\unskip}     \fi
\ifx \showDOI      \undefined \def \showDOI       #1{#1}\fi
\ifx \showISBNx    \undefined \def \showISBNx     #1{\unskip}     \fi
\ifx \showISBNxiii \undefined \def \showISBNxiii  #1{\unskip}     \fi
\ifx \showISSN     \undefined \def \showISSN      #1{\unskip}     \fi
\ifx \showLCCN     \undefined \def \showLCCN      #1{\unskip}     \fi
\ifx \shownote     \undefined \def \shownote      #1{#1}          \fi
\ifx \showarticletitle \undefined \def \showarticletitle #1{#1}   \fi
\ifx \showURL      \undefined \def \showURL       {\relax}        \fi
\providecommand\bibfield[2]{#2}
\providecommand\bibinfo[2]{#2}
\providecommand\natexlab[1]{#1}
\providecommand\showeprint[2][]{arXiv:#2}

\bibitem[Angwin et~al\mbox{.}(2016a)]%
        {angwin_larson_kirchner_2016}
\bibfield{author}{\bibinfo{person}{Julia Angwin}, \bibinfo{person}{Jeff
  Larson}, {and} \bibinfo{person}{Lauren Kirchner}.}
  \bibinfo{year}{2016}\natexlab{a}.
\newblock \bibinfo{title}{Machine bias}.
\newblock
\newblock
\urldef\tempurl%
\url{https://www.propublica.org/article/machine-bias-risk-assessments-in-criminal-sentencing}
\showURL{%
\tempurl}


\bibitem[Angwin et~al\mbox{.}(2016b)]%
        {angwin2016machine}
\bibfield{author}{\bibinfo{person}{Julia Angwin}, \bibinfo{person}{Jeff
  Larson}, \bibinfo{person}{Surya Mattu}, {and} \bibinfo{person}{Lauren
  Kirchner}.} \bibinfo{year}{2016}\natexlab{b}.
\newblock \showarticletitle{Machine bias}.
\newblock In \bibinfo{booktitle}{\emph{Ethics of Data and Analytics}}.
  \bibinfo{publisher}{Auerbach Publications}, \bibinfo{pages}{254--264}.
\newblock


\bibitem[Berger(2000)]%
        {berger2000pros}
\bibfield{author}{\bibinfo{person}{Vance~W Berger}.}
  \bibinfo{year}{2000}\natexlab{}.
\newblock \showarticletitle{Pros and cons of permutation tests in clinical
  trials}.
\newblock \bibinfo{journal}{\emph{Statistics in medicine}}
  \bibinfo{volume}{19}, \bibinfo{number}{10} (\bibinfo{year}{2000}),
  \bibinfo{pages}{1319--1328}.
\newblock


\bibitem[Chouldechova(2017)]%
        {chouldechova2017fair}
\bibfield{author}{\bibinfo{person}{Alexandra Chouldechova}.}
  \bibinfo{year}{2017}\natexlab{}.
\newblock \showarticletitle{Fair prediction with disparate impact: A study of
  bias in recidivism prediction instruments}.
\newblock \bibinfo{journal}{\emph{Big data}} \bibinfo{volume}{5},
  \bibinfo{number}{2} (\bibinfo{year}{2017}), \bibinfo{pages}{153--163}.
\newblock


\bibitem[Dwork et~al\mbox{.}(2012)]%
        {dwork2012fairness}
\bibfield{author}{\bibinfo{person}{Cynthia Dwork}, \bibinfo{person}{Moritz
  Hardt}, \bibinfo{person}{Toniann Pitassi}, \bibinfo{person}{Omer Reingold},
  {and} \bibinfo{person}{Richard Zemel}.} \bibinfo{year}{2012}\natexlab{}.
\newblock \showarticletitle{Fairness through awareness}. In
  \bibinfo{booktitle}{\emph{Proceedings of the 3rd innovations in theoretical
  computer science conference}}. \bibinfo{pages}{214--226}.
\newblock


\bibitem[Green(2020)]%
        {green2020false}
\bibfield{author}{\bibinfo{person}{Ben Green}.}
  \bibinfo{year}{2020}\natexlab{}.
\newblock \showarticletitle{The false promise of risk assessments: epistemic
  reform and the limits of fairness}. In \bibinfo{booktitle}{\emph{Proceedings
  of the 2020 conference on fairness, accountability, and transparency}}.
  \bibinfo{pages}{594--606}.
\newblock


\bibitem[Imbens(2004)]%
        {imbens2004nonparametric}
\bibfield{author}{\bibinfo{person}{Guido~W Imbens}.}
  \bibinfo{year}{2004}\natexlab{}.
\newblock \showarticletitle{Nonparametric estimation of average treatment
  effects under exogeneity: A review}.
\newblock \bibinfo{journal}{\emph{Review of Economics and statistics}}
  \bibinfo{volume}{86}, \bibinfo{number}{1} (\bibinfo{year}{2004}),
  \bibinfo{pages}{4--29}.
\newblock


\bibitem[Kearns et~al\mbox{.}(2018)]%
        {kearns2018preventing}
\bibfield{author}{\bibinfo{person}{Michael Kearns}, \bibinfo{person}{Seth
  Neel}, \bibinfo{person}{Aaron Roth}, {and} \bibinfo{person}{Zhiwei~Steven
  Wu}.} \bibinfo{year}{2018}\natexlab{}.
\newblock \showarticletitle{Preventing fairness gerrymandering: Auditing and
  learning for subgroup fairness}. In \bibinfo{booktitle}{\emph{International
  Conference on Machine Learning}}. PMLR, \bibinfo{pages}{2564--2572}.
\newblock


\bibitem[Kim et~al\mbox{.}(2019a)]%
        {kim2019multiaccuracy}
\bibfield{author}{\bibinfo{person}{Michael~P Kim}, \bibinfo{person}{Amirata
  Ghorbani}, {and} \bibinfo{person}{James Zou}.}
  \bibinfo{year}{2019}\natexlab{a}.
\newblock \showarticletitle{Multiaccuracy: Black-box post-processing for
  fairness in classification}. In \bibinfo{booktitle}{\emph{Proceedings of the
  2019 AAAI/ACM Conference on AI, Ethics, and Society}}. ACM,
  \bibinfo{pages}{247--254}.
\newblock


\bibitem[Kim et~al\mbox{.}(2019b)]%
        {multiaccuracy_boost_github}
\bibfield{author}{\bibinfo{person}{Michael~P. Kim}, \bibinfo{person}{Amirata
  Ghorbani}, {and} \bibinfo{person}{James Zou}.}
  \bibinfo{year}{2019}\natexlab{b}.
\newblock \bibinfo{title}{MultiAccuracyBoost}.
\newblock
  \bibinfo{howpublished}{\url{https://github.com/amiratag/MultiAccuracyBoost}}.
\newblock


\bibitem[Kulldorff(1997)]%
        {kulldorff1997spatial}
\bibfield{author}{\bibinfo{person}{Martin Kulldorff}.}
  \bibinfo{year}{1997}\natexlab{}.
\newblock \showarticletitle{A spatial scan statistic}.
\newblock \bibinfo{journal}{\emph{Communications in Statistics-Theory and
  methods}} \bibinfo{volume}{26}, \bibinfo{number}{6} (\bibinfo{year}{1997}),
  \bibinfo{pages}{1481--1496}.
\newblock


\bibitem[Kusner et~al\mbox{.}(2017)]%
        {kusner2017counterfactual}
\bibfield{author}{\bibinfo{person}{Matt~J Kusner}, \bibinfo{person}{Joshua
  Loftus}, \bibinfo{person}{Chris Russell}, {and} \bibinfo{person}{Ricardo
  Silva}.} \bibinfo{year}{2017}\natexlab{}.
\newblock \showarticletitle{Counterfactual fairness}.
\newblock \bibinfo{journal}{\emph{Advances in neural information processing
  systems}}  \bibinfo{volume}{30} (\bibinfo{year}{2017}).
\newblock


\bibitem[Larson et~al\mbox{.}(2016)]%
        {larson_angwin_kirchner_mattu_2016}
\bibfield{author}{\bibinfo{person}{Jeff Larson}, \bibinfo{person}{Julia
  Angwin}, \bibinfo{person}{Lauren Kirchner}, {and} \bibinfo{person}{Surya
  Mattu}.} \bibinfo{year}{2016}\natexlab{}.
\newblock \bibinfo{title}{How we analyzed the compas recidivism algorithm}.
\newblock
\newblock
\urldef\tempurl%
\url{https://www.propublica.org/article/how-we-analyzed-the-compas-recidivism-algorithm}
\showURL{%
\tempurl}


\bibitem[Larson and Roswell(2017)]%
        {larson_roswell_2017}
\bibfield{author}{\bibinfo{person}{Jeff Larson} {and} \bibinfo{person}{Marjorie
  Roswell}.} \bibinfo{year}{2017}\natexlab{}.
\newblock \bibinfo{title}{Compas-analysis/compas analysis.ipynb at master ·
  PROPUBLICA/Compas-analysis}.
\newblock
\newblock
\urldef\tempurl%
\url{https://github.com/propublica/compas-analysis/blob/master/Compas\%20Analysis.ipynb}
\showURL{%
\tempurl}


\bibitem[McFowland~III et~al\mbox{.}(2018)]%
        {mcfowland2018efficient}
\bibfield{author}{\bibinfo{person}{Edward McFowland~III},
  \bibinfo{person}{Sriram Somanchi}, {and} \bibinfo{person}{Daniel~B Neill}.}
  \bibinfo{year}{2018}\natexlab{}.
\newblock \showarticletitle{Efficient discovery of heterogeneous treatment
  effects in randomized experiments via anomalous pattern detection}.
\newblock \bibinfo{journal}{\emph{arXiv preprint arXiv:1803.09159}}
  (\bibinfo{year}{2018}).
\newblock


\bibitem[Meyer et~al\mbox{.}(2022)]%
        {meyer2022flipping}
\bibfield{author}{\bibinfo{person}{Mikaela Meyer}, \bibinfo{person}{Aaron
  Horowitz}, \bibinfo{person}{Erica Marshall}, {and} \bibinfo{person}{Kristian
  Lum}.} \bibinfo{year}{2022}\natexlab{}.
\newblock \showarticletitle{Flipping the Script on Criminal Justice Risk
  Assessment: An actuarial model for assessing the risk the federal sentencing
  system poses to defendants}.
\newblock \bibinfo{journal}{\emph{arXiv preprint arXiv:2205.13505}}
  (\bibinfo{year}{2022}).
\newblock


\bibitem[Mitchell et~al\mbox{.}(2021)]%
        {mitchell2021algorithmic}
\bibfield{author}{\bibinfo{person}{Shira Mitchell}, \bibinfo{person}{Eric
  Potash}, \bibinfo{person}{Solon Barocas}, \bibinfo{person}{Alexander
  D'Amour}, {and} \bibinfo{person}{Kristian Lum}.}
  \bibinfo{year}{2021}\natexlab{}.
\newblock \showarticletitle{Algorithmic fairness: Choices, assumptions, and
  definitions}.
\newblock \bibinfo{journal}{\emph{Annual Review of Statistics and Its
  Application}}  \bibinfo{volume}{8} (\bibinfo{year}{2021}),
  \bibinfo{pages}{141--163}.
\newblock


\bibitem[Neel et~al\mbox{.}(2019)]%
        {gerryfair_github}
\bibfield{author}{\bibinfo{person}{Seth Neel}, \bibinfo{person}{William Brown},
  \bibinfo{person}{Adel Boyarsky}, \bibinfo{person}{Arnab Sarker},
  \bibinfo{person}{Aaron Hallac}, \bibinfo{person}{Michael Kearns},
  \bibinfo{person}{Aaron Roth}, {and} \bibinfo{person}{Z.~Steven Wu}.}
  \bibinfo{year}{2019}\natexlab{}.
\newblock \bibinfo{title}{GerryFair: Auditing and Learning for Subgroup
  Fairness}.
\newblock
  \bibinfo{howpublished}{\url{https://github.com/algowatchpenn/GerryFair}}.
\newblock


\bibitem[Neill(2012)]%
        {neill2012fast}
\bibfield{author}{\bibinfo{person}{Daniel~B Neill}.}
  \bibinfo{year}{2012}\natexlab{}.
\newblock \showarticletitle{Fast subset scan for spatial pattern detection}.
\newblock \bibinfo{journal}{\emph{Journal of the Royal Statistical Society:
  Series B (Statistical Methodology)}} \bibinfo{volume}{74},
  \bibinfo{number}{2} (\bibinfo{year}{2012}), \bibinfo{pages}{337--360}.
\newblock


\bibitem[Obermeyer et~al\mbox{.}(2019)]%
        {obermeyer2019dissecting}
\bibfield{author}{\bibinfo{person}{Ziad Obermeyer}, \bibinfo{person}{Brian
  Powers}, \bibinfo{person}{Christine Vogeli}, {and} \bibinfo{person}{Sendhil
  Mullainathan}.} \bibinfo{year}{2019}\natexlab{}.
\newblock \showarticletitle{Dissecting racial bias in an algorithm used to
  manage the health of populations}.
\newblock \bibinfo{journal}{\emph{Science}} \bibinfo{volume}{366},
  \bibinfo{number}{6464} (\bibinfo{year}{2019}), \bibinfo{pages}{447--453}.
\newblock


\bibitem[Ramchand et~al\mbox{.}(2006)]%
        {ramchand2006racial}
\bibfield{author}{\bibinfo{person}{Rajeev Ramchand},
  \bibinfo{person}{Rosalie~Liccardo Pacula}, {and} \bibinfo{person}{Martin~Y
  Iguchi}.} \bibinfo{year}{2006}\natexlab{}.
\newblock \showarticletitle{Racial differences in marijuana-users’ risk of
  arrest in the United States}.
\newblock \bibinfo{journal}{\emph{Drug and alcohol dependence}}
  \bibinfo{volume}{84}, \bibinfo{number}{3} (\bibinfo{year}{2006}),
  \bibinfo{pages}{264--272}.
\newblock


\bibitem[Rudin et~al\mbox{.}(2020)]%
        {rudin2020age}
\bibfield{author}{\bibinfo{person}{Cynthia Rudin}, \bibinfo{person}{Caroline
  Wang}, {and} \bibinfo{person}{Beau Coker}.} \bibinfo{year}{2020}\natexlab{}.
\newblock \showarticletitle{The age of secrecy and unfairness in recidivism
  prediction}.
\newblock \bibinfo{journal}{\emph{Harvard Data Science Review}}
  \bibinfo{volume}{2}, \bibinfo{number}{1} (\bibinfo{year}{2020}),
  \bibinfo{pages}{1}.
\newblock


\bibitem[Runyan(2018)]%
        {runyan2018intersectionality}
\bibfield{author}{\bibinfo{person}{Anne~Sisson Runyan}.}
  \bibinfo{year}{2018}\natexlab{}.
\newblock \showarticletitle{What Is Intersectionality and Why Is It Important?}
\newblock \bibinfo{journal}{\emph{Academe}} \bibinfo{volume}{104},
  \bibinfo{number}{6} (\bibinfo{year}{2018}), \bibinfo{pages}{10--14}.
\newblock


\bibitem[Schuler and Rose(2017)]%
        {schuler2017targeted}
\bibfield{author}{\bibinfo{person}{Megan~S Schuler} {and}
  \bibinfo{person}{Sherri Rose}.} \bibinfo{year}{2017}\natexlab{}.
\newblock \showarticletitle{Targeted maximum likelihood estimation for causal
  inference in observational studies}.
\newblock \bibinfo{journal}{\emph{American journal of epidemiology}}
  \bibinfo{volume}{185}, \bibinfo{number}{1} (\bibinfo{year}{2017}),
  \bibinfo{pages}{65--73}.
\newblock


\bibitem[Speakman et~al\mbox{.}(2016)]%
        {speakman2016penalized}
\bibfield{author}{\bibinfo{person}{Skyler Speakman}, \bibinfo{person}{Sriram
  Somanchi}, \bibinfo{person}{Edward McFowland~III}, {and}
  \bibinfo{person}{Daniel~B Neill}.} \bibinfo{year}{2016}\natexlab{}.
\newblock \showarticletitle{Penalized fast subset scanning}.
\newblock \bibinfo{journal}{\emph{Journal of Computational and Graphical
  Statistics}} \bibinfo{volume}{25}, \bibinfo{number}{2}
  (\bibinfo{year}{2016}), \bibinfo{pages}{382--404}.
\newblock


\bibitem[Zhang and Neill(2016)]%
        {zhang2016identifying}
\bibfield{author}{\bibinfo{person}{Zhe Zhang} {and} \bibinfo{person}{Daniel~B
  Neill}.} \bibinfo{year}{2016}\natexlab{}.
\newblock \showarticletitle{Identifying significant predictive bias in
  classifiers}.
\newblock \bibinfo{journal}{\emph{arXiv preprint arXiv:1611.08292}}
  (\bibinfo{year}{2016}).
\newblock


\end{thebibliography}
\clearpage
\appendix

\ignore{

APPENDICES OUTLINE 

A Methods appendices
A.1 further details and limitations for methods used to generate $\hat$ I 
A.2 FSS for conditional bias scna
-A.2.1 formal definition of ALTSS
-A.2.2 Pseudocode of Scan Algorithm
A.3 Permutation testing
A.4 All conditional bias scan parameters 

B Evalutation Appendices
B.1 Adaption of Competing methods for benchmarks
B.2 explanation of sigma_true
B.3 Evaluation of n_{bias} and p_{bias}
B.4 Additional Evaluation Simulations
B.5 estimate of compute power

C Case Study of COMPAS Appendices
C.1 Additional information about Preprocessing of COMPAS Data
C.2 Full results of COMPAS case study
C.3 Consideration and Limitations of COMPAS Usage
}

\section{Methods Appendices}

\subsection{Details about the Method for Generating $\hat{I}$ used in Section~\ref{methods-generating-hat-p-section} and its Limitations} \label{generating_i_hat_appendix}

The method presented in Section~\ref{methods-generating-hat-p-section} describes how to estimate $\hat{I_i}$, the expectation of the event variable $I_i$ for each individual $i$ in the protected class, under the null hypothesis, $H_0$, of no bias (i.e., $I \perp A \:|\: C, X$).  Using the estimated $\hat{I}$ and observed $I$, we can determine which subgroups in the protected class have the largest deviations in $I$ as compared to what we would expect if there was no bias, $\hat{I}$.  The method to generate $\hat{I}$ borrows from the literature on causal inference in observational settings, where propensity score reweighting is used to account for the selection of individuals into a ``treatment'' condition (here, membership in the protected class) given their observed covariates $X$. 

The method to estimate $\hat{I}$ consists of the following steps:
\begin{enumerate}
  \item \label{prop_score_model} Train a predictive model using all the individuals in the data to estimate $\mbox{Pr}(A =1 \:|\: X)$.
  \item Use this model to produce the probabilities, $p_i^A = \mbox{Pr}(A_i=1 \:|\: X_i)$, and the corresponding propensity score weights, $w_i^A = \frac{p_i^A}{1-p_i^A}$, for each individual $i$ in the non-protected class ($A_i=0$). Intuitively, individuals in the non-protected class whose attributes $X_i$ are more similar to individuals in the protected class have higher weights $w_i^A$.
  This weighting scheme is used in the literature to produce causal effect estimates that can be interpreted as the average treatment effect on treated individuals (ATT) under typical assumptions of positivity and strong ignorability.
  \item \label{H_0_step}If the event variable, $I$, is binary (i.e., for all sufficiency scans and separation scan for recommendations), we train a model using only data for individuals in the non-protected class ($A_i=0$) to estimate $\mathbb{E}_{H_0}[I \:|\: C, X]$ by weighting each individual $i$ in the non-protected class by $w_i^A$.\ignore{ We use the weights $w_i^A$ to weight the data used to train the model during the fitting process.}  The trained model is used to estimate the expectations $\hat I_i = \mathbb{E}_{H_0}[I_i \:|\: C_i, X_i]$ for each individual in the protected class ($A_i =1$) under the null hypothesis, $H_0$, of $I \perp A \:|\: (C, X)$.

  \item \label{H_0_step_alt} For the separation scan for predictions, we have a real-valued event variable, the probabilistic predictions $P$,  rather than a binary event variable. We use a similar but modified process to estimate $\mathbb{E}_{H_0}[I\:|\:C, X]$, where $I=P$ and $C=Y$. For each individual $i$ in the non-protected class, we create two training records containing the same covariates $X_i$, but different labels and associated weights:
  \begin{enumerate}
      \item For the first record, we set the label, $I_{i_{+}}^{temp}$, equal to 1, and set the weight to $w_{i}^A P_i$.
      \item For the second record, we set the label, $I_{i_{-}}^{temp}$, equal to 0, and set the weight to $w_{i}^A(1- P_i)$
  \end{enumerate} 
  We create a dataset that includes both records for each individual in the non-protected class and their associated weights,\ignore{We concatenate both copies of the variables, their event variables and modified weights together} and use this concatenated data set to train a model that estimates $\mathbb{E}_{H_0}[I^{temp}\:|\:C, X]$, by weighting each individual $i$ in the non-protected class by either $w_{i}^A P_i$ or  $w_{i}^A(1- P_i)$ as described above. This approach is consistent with other CBS variants and enforces the desired constraint $0 \le \hat I_i \le 1$, unlike alternative approaches such as using regression models to predict $P$.\ignore{ this approach is consistent with other CBS variants and enforces the desired constraint $0 \le \hat I_i \le 1$.}
  \end{enumerate}
  For value-conditional scans, CBS audits for biases in the subset of data where $C=z$, for $z \in \{0,1\}$.  Dataset $D$ is filtered before Step~\ref{H_0_step} to only include individuals where $C=z$. For example, for the value-conditional scan for FPR, we filter the data to only include individuals where $C=0$ (or equivalently, $Y=0$). \ignore{Lastly, for scans where we are auditing on a negative event, such as the true negative rate or false discovery rate scans, the steps are identical except we train our model to estimate $\mbox{Pr}(I=0\:|\:X', C=z)$, and then $\hat{P} = \{p_i = \mbox{Pr}(I_i = 0 \:|\: x_i', C_i) \forall i \:|\: r_{i_k} =1, C_i = z\}$.}
  
A probabilistic model can be used to estimate $\mbox{Pr}(A =1 \:|\: X)$ in Step~\ref{prop_score_model}, and a probabilistic model that allows for weighting of instances during training can be used to estimate $\mathbb{E}_{H_0}[I \:|\: C, X]$ in Steps~\ref{H_0_step} and~\ref{H_0_step_alt}. For Sections~\ref{experiments_validation} and \ref{compas_experiment_section}, as well as Appendices~\ref{varying_evaluation} and \ref{additional_validation_setups},  we use logistic regression to estimate $\mbox{Pr}(A =1 \:|\: X)$  and weighted logistic regression to estimate $\mathbb{E}_{H_0}[I \:|\: C, X]$. When estimating $\mathbb{E}_{H_0}[Y \:|\: P, X]$ (the realized expectation of $\mathbb{E}_{H_0}[I \:|\: C, X]$) for sufficiency scan for predictions, we transform the conditional variable, $P_i$, to its corresponding log-odds, $\log \frac{P_i}{1-P_i}$, prior to training, since we expect $\log \frac{Y_i}{1-Y_i}$ (the target of the logistic regression) to be approximately $\log \frac{P_i}{1-P_i}$ for well-calibrated classifiers.
Alternative prediction models, such as random forests with Platt scaling for calibration of probability estimates, could also be used in place of logistic regression.

The method described above has the limitation of only producing accurate estimates of $\hat{I}$ when both the model for $\mbox{Pr}(A =1 \:|\: X)$ and $\mathbb{E}_{H_0}[I \:|\: C, X]$ are well-specified.  Accurate estimates of $\hat{I}$ are essential for CBS to accurately detect the subgroup in the protected class with the most deviation between the observed $I$ and estimated $\hat{I}$ under the null hypothesis of no bias.  Given the consistency of our findings for the COMPAS case study in Section~\ref{compas_experiment_section} with other researchers' findings about COMPAS, as well as other checks we have performed to examine $\hat{I}$, we believe the method above suffices for our research purposes.  We encourage others using CBS to be aware of this limitation, pay special consideration to estimates of $\hat{I}$, and if necessary, employ methods from the causal inference literature on doubly robust estimation (such as  \citep{imbens2004nonparametric,schuler2017targeted}) or methods from the computer science literature for model calibration when producing estimates of $\hat{I}$. 
\ignore{Additionally, simulations similar to those described in Section~\ref{experiments_validation} could be used as a cross validation process for finding models and hyperparameters to estimate $\hat{P}$.
The method above is one approach for producing $\hat{P}$ that follows the steps often used in statistical analysis literature to produce inverse propensity scores that are later used to calculate average treatment effects. There have been advances in this field, specifically for addressing issues of robustness of estimates when models are mis-specified. One example of a mis-specified model in our outlined method above would be if some of the $p_{i}^{r}$'s poorly estimate the propensity of protected class membership and therefore some of the $W^r$'s are too high or too low and resultantly the model estimating $\hat{P}$ is poorly fit as well. Some of the research that addresses these issues of mis-specification and robustness include normalization methods for $W^r$ and alternative processes for calculating $\hat{P}$ ~\citep{imbens2004nonparametric,schuler2017targeted}. }
\subsection{Fast Subset Scanning for Conditional Bias Scan}
\label{fss_for_cbs_appendix}

\ignore{
OUTLINE FOR EDITING
 -- short intro sentence to section
 -- introduction to computational problem
 -- generalized walk through FSS
 -- specialization for CBS
}

In this section, we explain the fast subset scanning (FSS) algorithm that CBS uses to find the subgroup of the protected class with the most biased predictions or recommendations~\citep{neill2012fast}.\ignore{ We will describe the algorithm at a high level, including functionality that is not specific to CBS, with notation and context specific to its implementation for CBS. } We will introduce FSS using a simplified example, for illustrative purposes, to highlight the computational difficulties inherent in subset scanning, the additive property of the score functions for CBS that enable computationally feasible subset scanning,  and the implementation of FSS for CBS.

Let us assume a dataset of individuals in the protected class ($A=1$), denoted as $Q=\{(X^1,I,\hat{I})\}$, that contains values of the event variable $I_i$, estimates $\hat{I}_i$ of the expected value of the event variable under the null hypothesis of no bias, and  
 a single categorical covariate attribute $X_{i}^1$ for each individual $i$. 
For concreteness, we perform a sufficiency scan for predictions, therefore, the event variable $I_i$ is the observed binary outcome $Y_i$ for individual $i$, and the corresponding $\hat{I}_i$ is the estimated $\Pr(Y_i = 1 \:|\: P_i, X_i)$ under the null hypothesis $H_0$ that $Y \perp A \:|\: (P,X)$.  $S$ refers to a subgroup of $Q$, which in our simple example is a non-empty subset of values for attribute $X^1$.  Since our event variable is binary, we use the Bernoulli likelihood function to represent the hypotheses in the score function, $F(S)$, used to determine the level of anomalousness of a subgroup $S$ of $Q$.


In the worst-case scenario, $X^1$ could be a categorical variable with distinct values for each of the $n$ rows of data in $Q$. If we were to score all of the possible $S \subseteq Q$ using $F(S)$, this method would have a runtime of $O(2^n$), which would be computationally infeasible.  To overcome this computational barrier, FSS relies on its score functions, $F(S)$, being a part of an efficiently optimizable class of functions in order to find the most anomalous subset $S^\ast = \arg\max_{S\subseteq Q} F(S)$ without the need to evaluate all of the subsets of $Q$. The property that determines if a function is a part of this class that enables fast subset scanning is called Additive Linear-Time Subset Scanning (ALTSS)~\citep{speakman2016penalized} and is formally defined in Appendix~\ref{altss_defintion}.  Informally, if $F(S)$ can be represented as an additive set function over all instances $i \in S$ when conditioning on the free parameter ($q$ for the Bernoulli distribution or $\mu$ for the Gaussian distribution in Table~\ref{methods-score_function_table}), it satisfies this property~\citep{speakman2016penalized}.


To explore how FSS exploits the ALTSS property for computationally efficient subset scanning, assume that the categorical covariate $X^1$ for each individual $i$ can only be equal to one of four values, $X_i^1 \in \{a,b,c,d\}$. FSS constructs a subset for each attribute value of $X^1$ such that $S_a=\{i \in Q: X_i^1=a\}$, $S_b=\{i \in Q: X_i^1=b\}$, $S_c=\{i \in Q: X_i^1=c\}$, $S_d=\{i \in Q: X_i^1=d\}$.  Since we are using the likelihood function for the Bernoulli distribution for $F(S)$, $F(S)$ is a concave function of the free parameter $q$, and for illustrative purposes, we will assume that $\max_q F(S)$ is positive for all subsets $S_a,S_b, S_c$ and $S_d$. Therefore, for each subset $S_a,S_b, S_c$ and $S_d$, $F(S)$ is a function over the domain of $q$, where as $q$ increases from $-\infty$, $F(S)$ eventually equals $0$ and then the global maximum for $F(S)$ for that given subset, and then starts decreasing until it again reaches a point where $F(S)=0$, and then remains negative as $q$ approaches $\infty$. FSS identifies three $q$ values for each subset, $S \in \{S_a,S_b,S_c,S_d\}$:
\begin{enumerate}
    \item The first value of $q$ where $F(S)=0$ as $q$ increases from $-\infty$ to $\infty$, which we will refer to as $q_{min}$.
    \item The second value of $q$ where $F(S)=0$ as $q$ increases from $-\infty$ to $\infty$, which we will refer to as $q_{max}$.
    \item The value of $q$ for $\argmax_q F(S)$, which we will refer to as $q_{\text{MLE}}$.
\end{enumerate}

Each distinct $q_{min}$ and $q_{max}$ value for subsets ($S_a$, $S_b$, $S_c$, $S_d$) is a value of $q$ where the score function $F(S)$ becomes negative or positive for at least one of these four subsets. By sorting all of the distinct $q_{min}$ and $q_{max}$ values across all the subsets ($S_a$, $S_b$, $S_c$, $S_d$) in ascending order, we construct a list of $q$ values, $\{q_{(1)}, ..., q_{(m)}\}$, where each pair of adjacent values, $q_{(k)}$ and $q_{(k+1)}$, represents an interval of the $q$ domain, $(q_{(k)},q_{(k+1)})$, for which each subset $S \in \{ S_a,S_b,S_c,S_d\}$ has either $F(S) > 0$ for the entire interval or $F(S) < 0$ for the entire interval. 
For each interval, we perform the following:
\begin{enumerate}
    \item Find the midpoint of the interval (average of $q_{(k)}$ and $q_{(k+1)}$), which we refer to as $q_{k}^{\text{mid}}$. 
    \item Create a new subset $S_{k}^{\text{aggregate}}$ by aggregating all subsets $S \in\{S_a, S_b, S_c, S_d\}$ where the subset's $q_{min} < q_{k}^{\text{mid}}$ and the subset's $q_{max} > q_{k}^{\text{mid}}$, i.e., $F(S) > 0$ when $q = q_{k}^{\text{mid}}$ and therefore for the entire interval ($q_{(k)}$, $q_{(k+1)}$).  
    
    Since the score function is additive, conditioned on $q$, we know that a subset $S$ will make a positive contribution to the score $F(S_{k}^{\text{aggregate}})$ if and only if $F(S) > 0$ for that value of $q$. Thus, we know that the highest scoring subset $S_{k}^{\text{aggregate}}$ for that interval $[q_{(k)},q_{(k+1)}]$ contains all and only those subsets $S$ with $F(S)>0$ at $q=q_{k}^{\text{mid}}$. 
    \item Find the maximum likelihood estimate of $q$, $q_{\text{MLE}}^{\text{aggregate}} = \argmax_q F(S_{k}^{\text{aggregate}})$, and the corresponding score $F(S_{k}^{\text{aggregate}})$. 
\end{enumerate}
\ignore{the midpoint of the interval (average $q$ between the beginning and end of an interval), which we refer to as $q_{\text{mid}}$. From calculating $q_{min}$ and $q_{max}$ for each singular attribute-value subset, we know which singular attribute-value subset has a positive score function $F(S)$ at $q_{\text{mid}}$, and therefore we can create a new subset for each interval that includes all the subsets that have positive score functions at $q_{\text{mid}}$. We will then recalculate the score functions for these new subsets that are associated with the different intervals and find the $q_{\text{MLE}}^{\text{aggregate}}$ that maximizes the score functions for these new subsets constructed from the intervals over the $q$ domain.}The aggregate subset, $S_{k}^{\text{aggregate}}$, with the highest score for $F(S)$ using its associated $q_{\text{MLE}}^{\text{aggregate}}$ is the most anomalous subset when considering subsets formed by combinations of different attribute-values of $X^1$.

For our simplified example, there are at most 8 distinct $q_{min}$ or $q_{max}$ values from the four subsets ($S_a$, $S_b$, $S_c$, $S_d$), and thus at most 7 distinct intervals $(q_{(k)},q_{(k+1)})$ that must be considered.  For a given interval, we need to evaluate only a single subset $S_{k}^{\text{aggregate}}$, and thus, only 7 of the 15 non-empty subsets of $\{S_a, S_b, S_c, S_d\}$.  More generally, if $n$ is the arity (number of attribute values) of categorical attribute $X^1$, at most $2n-1$ of the $2^n - 1$ non-empty subsets of attribute values must be evaluated to identify the highest-scoring subgroup.

The scenario where the covariates consist of a single categorical attribute is a simplified example, where only a single iteration of FSS is needed to find the optimal subset, $S^{*}$, of $Q$.  When there are two or more attributes for the covariates, multiple iterations of FSS must be performed to find the optimal subset.  On each iteration the following is performed: 
\begin{enumerate}
    \item We define an initial subset, $S_{temp}$ where:
    \begin{enumerate}
        \item If it is the first iteration, all of the attribute values for each attribute are included in $S_{temp}$.
        \item Otherwise, a random subset of attribute values for each attribute are chosen to be included in $S_{temp}$.
    \end{enumerate}
    \item For each attribute $X^i$, in random order, we construct subsets by partitioning $S_{temp}$ by the distinct attribute values of $X^i$, form intervals across the domain of $q$ for $F(S)$, and then assemble and score the subsets for each interval (as described above).  $S_{temp}$ is updated as higher scoring subsets using $F(S)$ are found.  Therefore, when an attribute is evaluated, $S_{temp}$ contains only rows of $Q$ that fit the found criteria (in the form of attribute values) from previously evaluated attributes, excluding the attribute currently under consideration.  This iterative ascent procedure is repeated until convergence.
\end{enumerate}
\ignore{only To find the optimal subset when the covariates consist of more than a singular categorical attributes, we start by including all the attributes value (ie. all the rows of $Q$) in the initial subset, $S_{temp}$ to be evaluated (on the first iteration of FSS) or a random subsets of attribute values to be evaluated (on the second and succeeding iterations of FSS). In a random order, we cycle through each of the attributes and for each attribute, we perform the process described above of constructing subsets for each singular attribute value, then forming intervals, and assembling subsets for each interval to find the subset with the highest score for $F(S)$. For each iteration of FSS, we maintain the current optimal subset, $S_{\text{temp}}$, found up to that point during that given iteration, and update it after each attribute is scanned depending on if another subset is found with a higher score for $F(S)$. After an attribute is scanned on a given iteration, and the attribute values of that attribute are detected that form the criteria for the highest scoring subset, $S_{\text{temp}}$, using $F(S)$ (when only considering the attribute values of that attribute), the subsequent scans across other attributes will search for subsets of the subset, $S_{\text{temp}}$, formed by the criteria of attribute values found for the previously scanned attributes.}\ignore{ Therefore, when scanning intervals for a singular attribute, the attribute-value subsets we calculate the score function for are the singular attribute-value subsets of the $S_{\text{temp}}$ found up to that point.} Multiple iterations are performed with the final optimal subset being the subset with the highest score using $F(S)$ found across all the iterations, $S^{\ast}$. For the pseudocode of FSS for CBS, reference Appendix~\ref{pseudo-code_CBS}. The final results from FSS are the optimal subset, $S^{*}$, in the form of attribute-values that form the criteria for the subgroup in the protected class with the most anomalous bias detected, the parameter $q$ or $\mu$ that maximizes $F(S^{*})$, and the score $F(S^{*})$ given the parameter $q$ or $\mu$.

\subsubsection{Formal Definition of Additive Linear-Time Subset Scanning Property (ALTSS)} \label{altss_defintion}

Below we provide a formal definition of the Additive Linear-Time Subset Scanning Property. The score functions, $F(S)$, used to evaluate subgroups are a log-likelihood ratio formed from two different hypotheses whose likelihoods are modeled by likelihood functions for either the Bernoulli distribution or Gaussian distribution, both of which satisfy the Additive Linear-time Subset Scanning Property~\cite{speakman2016penalized, zhang2016identifying}.

\begin{definition}[Additive Linear-time Subset Scanning Property] \label{altss_prop_definition}
A function, $F:S\times\theta\to\mathbb{R}_{\geq 0}$, that produces a score for a subset $S \subseteq D$, where $D$ is a set of data and $\theta = \argmax_{\theta} F(S\:|\:\theta)$, satisfies the Additive Linear-time Subset Scanning Property if $F(S\:|\:\theta) = \sum_{s_i \in S}F(s_i\:|\:\theta)$ where $s_i$ is a subset of $S$ and $\forall s_i, s_j \in S$, where $s_i \neq s_j$, we have $s_i \cap s_j = \emptyset$.
\end{definition}
\ignore{
For the Conditional Bias Scans that use the Bernoulli distribution to form the log-likelihood ratio scan statistic, $\theta$ in Definition~\ref{altss_prop_definition} is the free parameter $q$, where $q$ is restricted to $q>1$ for detecting biases in the positive direction or under-estimation bias, and $q$ is restricted to $0<q<1$ for detecting biases in the negative direction or over-estimation bias. For the Conditional Bias Scans that use the Gaussian distribution to form the log-likelihood ratio scan statistic, $\theta$ in Definition~\ref{altss_prop_definition} is the free parameter $\mu$, where $\mu$ is restricted to $\mu>0$ for detecting biases in the positive direction or under-estimation bias, and $\mu$ is restricted to $\mu<0$ for detecting biases in the negative direction or over-estimation bias.}
We refer to the score functions, $F(S)$, contained in the rightmost column of Table~\ref{methods-score_function_table} as $F(S\:|\:\mu)$ for the score functions that use the Gaussian likelihood function to form hypotheses and $F(S\:|\:q)$ for the score functions that use the Bernoulli likelihood function to form hypotheses. $F(S\:|\:q)$ contains a summation, $\sum_{i \in S}(I_i \log q - \log(q\hat{I_i} - \hat{I_i} + 1))$, that is the sum of individual-specific values derived from $I_i$, $\hat{I_i}$, and $q$. Given that each individual is distinct, $F(S\:|\:q)=\sum_{i\in S} F(s_i\:|\:q)$, where $s_i$ is the subset of $S$ that contains only individual $i$, satisfies the ALTSS property. Similarly, $F(S\:|\:\mu)$ contains a summation, $\sum_{i \in S}\Delta_i$, that is the sum of individual-specific values $\Delta_i$ derived from $I_i$, $\hat{I_i}$, and $\mu$. Therefore $F(S\:|\:\mu) = \sum_{s_i \in S}F(s_i\:|\:\mu)$, where $s_i$ is the subset of $S$ that contains only individual $i$, satisfies the ALTSS property.

\subsubsection{Pseudocode of Fast Subset Scan Algorithm for Conditional Bias Scan} \label{pseudo-code_CBS}

\begin{algorithm}
\caption{Fast Subset Scan for Conditional Bias Scan}\label{alg:cap}
\begin{algorithmic}[1]
\Require $n_{iters} > 0, (X_i, \hat{I_i}$, $I_i)\: \forall i \in D \text{ where } A_i = 1, direction \in \{\text{positive},\text{negative}\}$
\State $S^{*} \gets \{\}$ \label{lines:global1}
\State $Score^{*} \gets -\infty$
\State $\theta^{*} \gets -\infty$ \label{lines:global2}
\For{$j \gets 1 \ldots n_{iters}$}
 \If{$j == 1$} \label{lines:randominitalization}
 \State $S_{temp} \gets \text{all attribute-values for each attribute in } X$
 \Else
 \State $S_{temp} \gets \text{random nonempty subset of attribute-values for each attribute in } X$ 
 \EndIf
\State $\theta_{temp} \gets \argmax_{\theta}(F(S_{temp}\:|\:\theta))$
\State $Score_{temp} \gets F(S\:|\:\theta_{temp})$
\State $n_{attributes} \gets \text{number of attributes in X}$
\State $n_{scanned} \gets 0$ \Comment{mark all attributes as unscanned}
\While{$n_{scanned} < n_{attributes}$} \label{line:whileattr}
 \State $X_{temp} \gets \text{randomly selected attribute that is marked as unscanned}$ \label{line:randattribute}
 \For{$X_{temp_i} \in X_{temp}$} \Comment{for all attribute-values in $X_{temp}$} \label{lines:forloop1}
 \State $S_{X_{temp_i}} \gets S_{temp}^{relaxed} \cap \{ i \in D : X_{temp} = X_{temp_i}\}$ \Comment{see text of Appendix~\ref{pseudo-code_CBS} for definition of $S_{temp}^{relaxed}$}
 \State $\theta_{min_i}, \theta_{max_i} \gets 
 \arg_{\theta}(F(S_{X_{temp_i}}\:|\: \theta)=0)$ \Comment{exception noted in Appendix~\ref{pseudo-code_CBS}}
 \State $\theta_{MLE_i} = \argmax_{\theta}(F(S_{X_{temp_i}}\:|\: \theta))$
 \State $Score_{i} \gets F(S_{temp_i}\:|\:\theta_{MLE_i})$
 \State Adjust $\theta_{min_i}$ and $\theta_{max_i}$ depending on the $direction$ of scan \Comment{ explained in text of Appendix~\ref{pseudo-code_CBS}} \label{lines:forloop1end}
 
 \EndFor
 \State $\theta_{intervals} \gets \{\theta_{min_i}, \theta_{max_i} \forall X_{temp_i} \in X_{temp} \}$ in ascending order \label{line:sortthresholds}\Comment{all values of $\theta$ where $F(S) =0$ $\forall X_{temp_i} \in X_{temp}$, indexed by $\theta_{(k)}$ below}
 \State $Score_{interval} \gets -\infty$
 \State $S_{interval} \gets \{\}$
 \State $\theta_{interval} \gets -\infty$ \Comment{not to be confused with $\theta_{intervals}$}
 \For{$k \gets 1 \ldots length(\theta_{intervals}) -1$}
 \State $S_{k}^{\text{aggregate}} \gets \{\}$
 \State $\theta_{k}^{\text{mid}} \gets \frac{\theta_{(k)} + \theta_{(k+1)}}{2}$
 \For{$X_{temp_i} \in X_{temp}$} \label{lines:interval_scan}
 \If{$ Score_{i} > 0$ and $\theta_{min_i} < \theta_{k}^{\text{mid}}$ and $\theta_{max_i} > \theta_{k}^{\text{mid}}$}
 \State $S_{k}^{\text{aggregate}} \gets S_{k}^{\text{aggregate}} \cup S_{X_{temp_i}}$
 \EndIf \label{lines:interval_scanend}
 \EndFor
 \State $\theta_{k}^{\text{aggregate}} \gets \argmax_{\theta}(F(S_{k}^{\text{aggregate}} \:|\: \theta)) $
 \State $Score_{k}^{\text{aggregate}} \gets F(S_{k}^{\text{aggregate}} \:|\: \theta_{k}^{\text{aggregate}})$
 \If{$Score_{k}^{\text{aggregate}} > Score_{interval}$}
 \State $Score_{interval} \gets Score_{k}^{\text{aggregate}}$
 \State $S_{interval} \gets S_{k}^{\text{aggregate}}$
 \State $\theta_{interval} \gets \theta_{k}^{\text{aggregate}}$
 \EndIf
 \EndFor
 \If{$Score_{temp} < Score_{interval}$}
 \State $Score_{temp} \gets Score_{interval}$
 \State $S_{temp} \gets S_{interval}$
 \State $\theta_{temp} \gets \theta_{interval}$
 \State $n_{scanned} \gets 0$ \Comment{mark all attributes as unscanned}
 \EndIf
 \algstore{myalg}
 \end{algorithmic}
 \end{algorithm}
 \begin{algorithm}
 \begin{algorithmic}[1]
 \algrestore{myalg}
 \State $n_{scanned} \gets n_{scanned} + 1$ \Comment{mark attribute $X_{temp}$ as scanned}
\EndWhile
\If{$Score^{\ast} < Score_{temp}$}
\State $Score^{\ast} \gets Score_{temp}$
\State $S^{\ast} \gets S_{temp}$
\State $\theta^{\ast} \gets \theta_{temp}$
\EndIf
\EndFor
\State \textbf{return} $S^{\ast}, Score^{\ast}, \theta^{\ast}$
\end{algorithmic}
\end{algorithm}

Algorithm~\ref{alg:cap} is the pseudocode for the Fast Subset Scan (FSS) algorithm used in the CBS framework~\citep{neill2012fast}. The algorithm finds the subgroup, $S^{\ast}$, with the most anomalous signal (i.e., the highest score $F(S^\ast)$) in a dataset.  For CBS, this signal is in the form of a bias (according to one of the fairness definitions in Table~\ref{methods-overview_of_scans_scan_table}) against members of the protected class ($A=1$) for subgroup $S^\ast$. The dataset passed to the FSS algorithm by CBS contains only individuals $i$ in the protected class, and FSS compares their values of the event variable $I_i$ to the estimated expectations $\hat{I}_i$ under the null hypothesis of no bias.

At the initialization of FSS, placeholder variables are created that will hold the most anomalous subset ($S^{*}$), and the subset's corresponding information ($\theta^{\ast}$, $Score^{\ast}$), across all iterations (Lines~\ref{lines:global1}-\ref{lines:global2}). At the beginning of an iteration, a random subset is picked (set of attribute-values) as the starting subset, $S_{temp}$, with the exception of the first iteration where the starting subset includes all attribute values, as shown in the if-else statement starting on Line~\ref{lines:randominitalization}.
For each iteration of this algorithm, we repeatedly choose a random attribute to scan (i.e., we scan over subsets of its attribute values) as shown in Lines~\ref{line:whileattr}-\ref{line:randattribute}, until convergence (i.e., when all attributes have been scanned without increasing the score $F(S_{temp})$). 

For each attribute $X_{temp}$ to be scanned, for each of its attribute values $X_{{temp}_i}$, we score the subset $S_{X_{{temp}_i}}$ containing only the records with the given value of that attribute ($X_{temp} = X_{{temp}_i}$), and matching subset $S_{temp}$ on all other attributes in $X$.  We write this as $S_{X_{{temp}_i}} \gets S_{temp}^{relaxed} \cap \{ i \in D : X_{temp} = X_{temp_i}\}$, where 
$S_{temp}^{relaxed}$ is the relaxation of subset $S_{temp}$ to include all values for attribute $X_{temp}$. Along with scoring this attribute-value subset $S_{X_{{temp}_i}}$, we find the two values of $\theta$ where $F(S_{X_{{temp}_i}})=0$, $\theta_{min_i}$ and $\theta_{max_i}$, and the $\theta$ that maximizes $F(S_{X_{{temp}_i}})$, $\theta_{MLE_i}$, with the exception of attribute-value subsets $S_{X_{{temp}_i}}$ that are not positive for any value of $\theta$. This is shown in the for-loop in Lines~\ref{lines:forloop1}-\ref{lines:forloop1end}. 

Line~\ref{lines:forloop1end} states that $\theta_{min_i}$ and $\theta_{max_i}$ must be adjusted according to the direction of the scan to enforce that the found parameters $\theta_{min_i}$ and $\theta_{max_i}$ adhere to the restrictions set by the direction of the scan. The constraints necessary for the scans to detect biases in the positive and negative directions are fully specified in Table~\ref{methods-score_function_table}.  For positive scans that have score functions that utilize the Gaussian likelihood function to form hypotheses, $\theta_{min_i} = \max(0,\theta_{min_i}$) and for negative scans that utilize the Gaussian likelihood function, $\theta_{max_i} = \min(0,\theta_{max_i}$). For positive scans that have score functions that utilize the Bernoulli likelihood function to form hypotheses, $\theta_{min_i} = \max(1,\theta_{min_i}$) and for negative scans that utilize the Bernoulli likelihood function, $\theta_{max_i} = \min(1,\theta_{max_i}$). Attribute-value subsets $S_{X_{{temp}_i}}$ should not be considered when choosing subsets for $S^{\text{aggregate}}$ for positive scans where $\theta_{max_i} < 0$ or $\theta_{max_i} < 1$ for scans using the Gaussian likelihood function or Bernoulli likelihood function in $F(S)$, respectively. Conversely, attribute-value subsets $S_{X_{{temp}_i}}$ should not be considered when choosing subsets for $S^{\text{aggregate}}$ for negative scans where $\theta_{min_i} > 0$ or $\theta_{min_i} > 1$ for scans using the Gaussian likelihood function or Bernoulli likelihood function in $F(S)$, respectively. 

We sort the $\theta_{min_i}$ and $\theta_{max_i}$ values found across all the attribute values of the attribute we are scanning in ascending order in Line~\ref{line:sortthresholds}. These form a list of intervals over the domain of $\theta$. For each interval, we calculate a midpoint of that interval, and aggregate all the attribute-value subsets that have a positive score, $F(S)$, when $\theta$ equals the midpoint of that interval in Lines~\ref{lines:interval_scan}-\ref{lines:interval_scanend}. If the aggregated subset of attribute values with the maximum score across all the intervals is greater than the score of $S_{temp}$, we update $S_{temp}$ and all of its accompanying information ($\theta_{temp}$, $Score_{temp}$) to equal the maximum-scoring subset of aggregated attribute-values across all the intervals and its accompanying information. $S_{temp}$ is continuously updated as higher scoring subsets are found as we scan over all the attributes and their attribute values. 

At the end of an iteration, if the found subset, $S_{temp}$, has a higher score than the global maximum scoring subset $S^{\ast}$, then $S^{\ast}$ and its accompanying information ($\theta^{\ast}$, $Score^{\ast}$) are replaced with $S_{temp}$ and $S_{temp}$'s accompanying information. Once all the iterations have completed, the subset with the maximum score found across all iterations is returned, $S^{\ast}$, with its score $F(S^{\ast}\:|\:\theta^{\ast})$ and accompanying $\theta^{\ast}$ parameter.

McFowland et al. show that a similar multidimensional scan algorithm, used for heterogeneous treatment effect estimation, will converge with high probability to a near-optimal subset when run with multiple iterations~\citep{mcfowland2018efficient}. 

\subsection{Permutation Testing to Determine Statistical Significance of Detected Subgroups} \label{permutation_testing}

As discussed in Section~\ref{methods_cbs_score_functions_altss_section}, the statistical significance ($p$-value) of the discovered subgroup $S^\ast$ can be obtained by \emph{permutation testing}, which correctly adjusts for the multiple testing resulting from searching over subgroups. To do so, we generate a large number of simulated datasets under the null hypothesis $H_0$, perform the same CBS scan for each null dataset (maximizing the log-likelihood ratio score over subgroups, exactly as performed for the original dataset), and compare the maximum score $F(S^\ast)$ for the true dataset to the distribution of maximum scores $F(S^\ast)$ for the simulated datasets. The detected subgroup is significant at level $\alpha$ if its score exceeds the $1-\alpha$ quantile of the $F(S^\ast)$ values for the simulated datasets.  To generate each simulated dataset, we copy the original dataset and randomly permute the values of $A_i$ (whether or not each individual is 
a member of the protected class), thus testing the null hypothesis that $A$ is conditionally independent of the event variable $I$.

This permutation testing approach is computationally expensive, multiplying the runtime by the total number of datasets (original and simulated) on which the CBS scan is performed, but it has the benefit of bounding the overall false positive rate (family-wise type I error rate) of the scan while maintaining high detection power.  (In comparison, a simpler approach like Bonferroni correction would also bound the overall false positive rate, and would require much less runtime, but would suffer from dramatically reduced detection power.)  For a given dataset, the score threshold for significance at a fixed level $\alpha=.05$ will differ for different choices of the sensitive attribute and protected class. Thus, if CBS is used to audit a classifier for possible biases against multiple protected classes, a separate permutation test must be performed for each protected class value.

\clearpage
\subsection{Conditional Bias Scan Framework Parameters} \label{framework_cbs_parameters}

\begin{table}[h!]
\scalebox{.825}{
\begin{tabular}{|p{3.5cm}|p{5.5cm}|p{4cm}|p{2cm}|}
\hline
Parameter & Purpose & Parameter Attribute Values & Sections for Reference \\
\hline
Membership in Protected Class Indicator Variable ($A$)& Binary attribute which defines whether each individual is a member of the protected class. We wish to identify any biases that are present in the classifier's predictions or recommendations that impact the protected class. && \ref{methods} \\
\hline
Scan Type & The subcategory of the scan type & Separation scan for recommendations; Separation scan for predictions; 
Sufficiency scan for recommendations; 
Sufficiency scan for predictions & \ref{methods-overview_of_scan_types} \\
\hline
Event Variable ($I$) & The event of interest for the scan. The abstracted event variable must be defined as either the outcome, prediction, or recommendation variable. & $Y$; $P$; $P_{bin}$ & \ref{methods}, \ref{methods-overview_of_scan_types} \\
\hline
Conditional Variable ($C$) & The conditional variable for the scan. The abstracted conditional variable must be defined as either the outcome, prediction, or recommendation variable. & $Y$; $P$; $P_{bin}$ & \ref{methods}, \ref{methods-overview_of_scan_types} \\
\hline
Field value ($z$) of Conditional Variable ($C=z$) & For value-conditional scans, this is the value on which we are conditioning the conditional variable ($C$). Defining a field value results in scans that detect different forms of fairness violations. & None; 0; 1 & \ref{methods}, \ref{methods-generating-hat-p-section}, \ref{methods_cbs_score_functions_altss_section}, \ref{fss_for_cbs_appendix} \\

\hline

List of Attributes for forming subgroups ($X$) & List of attributes to scan over to form subgroups & &\ref{methods}, \ref{methods-overview_of_scan_types}, \ref{fss_for_cbs_appendix}\\
\hline
Direction of Bias & Specifying whether we are detecting under-estimation bias (positive direction) or over-estimation bias (negative direction) & Positive; Negative & \ref{methods-overview_of_scan_types}, \ref{methods_cbs_score_functions_altss_section}, \ref{fss_for_cbs_appendix} \\
\hline
List of Attributes for estimating $\hat{I}$ ($X$) & List of attributes used for conditioning when producing $\hat{I}$. In this paper we use the same attributes to form subgroups and produce $\hat{I}$. This does not necessarily have to be the case for all applications of CBS. & & \ref{methods-generating-hat-p-section}, \ref{generating_i_hat_appendix} \\
\hline
Subgroup Complexity Penalty & The non-negative integer-valued scalar penalty that is subtracted from the score function for each subgroup, depending on the subgroup's total number of included values for each covariate $X^1 \ldots X^m$, not including covariates for which all values are included. & $0+$ & \ref{methods_cbs_score_functions_altss_section} \\
\hline
Scan Iterations & Specifies the number of iterations to run the fast subset scanning algorithm & $1+$ &  \ref{methods_cbs_score_functions_altss_section}, \ref{fss_for_cbs_appendix}\\
\hline 
\end{tabular}}
\caption{Table with all parameters needed to run Conditional Bias Scan. The table lists the parameter, purpose of the parameter, possible values of the parameter, when applicable, and the sections in our paper where this parameter is described in further detail.}

\label{methods-parameter-table}

\end{table}

\newpage
\section{Evaluation Appendices}

\subsection{Adaptions of the Competing Methods used as Benchmarks} \label{benchmark_methods_adaptions}

Both GerryFair and Multiaccuracy Boost provide implementations of their methods on GitHub~\cite{gerryfair_github, multiaccuracy_boost_github}. Our goal was to use their provided code with minimal changes as benchmarks in Section~\ref{experiments_validation}. However, GerryFair and Multiaccuracy Boost do not provide the functionality to indicate whether to audit for bias in the positive direction (under-estimation bias) or bias in the negative direction (over-estimation bias). This lack of functionality makes the results from CBS substantially different than those returned by GerryFair and Multiaccuracy Boost. 

For GerryFair's auditor, given the type of error rate to audit (false negative rate or false positive rate), they train four linear regressions using the features ($X$) as dependent variables with the following four sets of labels: 
\begin{enumerate}
    \item Two linear regressions with the zero set as labels.
    \item One linear regression with the labels set to a measurement that assigns positive costs for predictions that deviate in the \emph{positive} direction (when the predictions are greater than the observed global error rate), and negative costs otherwise.
    \item One linear regression with the labels set to a measurement that assigns positive costs for predictions that deviate in the \emph{negative} direction (when the predictions are less than the observed global error rate), and negative costs otherwise.
\end{enumerate}
\ignore{and the labels for two linear regressions being the zero set and the labels for two of the linear regressions being two distinct calculations of the negated normalized deviation of the predictions from the observed baseline for the metric of interest (either false negative rate or false positive rate). }
They use the predictions from the linear regressions to flag a subset of data where the predictions from the linear regression trained with the zero set labels are greater than the values predicted by the linear regression trained with the costs representing deviations of the predictions from the observed baseline error rate metric of interest as labels. Two linear regressions are used to estimate deviations of the predictions from the observed error rate baseline\ignore{ for the metric of interest because there are two ways they calculate this deviation}, and therefore they form two subgroups: (1) a subgroup with rows that are estimated to have predictions that are greater than the baseline for the metric of interest; and (2) a subgroup with rows that are estimated to have predictions that are less than the baseline for the metric of interest\ignore{ based on the two ways of calculating the negated normalized deviation of the predictions from the observed baseline metric of interest}. The original GerryFair implementation uses a heuristic to decide which subgroup has more significant biases and returns that subgroup accordingly. The subgroup with the rows that are estimated to have predictions that are greater than the metric of interest more closely aligns with the concept of auditing for bias in the positive direction or auditing for under-estimation bias. Since CBS provides the functionality of auditing for biases of a specific direction, we add an option to GerryFair that allows the user to determine which direction of bias they are interested in, making GerryFair's results more comparable to CBS. \ignore{only return one of the subgroups that is estimated to have predictions that are greater than the baseline false positive error rate. }

For each simulation, we ran GerryFair two times, once to detect bias in the form of systematic increases in the false positive rate, and once to detect bias in the form of systematic increases in the false negative rate.  In each case, we allow GerryFair to use all covariates ($X$) to make the predictions used to form subgroups, including the protected class category.
\ignore{We run GerryFair to detect bias in the form of false positives and false negatives as well as marking only the protected class attribute value as a sensitive attribute-value and marking all the attribute-values as sensitive attributes.} This resulted in two result sets for GerryFair for each simulation. We present the result set in Section~\ref{experiments_validation} that had the highest overall accuracy for most of the simulations, which is the GerryFair setup for detecting increased false positive rate. GerryFair returns a subgroup that could contain individuals in both the protected class and the non-protected class. To have the accuracy measurements for GerryFair and CBS be comparable, we filter the subgroup returned by GerryFair to only include individuals in the protected class before calculating the subgroup's accuracy.

\ignore{Unlike GerryFair that provided a standalone auditor method, }
Multiaccuracy Boost is an iterative algorithm, where on each iteration it audits for a subgroup with inaccuracies and then corrects that subgroup's predicted log-odds. More specifically, for each iteration:
\begin{enumerate}
    \item \label{custom_heuristic} A custom heuristic is calculated for all rows of data, similar to an absolute residual, where larger values represent a larger deviation between the observed labels and predictions.
    \item The residuals of all the rows' predictions and observed outcomes are calculated.
    \item The full data is split into a training and holdout set.
    \item \label{set_partitions_multiacc} Three partitions of data are created for the training data, hold out data, and the full dataset:
    \begin{enumerate}
        \item A partition containing all the rows.
        \item A partition containing all the rows with predictions greater than 0.50.
        \item A partition containing all the rows with predictions less than or equal to 0.50.
    \end{enumerate}
    \item For each of the partitions of data constructed in Step \ref{set_partitions_multiacc}:
    \begin{enumerate}
        \item A ridge regression classifier (using $\alpha=1.0$) is trained using the respective partition in the training data, with the covariates $X$ as features and the custom heuristic calculated in Step~\ref{custom_heuristic} as labels.
        \item The ridge regression classifier is used to make predictions for the respective partition in the holdout data.
        \item \label{score_multiacc} If the average of the predictions multiplied by the residuals for the partition set in the hold out data is greater than $10^{-4}$, then the predicted log-odds for the respective partition in the full dataset is shifted by the predictions multiplied by 0.1.
        \item If the predicted log-odds are updated, the iteration terminates and no other partitions of data are evaluated for that iteration.
    \end{enumerate}
\end{enumerate}
\ignore{the data is partitioned into a training and holdout set. The residuals are calculated for all the rows in the training set. The training data is further partitioned the training data into three sets (as well as the holdout data, and full data set at large), one containing all the rows with predictions greater than 0.5, one containing all the rows with predictions less than or equal to 0.5, and another that contains all the rows. For each of these three partitions they fit a linear regression where the loss function is the linear least squares function with an L2 penalty (i.e., ridge regression) using the covariates ($X$) as features and a customized heuristic that measures the deviation between a prediction and its outcome as labels. It uses this linear regression that is fit with the training data to estimate the distribution of this custom heuristic on the holdout set's equivalent partition of data. It then produces a score by averaging the product of the custom heuristic and the residuals of the given partition in the holdout set. If that score falls above a predefined threshold, the predicted log-odds for the equivalent partition of the full data are adjusted to be more accuracy. Once the predicted log-odds are updated that iteration terminates, and no other partitions for that iteration are evaluated.} 

The steps above are slightly modified for the scenario of a classifier that produces a singular probability of a positive outcome whereas the original Multiaccuracy Boost was designed for was a bivariate outcome vector from a Inception-ResNet-v1 model. To make Multiaccuracy Boost audit for bias in one direction, when calculating whether a partition of the data's predicted log-odds should be updated using the holdout data to remove an inaccuracy, we override the residuals that are negative with 0. In effect, we only consider rows with negative outcomes when deciding which partition of predictions have inaccuracies that need to be corrected on a given iteration. This was the least invasive modification we could make to Multiaccuracy Boost to have it solely consider bias in the positive direction when deciding which subgroup's predicted log-odds to update.

Since the auditor and correction method are functioning in tandem, we run all iterations of the algorithm and log each subgroup (i.e., partition) that was detected as needing a correction to its predicted log-odds and its associated score calculated in Step~\ref{score_multiacc}. After the algorithm terminates, we find the partition with the highest score and return its associated partition in the full data set. The decision to return the partition with the highest score across all the iterations of Multiaccuracy Boost in the simulations is motivated by the fact that Multiaccuracy Boost's auditor has no theoretical guarantees of detecting the most inaccurate partition on a specific iteration of the algorithm, but given our intuition of the algorithm this partition is most likely to be found on the first iteration of the algorithm.\ignore{ Because of how the partitions are formed, and the property that the algorithm exhibits of overall improvement of accuracy on each iteration this is most likely the subgroup found in the first partition.} Similarly to GerryFair, Multiaccuracy Boost detects a subgroup that contains members of the protected class and non-protected class. We filter all the individuals in the returned subgroup to only contain individuals who are part of the protected class before calculating the accuracy of the returned partition.

One distinction between these methods and CBS is that their auditors were intended to be used in conjunction with another process to improve a classifier or predictions. Therefore, their auditors were designed to have the level of detection accuracy necessary to discern which subgroups or partitions of data need to be corrected, either by modifying the classifier or by post-processing their predicted log-odds. Given that both methods suggest that they can be used for auditing purposes, they are appropriate choices as benchmarks for CBS, but it is important to note that CBS was specifically designed to have a high accuracy for bias detection, whereas that was not necessarily an explicit intention of GerryFair or Multiaccuracy Boost.

\subsection{Explanation of the Additive Term ($\epsilon^{true}$) for the True Log-Odds used in the Generative Model for the Semi-synthetic Data} \label{additive_term_validation}

For the evaluation simulations described in Section~\ref{experiments_validation}, when producing the true log-odds that are used to determine the outcomes and predicted values, we add a term to each row's true log-odds of a value drawn from a Gaussian distribution $\epsilon_{i}^{true} \sim \mathcal{N}(0,\sigma_{true})$ where $\sigma_{true} = 0.6$. We add this term to the true log-odds to ensure that when the true log-odds for the rows of $S_{bias}$ in the protected class are injected with $\mu_{suf}$, this results in a violation of the fairness definition for sufficiency. 

For the remainder of this section we will focus on sufficiency scan for predictions, but our explanation below is applicable for sufficiency scan for recommendations as well. Sufficiency implies that the outcomes $Y$ are conditionally independent of membership in the protected class $A$ given the predictions $P$ and covariates $X$, that is, $Y \perp A \:|\: (P,X)$.
 Assume that we have predictions that are independent of the outcome conditional on the covariates, $Y \perp P \:|\: X$. Since the outcome is independent of the predictions conditional on the covariates, the definition of sufficiency simplifies to $Y \perp A \:|\: X$. This simplification of sufficiency reduces sufficiency scans to finding the subgroup in the protected class with the biggest base rate difference from its corresponding subgroup in the non-protected class regardless of that subgroup's predictions.  Therefore, it is not evaluating sufficiency violations because these base rate differences are independent of the predictions. Consequentially, when there is \emph{no} base rate difference between the protected and non-protected class conditional on the covariates, ($Y \perp A \:|\: X$), in order for sufficiency to be violated, $Y \not\perp A \:|\: (P,X)$, we must also have $Y \not\perp P \:|\: X$.  This is formally stated in Theorem~\ref{sufficiency_statement}.

 \begin{theorem}\label{sufficiency_statement}
 To have violations of the sufficiency definition, $Y \not\perp A \:|\: (P,X)$, when there are no base rate differences between the protected class and non-protected class conditional on the covariates, $Y \perp A \:|\: X$, the predictions and outcomes must be conditionally dependent given the covariates, $Y \not\perp P \:|\: X$.
 \end{theorem}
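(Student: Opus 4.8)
The plan is to prove the contrapositive. Assume $Y \perp P \mid X$; I will show this is incompatible with simultaneously having $Y \perp A \mid X$ and $Y \not\perp A \mid (P,X)$. Equivalently, I will show that $Y \perp P \mid X$ together with $Y \perp A \mid X$ forces $Y \perp A \mid (P,X)$ --- which is precisely the informal ``sufficiency simplifies to $Y\perp A\mid X$'' claim made just above the theorem, so the whole theorem reduces to making that claim rigorous. Fixing the conditioning value $X=x$ throughout, this becomes a purely trivariate conditional-independence statement about $(Y,A,P)$.

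The key steps, in order: (i) write $\Pr(Y \mid A,P,X)$ and factor $\Pr(Y,P\mid A,X) = \Pr(P\mid Y,A,X)\,\Pr(Y\mid A,X)$; (ii) use $Y\perp A\mid X$ to replace $\Pr(Y\mid A,X)$ by $\Pr(Y\mid X)$; (iii) argue that the conditional law of the prediction $P$ does not depend on the realized outcome $Y$ once $X$ (and $A$) are held fixed, i.e.\ $P\perp Y\mid (A,X)$, so that the factorization collapses to $\Pr(Y\mid A,P,X) = \Pr(Y\mid X) = \Pr(Y\mid P,X)$, the last equality by $Y\perp P\mid X$; (iv) conclude $Y\perp A\mid (P,X)$, and hence, by contraposition, that a sufficiency violation $Y\not\perp A\mid(P,X)$ occurring under equal base rates $Y\perp A\mid X$ entails $Y\not\perp P\mid X$.

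The main obstacle is step (iii): the bare inference ``$Y\perp A\mid X$ and $Y\perp P\mid X$ imply $Y\perp(A,P)\mid X$'' (the ``composition'' property) is false for arbitrary joint distributions --- a parity-type example ($A,P$ independent fair coins, $Y=A\oplus P$) has $Y$ independent of $A$ and of $P$ separately yet determined by the pair --- so the gap must be closed using structure available in this setting. The cleanest route is to invoke the paper's standing well-calibration assumption: if the classifier is calibrated given $X$, then $\mathbb{E}[Y\mid P,X]=P$, while $Y\perp P\mid X$ gives $\mathbb{E}[Y\mid P,X]=\Pr(Y=1\mid X)$, forcing $P=\Pr(Y=1\mid X)$ almost surely; then $P$ is a function of $X$, conditioning on $(P,X)$ is the same as conditioning on $X$, and the equivalence $Y\perp A\mid(P,X) \iff Y\perp A\mid X$ is immediate. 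Alternatively one can argue directly within the generative model of Section~\ref{experiments_validation}: there $P$ and $Y$ are conditionally dependent given $(A,X)$ exactly through the shared latent log-odds term $\epsilon^{true}$, so if $Y\perp P\mid X$ held it would force $\sigma_{true}=0$ --- the degenerate case whose exclusion is precisely why $\epsilon^{true}$ is introduced. I would state the calibration hypothesis explicitly at the point where step (iii) uses it.
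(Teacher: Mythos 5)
Your proposal takes the same route as the paper's proof: argue by contraposition and reduce the theorem to showing that $Y \perp P \mid X$ together with $Y \perp A \mid X$ forces $Y \perp A \mid (P,X)$. The paper does this in exactly two moves: it asserts that $(Y \perp P \mid X)$ and $(Y \perp A \mid X)$ ``together imply that $Y \perp (P,A) \mid X$,'' and then applies weak union to obtain $Y \perp A \mid (P,X)$, contradicting the assumed sufficiency violation. The step you flag as the main obstacle --- composition --- is therefore precisely the step the paper takes, and it takes that step without justification. You are right that composition is not a semi-graphoid axiom and fails for general joint distributions; moreover, your parity example is not merely a counterexample to the composition rule in the abstract, it is a direct counterexample to the theorem as literally stated (take $X$ trivial, $A$ and $P$ independent fair coins, $Y = A \oplus P$: then $Y \perp A \mid X$ and $Y \not\perp A \mid (P,X)$ both hold, yet $Y \perp P \mid X$). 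So your diagnosis is sharper than the paper's own argument, which is incomplete at exactly the point you identify.

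Where you diverge from the paper is in attempting to repair the step; the paper does not, and simply treats composition as valid. Both of your repairs are sound as far as they go, but both import hypotheses absent from the theorem statement: calibration in the strong form $\mathbb{E}[Y \mid P, X] = P$ (an uncomfortable standing assumption in a paper whose simulations deliberately inject miscalibration into $S_{bias}$), or the specific generative model of Section~\ref{experiments_validation}, in which the only channel of dependence between $P$ and $Y$ given $X$ is the shared latent term $\epsilon^{true}$. The second route is the one consistent with how the theorem is actually used in Appendix~\ref{additive_term_validation} and is the more natural way to salvage the claim. Either way, be aware that you have not proved --- and cannot prove --- the theorem at its stated level of generality; what you have found is the gap in the paper's proof together with a restricted setting in which the conclusion does hold, and you should state the extra hypothesis explicitly rather than leaving it as an alternative.
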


 \begin{proof}
  Let us assume that (i) there are no base rate differences between protected and non-protected class conditional on the covariates, $Y \perp A \:|\: X$; (ii) outcomes are independent of the predictions conditional 
 on the covariates, $Y \perp P \:|\: X$; and (iii) violations of the sufficiency definition exist, $Y \not\perp A \:|\:(P,X)$.  We will show that these three statements lead to a contradiction.  First, 
$( Y \perp P \:|\: X)$ and $(Y \perp A \:|\: X )$ together imply that $Y \perp (P , A) \:|\: X$. Furthermore, using the weak union axiom for conditional independence, $Y \perp (P , A) \:|\: X$ implies that $Y \perp A \:|\: (P, X)$, which contradicts (iii).  Since these three statements cannot all be true, we know that no base rate differences (i) and violations of sufficiency (iii) together imply that the outcomes cannot be independent of the predictions conditional on the covariates, $Y \not\perp P \:|\: X$.
\end{proof}

 To ensure that $Y \not\perp P \:|\: X$, the predictions $P$ must carry information about the outcomes $Y$ that is not carried in $X$. By adding the term $\epsilon_{i}^{true}$ to the true log-odds for each row, given that the predicted log-odds (and the corresponding predicted probabilities $P_i$ and binarized recommendations $P_{i,bin}$) and the outcomes $Y$ are both derived from the true log-odds, this ensures that $Y \not\perp P \:|\: X$ in the evaluation simulations because $P$ carries information about $Y$, in the form of the added row-wise terms (drawn from a Gaussian distribution), that are captured in $Y$, but are not captured in $X$.

\newpage
 \subsection{Additional Evaluation Simulations} \label{varying_evaluation}

\begin{figure}
 \centering
 \includegraphics[width=\textwidth]{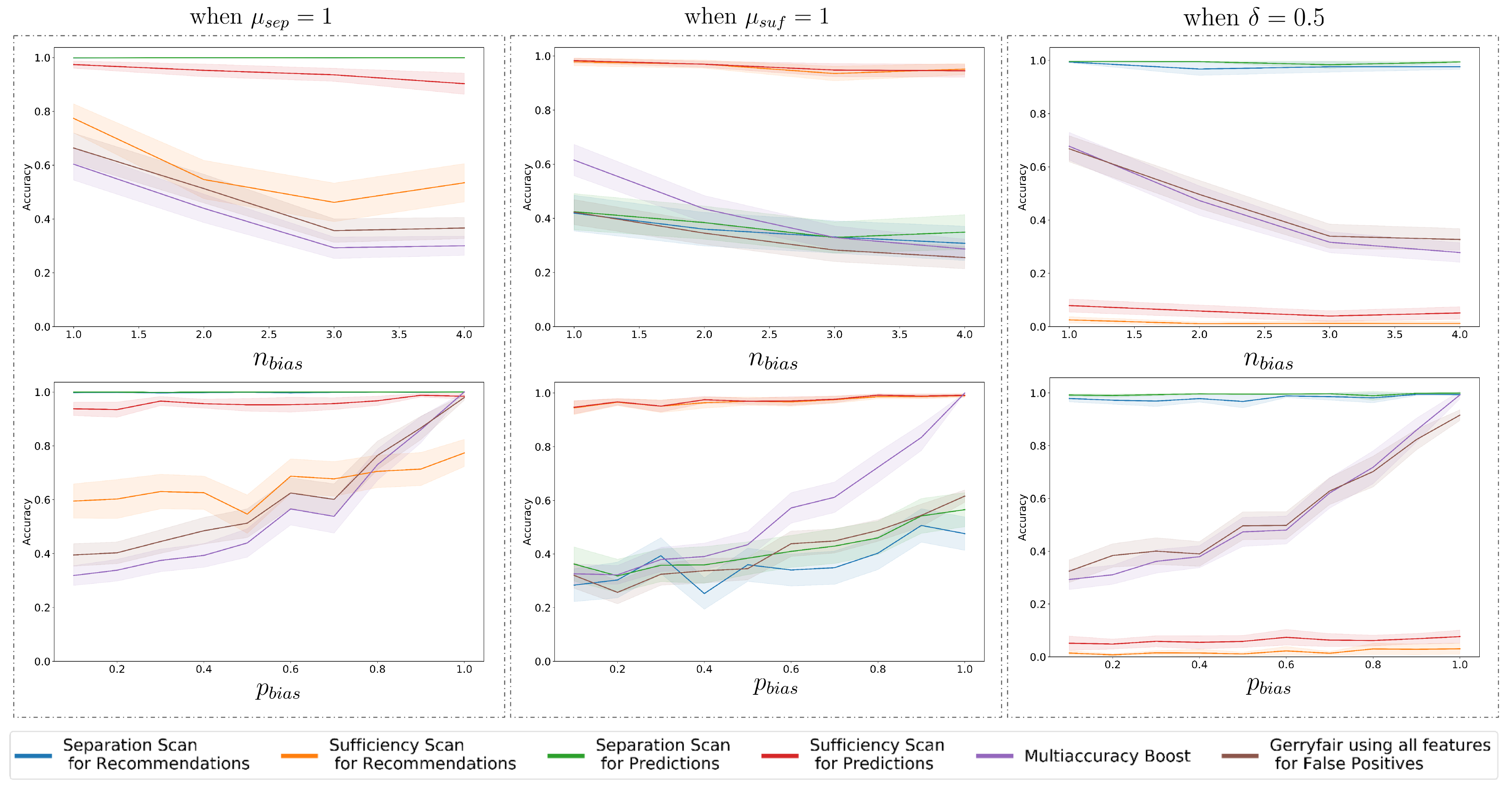}
 \caption{Average accuracy (with 95\% CI) for biases and base rate shifts injected into subgroup $S_{bias}$ of the protected class, for CBS, GerryFair, and Multiaccuracy Boost, as a function of varying parameters $n_{bias}$ (top row) and $p_{bias}$ (bottom row). Left: increasing predicted log-odds by $\mu_{sep}=1$. Center: decreasing true log-odds by $\mu_{suf}=1$. Right: base rate difference $\delta = 0.5$, for $\mu_{sep}=\mu_{suf}=0$.}
 \label{cd_diagrams}
\end{figure} 

\begin{figure}
 \centering
 \includegraphics[width =\textwidth]{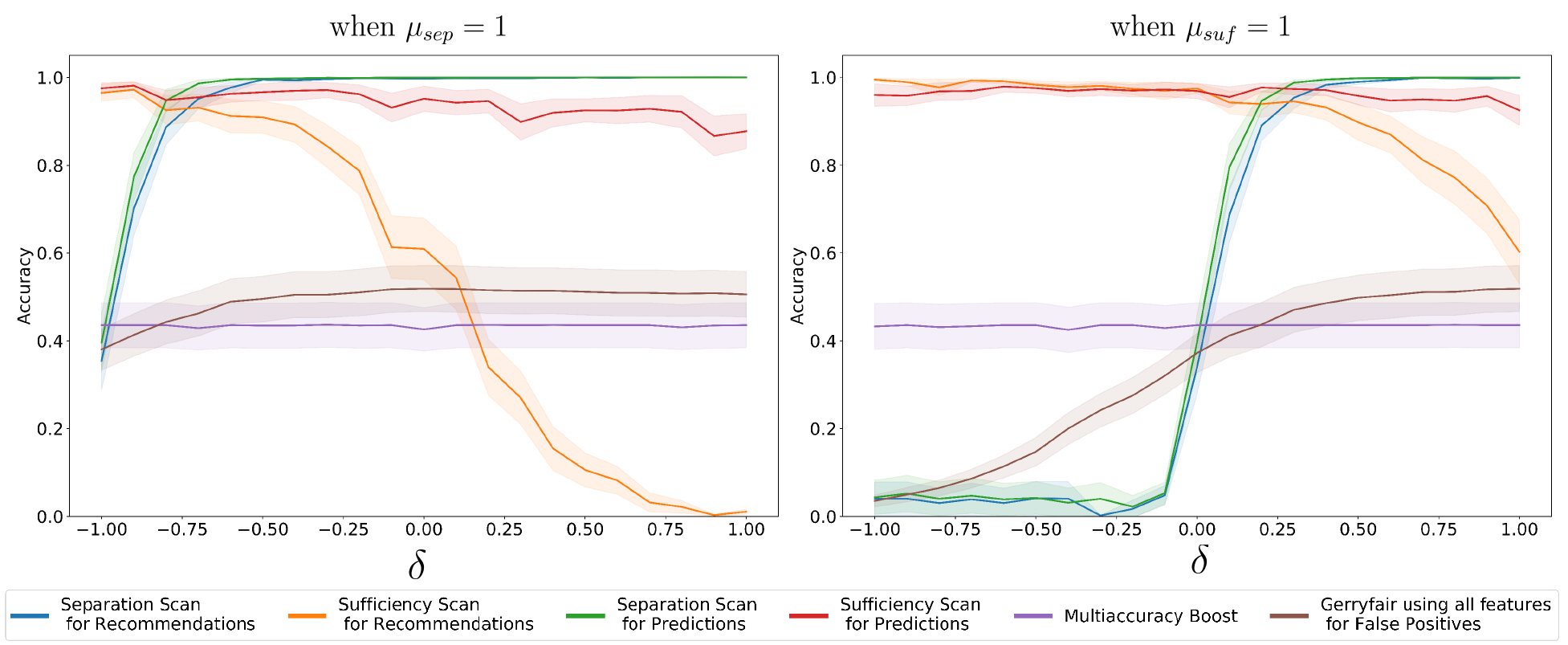}
 \caption{Average accuracy (with 95\% CI) for biases injected into subgroup $S_{bias}$ of the protected class, for CBS, GerryFair, and Multiaccuracy Boost, as a function of varying base rate difference $\delta$ between protected and non-protected class for subgroup $S_{bias}$. Left: increasing predicted log-odds by $\mu_{sep}=1$. Right: decreasing true log-odds by $\mu_{suf}=1$. }
 \label{_3_diagrams}
\end{figure}

To evaluate \ref{Q3} in Section~\ref{experiments_validation}, we modify the characteristics of $S_{bias}$, by varying $n_{bias}$ and $p_{bias}$ for three settings, when $\mu_{sep} =1$, $\mu_{suf} =1$, and $\delta=0.5$.  For each setting, we perform two simulations: (1) by varying the number of attribute categories to choose attribute-values from ($n_{bias}$) between 1 and 4, when $p_{bias} = 0.50$; and (2) by varying the probability ($p_{bias}$) of an attribute-value being included in $S_{bias}$ between 0 and 1, when $n_{bias} =2$.  The results of these simulations are shown in Figure~\ref{cd_diagrams}. We observe that, when varying $n_{bias}$, CBS has similar accuracy results to the simulations shown in Figures~\ref{a_diagrams} and \ref{2a_diagrams}, with separation scans and sufficiency scan for predictions having higher bias detection accuracy when $\mu_{sep}=1$, and sufficiency scans having higher bias detection accuracy when $\mu_{suf}=1$, as compared to competing methods across all settings of $n_{bias}$.
Interestingly, GerryFair and Multiaccuracy Boost have improved bias detection accuracy, approaching that of CBS, when $p_{bias}$ approaches 1 (i.e., more individuals in the protected class are included in $S_{bias}$), but perform poorly for low values of $p_{bias}$.  This suggests that CBS is better at detecting smaller, more subtle subgroups $S_{bias}$ than the competing methods.

Additionally, we investigated the case where we have both an injected bias ($\mu_{sep}=1$ or $\mu_{suf}=1$) and a base rate shift $\delta$ in subgroup $S_{bias}$ for the protected class. We examined the extent to which positive and negative shifts $\delta$ either help or harm the detection accuracy of the various methods. Thus we run two separate sets of experiments with injected bias $\mu_{sep} = 1$ and injected bias $\mu_{suf} = 1$, while varying the base rate shift $\delta$ from -1 to +1 for each experiment. A positive $\delta$ means $S_{bias}$ in the protected class has a higher base rate, while a negative $\delta$ means $S_{bias}$ in the protected class has a lower base rate, as compared to $S_{bias}$ in the non-protected class.

In Figure~\ref{_3_diagrams}, we observe that the detection accuracy of the separation scans
monotonically increases with $\delta$. This relationship is particularly strong for the experiments with injected bias $\mu_{suf}=1$, in which the separation scans show near-perfect accuracy for large positive $\delta$ and near-zero accuracy for large negative $\delta$. These results are not surprising given the separation scans' sensitivity to positive base rate differences for $S_{bias}$ in the protected class even when no injected bias is present (see Figure~\ref{2a_diagrams} above).
We observe that the detection accuracy of the sufficiency scan for recommendations monotonically decreases with $\delta$. This 
relationship is particularly strong for the experiments with injected bias $\mu_{sep}=1$, in which the sufficiency scan for recommendations shows near-perfect accuracy for large negative $\delta$ and near-zero accuracy for large positive $\delta$. Again, these results are not surprising given the sufficiency scan for recommendations' sensitivity to negative base rate differences for $S_{bias}$ in the protected class even when no injected bias is present (see Figure~\ref{2a_diagrams} above). Finally, we observe that the sufficiency scan for predictions maintains high accuracy for both $\mu_{sep}=1$ and $\mu_{suf}=1$ regardless of the base rate difference $\delta$ for $S_{bias}$ in the protected class.

\newpage
\subsection{Robustness Analyses of Evaluation Simulations for Parameters $\sigma_{true}$ and $\sigma_{predict}$ }\label{additional_validation_setups}
\begin{figure}
 \centering
 \includegraphics[width = \textwidth]{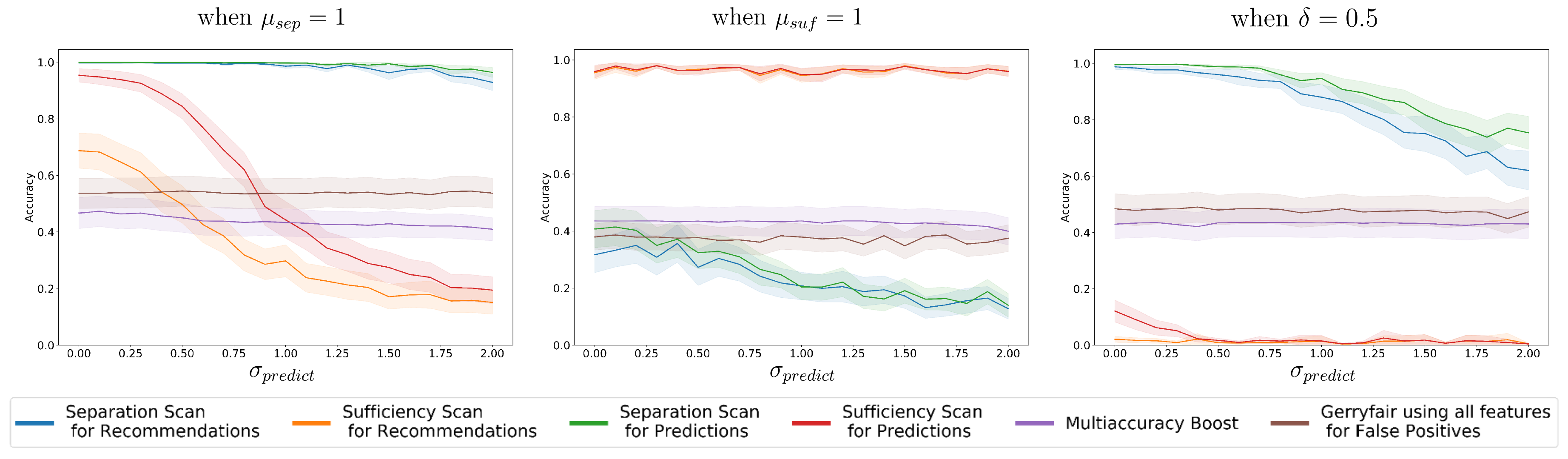}
 \caption{Average accuracy (with 95\% CI) for biases and base rate shifts injected into subgroup $S_{bias}$ of the protected class, for CBS, GerryFair, and Multiaccuracy Boost, as a function of varying parameter $\sigma_{predict}$. Left: increasing predicted log-odds by $\mu_{sep}=1$. Center: decreasing true log-odds by $\mu_{suf}=1$. Right: base rate difference $\delta = 0.5$, for $\mu_{sep}=\mu_{suf}=0$. }
 \label{b_diagrams}
\end{figure}

In this section, we examine the robustness of our results in Section~\ref{experiments_validation} by varying the parameters $\sigma_{predict}$ and $\sigma_{true}$ from their default values of 0.2 and 0.6 respectively. \ignore{We consider the case where we have both an injected bias ($\mu_{sep}=1$ or $\mu_{suf}=1$) and a base rate shift $\delta=.5$ in the affected subgroup $S_{bias}$ of the protected class.} 

First, we examine the impact of varying $\sigma_{predict}$. Recall that each predicted log-odds is drawn from a Gaussian distribution centered at the true log-odds, with standard deviation $\sigma_{predict}$. Thus $\sigma_{predict}$ can be interpreted as the average amount of random error in the classifier's predictions as compared to the true log-odds values.  We run three separate sets of experiments where we alter $S_{bias}$ in the protected class by injecting a bias of $\mu_{sep} = 1$, injecting a bias of $\mu_{suf} = 1$, and creating a base rate difference of $\delta = 0.5$ respectively, while varying $\sigma_{predict}$ between 0 and 2. \ignore{   We re-run the experiments for~\ref{Q1} and~\ref{Q2}, averaging accuracy results over an additional 100 semi-synthetic datasets for each experiment. However, instead of varying $\mu_{sep}$, $\mu_{suf}$, and $\delta$ with a fixed $\sigma_{predict} = 0.2$,}  Accuracies are averaged over 100 semi-synthetic datasets for each experiment. The experiments where $\mu_{sep}=1$ and $\mu_{suf}=1$ analyze the robustness to $\sigma_{predict}$ of the evaluation simulations for \ref{Q1}, whereas the experiments where $\delta=0.5$ analyze the robustness to $\sigma_{predict}$ of the evaluation simulations for \ref{Q2}.

In Figure~\ref{b_diagrams}, we observe that large amounts of noise $\sigma_{predict}$ harm the accuracy of the separation scans for injected biases $\mu_{suf}=1$ which shift the true log-odds in subgroup $S_{bias}$ for the protected class, as well as reducing their detection of base rate shifts $\delta > 0$ in subgroup $S_{bias}$ for the protected class. Most interestingly, when $\sigma_{predict}$ is large, we see a substantial reduction in accuracy for the sufficiency scans for injected biases $\mu_{sep}=1$ which shift the predicted log-odds in subgroup $S_{bias}$ for the protected class. We believe that the combination of noisy predictions and a relatively small dataset size is causing CBS's logistic regression model (used to estimate $\hat{I}$) to underestimate the strength of the relationship between the predictions $P_i$ (or recommendations $P_{i,bin}$) and the outcomes $Y_i$ for the non-protected class. Thus, when we have a large shift in $P_i$ in subgroup $S_{bias}$ for the protected class, the model predicts a much smaller shift in $\mathbb{E}[Y_i \:|\: X_i, P_i]$. As a result, the difference between the expected $\hat{I}_i$ and true $I_i$ values and the corresponding CBS scores for the sufficiency scans are reduced, leading to reduced detection accuracy.

\begin{figure}[t]
 \centering
 \includegraphics[width = \textwidth]{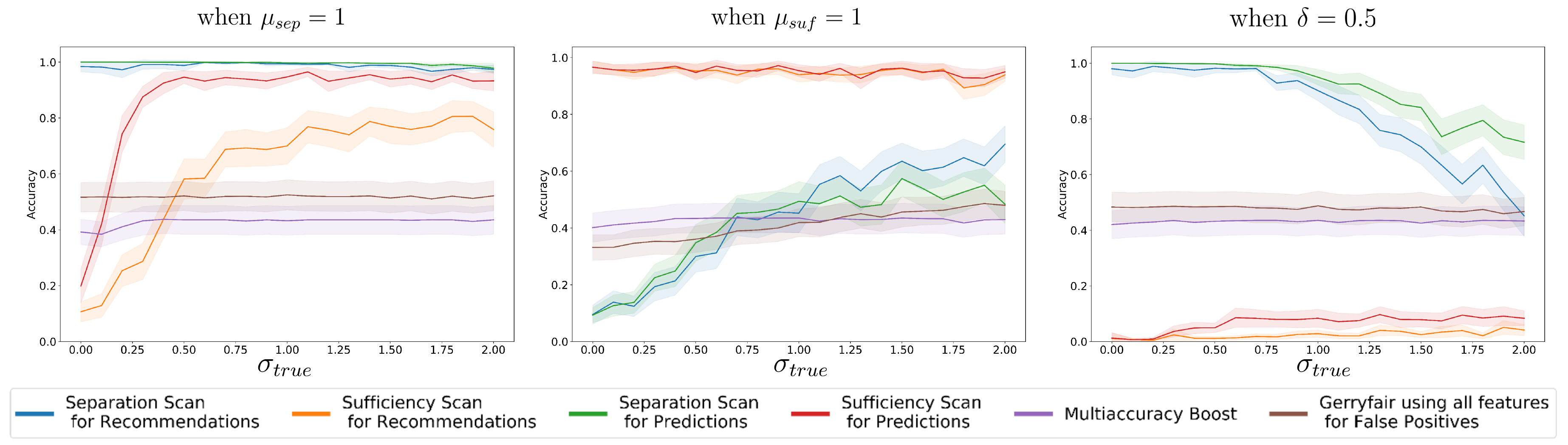}
 \caption{Average accuracy (with 95\% CI) for biases and base rate shifts injected into subgroup $S_{bias}$ of the protected class, for CBS, GerryFair, and Multiaccuracy Boost, as a function of varying parameter $\sigma_{true}$. Left: increasing predicted log-odds by $\mu_{sep}=1$. Center: decreasing true log-odds by $\mu_{suf}=1$. Right: base rate difference $\delta = 0.5$, for $\mu_{sep}=\mu_{suf}=0$. }
 \label{e_diagrams}
\end{figure}

Second, we examine the impact of varying $\sigma_{true}$. Recall that each individual's true log-odds is a deterministic (linear) function of their covariate values $X_i$ plus a term, $\epsilon_i^{true}$, drawn from a Gaussian distribution centered at 0 with a standard deviation of $\sigma_{true}$. Thus the parameter $\sigma_{true}$ represents the variation between individuals' true log-odds based on characteristics other than the covariate values $X_i$ used by CBS. Moreover, since each individual's predicted log-odds is drawn from a Gaussian distribution centered at the true log-odds, these characteristics are assumed to be known and incorporated into the classifier, thus creating the dependency $Y \not\perp P \:|\: X$ when $\sigma_{true} > 0$. In other words, $\sigma_{true}$ represents the average amount of signal in the predictions $P$ (for predicting the outcome $Y$) that is not already present in the covariates $X$.\ignore{ We re-run the experiments for~\ref{Q1} and~\ref{Q2}, averaging accuracy results over an additional 100 semi-synthetic datasets for each experiment. However, instead of varying $\mu_{sep}$, $\mu_{suf}$, and $\delta$ with a fixed $\sigma_{true} = 0.6$, we run three separate sets of experiments with injected bias $\mu_{sep} = 1$, injected bias $\mu_{suf} = 1$, and base rate difference $\delta = 0.5$ respectively, varying $\sigma_{true}$ between 0 and 2 for each experiment. }
 We run three separate sets of experiments where we alter $S_{bias}$ in the protected class by injecting a bias of $\mu_{sep} = 1$, injecting a bias of $\mu_{suf} = 1$, and creating a base rate difference of $\delta = 0.5$ respectively, while varying $\sigma_{true}$ between 0 and 2 for each experiment. Accuracies are averaged over 100 semi-synthetic datasets for each experiment. The experiments where $\mu_{sep}=1$ and $\mu_{suf}=1$ analyze the robustness to $\sigma_{true}$ of the evaluation simulations for \ref{Q1}, whereas the experiments where $\delta=0.5$ analyze the robustness to $\sigma_{true}$ of the evaluation simulations for \ref{Q2}.

In Figure~\ref{e_diagrams}, we observe that small values of $\sigma_{true}$ harm the accuracy of the separation scans for injected bias $\mu_{suf}=1$ while making them more likely to detect base rate shifts $\delta > 0$ in subgroup $S_{bias}$ for the protected class.
Most interestingly, when $\sigma_{true}$ is small, we see a substantial reduction in accuracy for the sufficiency scans for injected bias $\mu_{sep}=1$. This reduced performance for $\sigma_{true} \approx 0$ follows from our argument in Section~\ref{additive_term_validation} above: $\sigma_{true} = 0$ implies $Y \perp P \:|\: X$, and if we also have no base rate difference between the protected and non-protected classes ($Y \perp A \:|\: X$), this implies
$Y \perp A \:|\: P, X$. In other words, even if a bias is injected into the predicted probabilities (and recommendations) in subgroup $S_{bias}$ for the protected class, the sufficiency-based definition of fairness is not violated, and thus the injected bias cannot be accurately detected.

\subsection{Estimates of Compute Power} \label{simulations_compute_power}

 For all of the experiments in Section~\ref{experiments_validation}, Appendix~\ref{varying_evaluation}, and Appendix~\ref{additional_validation_setups}, we used 15 shared, university compute servers running CentOS with 16-64 cores and 16-256 GB of memory.  Each server performed 15-120 runs of CBS concurrently, and ran for approximately 9 days. We estimate that, to run all of the simulations (1,504 CBS runs in total) for a single data set, this would take approximately 32.5 hours. Lastly, to run an individual CBS scan for the COMPAS data (150 iterations), it takes on average approximately 90 seconds.
\clearpage

\section{Case Study of COMPAS Appendices}

\subsection{Additional Information about Preprocessing of COMPAS Data}\label{compas_preprocessing}

We follow many of the processing decisions made in the initial ProPublica analysis, including removing traffic offenses and defining recidivism as a new arrest within two years of the initial arrest for a defendant~\citep{larson_angwin_kirchner_mattu_2016, larson_roswell_2017}. After preprocessing the initial data set, we have 6,172 defendants, their gender, race, age (Under 25 or 25+), charge degree (Misdemeanor or Felony), prior offenses (None, 1 to 5, or Over 5), predicted recidivism risk score (1-10), and whether they were re-arrested within two years of the initial arrest.

\subsection{Full Results of COMPAS Case Study} \label{compas_full_results_appendix}
\begin{longtable}{ |p{1.65cm}|p{1.4cm}|p{2.5cm}|p{2.5cm}|p{0.75cm}|p{1.17cm}|p{1.17cm}|}
\hline
Scan Type & Protected Class Attribute Value & Detected Subgroup & Comparison Subgroup & Score &Observed Rate (Detected)&Observed Rate (Comparison)\\
\hline
&\cellcolor{lightGrayTable}Under age 25&\cellcolor{lightGrayTable}Defendants under age 25 arrested on felony charges (403)&\cellcolor{lightGrayTable}Defendants age 25+ arrested on felony charges (1583)&\cellcolor{lightGrayTable}\textbf{114.1}&\cellcolor{lightGrayTable}0.52&\cellcolor{lightGrayTable}0.39\\
\hhline{~------}
&6+ priors&All defendants with 6+ priors (349)&All defendants with 0-5 priors (3014)&\textbf{83.1}&0.54&0.38\\
\hhline{~------}
&\cellcolor{lightGrayTable}Black&\cellcolor{lightGrayTable}Black male defendants (1168)&\cellcolor{lightGrayTable}Non-Black male defendants (1433) &\cellcolor{lightGrayTable}\textbf{41.9}&\cellcolor{lightGrayTable}0.45&\cellcolor{lightGrayTable}0.35\\
\hhline{~------}
Separation Scan for Predictions&Female&Female defendants with 6+ priors (40)&Male defendants with 6+ priors (309)&25.3&0.59&0.54\\
\hhline{~------}
&\cellcolor{lightGrayTable}1 to 5 priors&\cellcolor{lightGrayTable}Defendants under age 25 with 1 to 5 priors (227)&\cellcolor{lightGrayTable}Defendants under age 25 with 0 or 6+ priors (366)&\cellcolor{lightGrayTable}\textbf{3.23}& \cellcolor{lightGrayTable}0.54&\cellcolor{lightGrayTable}0.49\\
\hhline{~------}
&Felony&White female defendants arrested on felony charges (139)&White female defendants arrested on misdemeanor charges (173)&\textbf{2.40}&0.42&0.34\\
\hhline{~------}
&\cellcolor{lightGrayTable}Male&\cellcolor{lightGrayTable}Asian male defendants (22)&\cellcolor{lightGrayTable}Asian female defendants (1)&\cellcolor{lightGrayTable}0.62&\cellcolor{lightGrayTable}0.30&\cellcolor{lightGrayTable}0.22\\
\hhline{~------}
&Native American&All Native American defendants (6)&All non-Native American defendants (3357)&0.15&0.49&0.39\\
\hline
&\cellcolor{lightGrayTable}Under age 25&\cellcolor{lightGrayTable}Defendants under age 25 arrested on felony charges (403)&\cellcolor{lightGrayTable}Defendants age 25+ arrested on felony charges (1583)&\cellcolor{lightGrayTable}\textbf{149.2}&\cellcolor{lightGrayTable}0.55&\cellcolor{lightGrayTable}0.29\\
\hhline{~------}
&6+ priors&All defendants with 6+ priors (349)&All defendants with 0-5 priors (3014)&\textbf{125.5}&0.66&0.26\\
\hhline{~------}
&\cellcolor{lightGrayTable}Black&\cellcolor{lightGrayTable}Black male defendants (1168)&\cellcolor{lightGrayTable}Non-Black male defendants (1433) &\cellcolor{lightGrayTable}\textbf{100.9}&\cellcolor{lightGrayTable}0.44&\cellcolor{lightGrayTable}0.19\\
\hhline{~------}
&Female&Female defendants with 6+ priors (40)&Male defendants with 6+ priors (309)&46.9&0.80&0.65\\
\hhline{~------}
Separation Scan for Recommendations&\cellcolor{lightGrayTable}Male&\cellcolor{lightGrayTable}Asian and Hispanic male defendants (286)&\cellcolor{lightGrayTable}Asian and Hispanic female defendants (57)& \cellcolor{lightGrayTable}22.3&\cellcolor{lightGrayTable}0.21&\cellcolor{lightGrayTable}0.05\\
\hhline{~------}
&1 to 5 priors&Defendants under age 25 with 1 to 5 priors (227)&Defendants under age 25 with 0 or 6+ priors (366)&12.5&0.64&0.47\\
\hhline{~------}
&\cellcolor{lightGrayTable}Felony&\cellcolor{lightGrayTable}White female defendants arrested on felony charges (139)&\cellcolor{lightGrayTable}White female defendants arrested on misdemeanor charges (173)&\cellcolor{lightGrayTable}9.40&\cellcolor{lightGrayTable}0.38&\cellcolor{lightGrayTable}0.21\\
\hhline{~------}
&White&White female defendants under age 25 with no priors (31)&Non-white female defendants under age 25 with no priors (70)& 1.98&0.71&0.56\\
\hhline{~------}
&\cellcolor{lightGrayTable}Misde-meanor&\cellcolor{lightGrayTable}Native American defendants with 1 to 5 priors arrested on misdemeanor charges (2)&\cellcolor{lightGrayTable}Native American defendants with 1 to 5 priors arrested on felony charges (1)&\cellcolor{lightGrayTable}1.67&\cellcolor{lightGrayTable}1.00&\cellcolor{lightGrayTable}0.00\\
\hhline{~------}
&Age 25+&Asian defendants age 25+ arrested on felony charges (10)&Asian defendants under age 25 arrested on felony charges (1)&0.75&0.20&0.00\\
\hhline{~------}
& \cellcolor{lightGrayTable}Native American&\cellcolor{lightGrayTable}All Native American defendants (6)&\cellcolor{lightGrayTable}All non-Native American defendants (3357)&\cellcolor{lightGrayTable}0.20&\cellcolor{lightGrayTable}0.50&\cellcolor{lightGrayTable}0.30\\
\hline
&No priors&All defendants with no priors (2085)&All defendants with 1+ priors (4087)&\textbf{111.5}&0.29&0.54\\
\hhline{~------}
&\cellcolor{lightGrayTable}Age 25+&\cellcolor{lightGrayTable}Male defendants age 25+ with 0-5 priors (2867)&\cellcolor{lightGrayTable}Male defendants under age 25 with 0-5 priors (1041)&\cellcolor{lightGrayTable}\textbf{92.6}&\cellcolor{lightGrayTable}0.35&\cellcolor{lightGrayTable}0.59\\
\hhline{~------}
&Male&Male Native American defendants of age 25+ (7)&Female Native American defendants of age 25+ (2)&31.2&0.14&1.00\\
\hhline{~------}
&\cellcolor{lightGrayTable}Female&\cellcolor{lightGrayTable}Female defendants under age 25 (246)&\cellcolor{lightGrayTable}Male defendants under age 25 (1101)&\cellcolor{lightGrayTable}18.7&\cellcolor{lightGrayTable}0.38&\cellcolor{lightGrayTable}0.60\\
\hhline{~------}
Sufficiency Scan for Predictions&Misde-meanor&Female defendants arrested on misdemeanor charges (491)&Female defendants arrested on felony charges (684)& 3.51&0.26&0.41\\
\hhline{~------}
&\cellcolor{lightGrayTable}Asian&\cellcolor{lightGrayTable}Asian defendants arrested on misdemeanor charges (12)&\cellcolor{lightGrayTable}Non-Asian defendants arrested on misdemeanor charges (2190) &\cellcolor{lightGrayTable}\textbf{3.20}&\cellcolor{lightGrayTable}0.00&\cellcolor{lightGrayTable}0.38\\
\hhline{~------}
&White&White defendants under age 25 (347)&Non-white defendants under age 25 (1000) &2.36&0.49&0.58\\
\hhline{~------}
&\cellcolor{lightGrayTable}Black&\cellcolor{lightGrayTable}Black female defendants (549)&\cellcolor{lightGrayTable}Non-Black female defendants (626)&\cellcolor{lightGrayTable}2.22&\cellcolor{lightGrayTable}0.37&\cellcolor{lightGrayTable}0.34\\
\hhline{~------}
&1 to 5 priors&Black defendants of age 25+ with 1 to 5 priors (1038)&Black defendants of age 25+ with 0 or 6+ priors (1328)&2.18&0.42&0.55\\
\hhline{~------}
&\cellcolor{lightGrayTable}Hispanic&\cellcolor{lightGrayTable}All Hispanic defendants (509)&\cellcolor{lightGrayTable}All non-Hispanic defendants (5663)&\cellcolor{lightGrayTable}0.27&\cellcolor{lightGrayTable}0.37&\cellcolor{lightGrayTable}0.46\\
\hhline{~------}
&Native American&All Native American defendants (11)&All non-Native American defendants (6161)&0.17&0.45&0.46\\
\hline
&\cellcolor{lightGrayTable}Age 25+&\cellcolor{lightGrayTable}Male defendants of age 25+ with 0-5 priors (772)&\cellcolor{lightGrayTable}Male defendants under age 25 with 0-5 priors (641)&\cellcolor{lightGrayTable}\textbf{52.9}&\cellcolor{lightGrayTable}0.52&\cellcolor{lightGrayTable}0.67\\
\hhline{~------}
&No priors&All defendants with no priors (553)&All defendants with 1+ priors (2198) &\textbf{51.0}&0.46&0.67\\
\hhline{~------}
&\cellcolor{lightGrayTable}Female&\cellcolor{lightGrayTable}Female defendants with no priors (150)&\cellcolor{lightGrayTable}Male defendants with no priors (403)&\cellcolor{lightGrayTable}\textbf{35.3}&\cellcolor{lightGrayTable}0.33&\cellcolor{lightGrayTable}0.50\\
\hhline{~------}
&Misde-meanor&Male defendants with 0-5 priors arrested on misdemeanor charges (398)&Male defendants with 0-5 priors arrested on felony charges (1015)&28.2&0.52&0.61\\
\hhline{~------}
Sufficiency Scan for Recommendations&\cellcolor{lightGrayTable}1 to 5 priors&\cellcolor{lightGrayTable}Male defendants of age 25+ with 1 to 5 priors (595)&\cellcolor{lightGrayTable}Male defendants of age 25+ with 0 or 6+ priors (981)&\cellcolor{lightGrayTable}\textbf{26.8}&\cellcolor{lightGrayTable}0.54&\cellcolor{lightGrayTable}0.70\\
\hhline{~------}
&Male&Male Native American defendants of age 25+ (4)&Female Native American defendants of age 25+ (2)&14.1&0.25&1.00\\
\hhline{~------}
&\cellcolor{lightGrayTable}Hispanic&\cellcolor{lightGrayTable}All Hispanic defendants (141)&\cellcolor{lightGrayTable}All non-Hispanic defendants (2610)&\cellcolor{lightGrayTable}2.40&\cellcolor{lightGrayTable}0.56&\cellcolor{lightGrayTable}0.63\\
\hhline{~------}
&6+ priors&Asian defendants with 6+ priors (1)&Asian defendants with 0-5 priors (6) &0.41&0.00&0.83\\
\hhline{~------}
&\cellcolor{lightGrayTable}White&\cellcolor{lightGrayTable}White female defendants under age 25 (57)&Non-white female defendants under age 25 (110)&\cellcolor{lightGrayTable}0.40&\cellcolor{lightGrayTable}0.39&\cellcolor{lightGrayTable}0.47\\
\hhline{~------}
&Black&Black defendants of age 25+ with 0-5 priors (581)&Non-Black defendants of age 25+ with 0-5 priors (404)& 0.34&0.50&0.52\\
\hline
\caption{Full table of results for COMPAS case study. Each of the four variants of CBS was run using each observed attribute value as the protected class. Detected subgroup $S^\ast$ of the protected class and corresponding (comparison) subgroup of the non-protected class; numbers of defendants for each subgroup are shown in parentheses. All runs with log-likelihood ratio score $F(S^\ast) > 0$ are shown, sorted in descending order by score for each method. Separation scan for predictions: ``observed rate'' is average predicted probability of reoffending, $\mathbb{E}[P_i]$, for defendants who did not reoffend ($Y_i = 0$). Separation scan for recommendations: ``observed rate'' is false positive rate, i.e., proportion of individuals predicted as ``high risk'' ($P_{i,bin}=1$) for defendants who did not reoffend ($Y_i = 0$). Sufficiency scan for predictions: ``observed rate'' is proportion of reoffending individuals ($Y_i = 1$), controlling for predicted risk. Sufficiency scan for recommendations: ``observed rate'' is positive predictive value, i.e., proportion of reoffending individuals ($Y_i = 1$) for defendants who were predicted as ``high risk'' ($P_{i,bin} = 1$). Bolded scores are statistically significant with p-value <.05 measured by permutation testing, as described in Appendix~\ref{permutation_testing}.}
\label{compas_full_results}
\end{longtable}

\subsection{Considerations and Limitations of COMPAS Data and Fairness Definitions in our COMPAS Case Study} \label{compas_considerations}

Following the initial investigation by ProPublica about fairness issues in COMPAS risk predictions~\citep{angwin2016machine}, ProPublica's COMPAS dataset has been used as a benchmark in the fairness literature. While we use the COMPAS data because of its familiarity and supporting research, we also note the value of alternative framings of the evaluation of automated decision support tools in the criminal justice systems, such as examining the risks that the system poses to defendants rather than the risk of the defendants to public safety~\cite{mitchell2021algorithmic,meyer2022flipping, green2020false}. Beyond the implications of the traditional framing of pre-trial risk assessment tools, there have been specific critiques of the COMPAS data that range from questioning the accuracy of the sensitive attributes (specifically race), noting missing features in the ProPublica dataset that the COMPAS creators claim are important for score calculations, and most importantly, a lack of evaluation of the biases that exist in the outcome variable of whether a defendant is rearrested within two years of arrest.\footnote{See, for example, Fabris et al., ``Algorithmic fairness datasets: the story so far'', \emph{Data Mining and Knowledge Discovery} 36(6), 2074-2152, 2022.} Given that certain types of individuals are arrested at a higher rate than others, the outcome variable of re-arrest most likely under- and over-represents certain subpopulations of defendants.

In our COMPAS case study, for the separation scans, we search for subgroups of the protected class with the most significant \emph{increase}, either in the probabilistic predictions or in the probability that the binarized recommendation equals 1, conditional on the defendant's covariates. Moreover, we perform value-conditional scans, focusing specifically on the subset of defendants who did not reoffend ($Y_i = 0$). 
For the separation scan for recommendations, this results in CBS detecting subgroups of the protected class for whom the \emph{false positive rate} is most significantly increased. For the sufficiency scans, we search for subgroups of the protected class with the most significant \emph{decrease} in the observed rate of reoffending, conditional on the defendant's covariates and their COMPAS prediction or recommendation. For the sufficiency scan for recommendations, we also perform a value-conditional scan. We focus specifically on the subset of defendants who were predicted to be ``high risk'' by COMPAS ($P_{i,bin}=1$) because this labeling could negatively impact the defendant, e.g., by decreasing their likelihood of pre-trial release. This results in CBS detecting subgroups of the protected class for whom the \emph{false discovery rate} is most significantly increased.  These fairness definitions neglect bias detection for defendants who reoffend (for separation scans) and defendants who are not flagged as high-risk (for sufficiency scan for recommendations).  These choices were made to ensure our ability to verify our findings based on previous research on COMPAS which commonly focus on similar fairness violations to those used in our case study.  With that said, we strongly encourage auditing for predictive biases that affect reoffending defendants and low-risk defendants as well, if using CBS to audit an algorithmic risk assessment tool in practice. For example, auditing for the increased probability of being flagged as high-risk for reoffending defendants could help to uncover subpopulations that are over-prosecuted in comparison to other populations of reoffending defendants. Therefore, expanding the fairness definitions used to audit pre-trial risk assessment tools for biases could have beneficial findings\nocite{ramchand2006racial}.

\end{document}